\tikzset{
    -Latex,auto,node distance =1 cm and 1 cm,semithick,
    state/.style ={circle, draw=white, inner sep=0.01cm, minimum width = 0.35 cm},
    snode/.style = {rectangle, draw, inner sep=0.1cm,node contents={}},
    unmeasured/.style = {circle, draw, inner sep=0.05cm,node contents={}},
    point/.style = {circle, draw, inner sep=0.04cm,fill,node contents={}},
    bidirected/.style={Latex-Latex,dashed},
    el/.style = {inner sep=2pt, align=left, sloped}
}
\tikzstyle{mybox} = [draw=gray, fill=gray!20, very thick,
\newcommand{\cmark}{\ding{51}}  
\newcommand{\xmark}{\ding{55}}  
\definecolor{betterred}{RGB}{228,26,28}
\definecolor{betterblue}{RGB}{55,126,184}
\newcommand\labelAndRemember[2]
\gdef\csname labeled:#1\endcsname{#2}%
\newcommand\recallLabel[1]
\endcsname\tag{\ref{#1}}}
\newcommand{\Ind}{\boldsymbol{1}}
\let\oldnl\nl
\newcommand{\nonl}{\renewcommand{\nl}{\let\nl\oldnl}}
\definecolor{opaquegray}{RGB}{192, 192, 192}
\definecolor{afblue}{RGB}{0,0,102}
\definecolor{mypink}{RGB}{255,51,153}
\newtheorem{definition}{Definition}
\newtheorem{theorem}{Theorem}
\newtheorem{assumption}{Assumption}
\newtheorem{lemma}{Lemma}
\newtheorem{proposition}{Proposition}
\newtheorem{corollary}{Corollary}
\title{\Copy{title}{Information-Theoretic Causal Bounds under Unmeasured Confounding}}
\author{
Yonghan Jung$^{1}$ and Bogyeong Kang$^{2}$ \\
{\small $^{1}$ University of Illinois Urbana-Champaign,
\texttt{yonghan@illinois.edu} \\
$^{2}$ Independent Researcher,
\texttt{bgyeong.kang@gmail.com}}
}
\date{\today}
\begin{document}
\setbox0\hbox{\Copy{shorttitle}{Information-Theoretic Causal Bounds}}
\maketitle
\allowdisplaybreaks


\begin{abstract}
     We develop a data-driven information-theoretic framework for the sharp partial identification of causal effects under unmeasured confounding. Existing approaches often rely on restrictive assumptions, such as bounded or discrete outcomes, require external inputs (e.g., instrumental variables, proxies, or user-specified sensitivity parameters), necessitate full structural causal model specifications, or focus solely on population-level averages while neglecting covariate-conditional treatment effects. We overcome all four limitations simultaneously by establishing novel information-theoretic, data-driven divergence bounds. Our key theoretical contribution establishes that the $f$-divergence between the observational distribution $P(Y \mid A=a, X=x)$ and the interventional distribution $P(Y \mid \mathrm{do}(A=a), X=x)$ is upper bounded by a function of the propensity score alone. This result enables sharp partial identification of conditional causal effects directly from observational data, without requiring external sensitivity parameters, auxiliary variables, full structural specifications, or outcome boundedness assumptions. For practical implementation, we develop a semiparametric estimator satisfying Neyman-orthogonality \citep{chernozhukov2018double}, which ensures $\sqrt{n}$-consistent inference even when nuisance functions are estimated via flexible machine learning methods. Simulation studies and real-world data applications, implemented in \href{https://github.com/yonghanjung/Information-Theretic-Bounds}{GitHub repository}, demonstrate that our framework provides tight and valid causal bounds across a wide range of data-generating processes.

\end{abstract}

\tableofcontents
\newpage

\begingroup
\allowdisplaybreaks
\sloppy
\section{Introduction}
\label{sec:intro}

Causal effect identification aims to characterize interventional quantities, such as $\Pr(Y=y \mid \mathrm{do}(A=a),X=x)$, as functionals of the observational distribution $P(X,A,Y)$. In the presence of unmeasured confounders $U$, as depicted in Fig.~\ref{fig:dgp}, point identification is generally impossible without auxiliary variables or structural restrictions. In such settings, \textit{partial identification} seeks to recover bounds that provably contain the true causal quantity. However, as described in literature review in Sec.~\ref{subsec:related}, most existing methods suffer from one or more of the following fundamental limitations: 
\begin{enumerate}[label=\textbf{(Lim-\arabic*)}, leftmargin=*, topsep=0pt]
    \item \textbf{Bounded outcomes:} Restricting outcomes to bounded or discrete supports (e.g., $Y \in [0,1]$).
    \item \textbf{Externality of parameters:} Requiring auxiliary inputs—such as instrumental variables, proxies, or sensitivity parameters—to quantify confounding strength.
    \item \textbf{Full SCM specification:} Necessitating the specification of the entire structural causal model (SCM) \citep{pearl:2k}, which is computationally intensive and prone to error propagation.
    \item \textbf{Neglect of heterogeneity:} Focusing on population-level averages while neglecting covariate-conditional treatment effects.
\end{enumerate}

To universally address these limitations, we develop an information-theoretic framework that provides (i) data-driven upper bounds on statistical divergences between observational and interventional distributions, and (ii) sharp partial identification of conditional causal effects $\mathbb{E}[Y \mid \mathrm{do}(A=a),X=x]$. Our framework accommodates \textit{unbounded} continuous outcomes without requiring full structural modeling or external inputs. The core mechanism involves deriving \textit{data-driven} upper bounds on statistical divergences (e.g., f-divergence \citep{csiszar1967fdivergenceDPI}) between the interventional law $Q_{a,x}$ and the observational law $P_{a,x}$, and then translating these into sharp causal intervals. Specifically, we make three main contributions:
\begin{enumerate}[leftmargin=*, itemsep=0pt, topsep=0pt, label=(\roman*)]
    \item We show that $f$-divergences \citep{csiszar1967fdivergenceDPI} between $P_{a,x}$ and $Q_{a,x}$ are upper bounded by a function of the propensity score $e_a(x)$.
    \item We leverage these bounds to obtain sharp intervals for arbitrary expectations of the form $\theta(a,x)\triangleq \mathbb{E}_{Q_{a,x}}[\varphi(Y)]$ for user-specified functions $\varphi$ without imposing outcome boundedness or support restrictions.
    \item We develop a semiparametric estimator that satisfies Neyman-orthogonality \citep{chernozhukov2018double} ensuring robust inference even when nuisance components are estimated via high-dimensional machine learning models.
\end{enumerate}
Together, these results provide a principled path to \textit{data-driven} partial identification of conditional causal effects under unmeasured confounding.

\subsection{Related Work} \label{subsec:related}
We organize existing work on partial identification based on which of the limitations (Lim-1--4) they retain or address. 

\paragraph{Bounded/discrete outcomes (Lim-1).} Early work imposed restrictions requiring outcomes to be bounded or discrete. For example, \citet{manski1990nonparametric} derived nonparametric bounds using the extreme values outcomes can attain. Linear-programming (LP)–based approaches (e.g., \citet{balke1994counterfactual}, \citet{tian2000probabilities}) yield sharp bounds with discrete variables. \citet{sachs2023general} and \citet{shridharan2023scalable} have extended these LP-based bounds to general graphical settings but remain restricted to discrete outcomes. \citet{zhang2021bounding} extended these LP ideas to continuous outcomes, but still rely on bounded-support assumptions (e.g., $Y \in [0,1]$). These methods avoid auxiliary inputs (addressing Lim-2) but fail to accommodate unbounded outcomes (Lim-1) or provide conditional effect bounds (Lim-4).

\paragraph{Auxiliary inputs (Lim-2).} Another line of work leverages auxiliary inputs. While auxiliary-variable methods can yield sharp bounds, most methods still assume bounded outcomes (Lim-1); and valid auxiliary inputs are often not available in practice or not identifiable from data. 
\begin{itemize}[leftmargin=*, topsep=0pt, itemsep=0pt]
    \item \textit{Instrumental variables.} \citet{balke1997bounds} provide tight nonparametric bounds on average treatment effects by leveraging instrumental variables, assuming bounded binary outcomes. \citet{kitagawa2021IV} extends this framework to continuous outcomes while maintaining bounded support assumptions (see \citet{swanson2018partial} for a comprehensive survey). Recently, \citet{levis2025covariate} develop covariate-assisted IV bounds to target conditional treatment effects (addressing Lim-4), but also under bounded outcome assumptions (Lim-1).
    \item \textit{Additional assumptions or variables.} \citet{ghassami2023partial} leverage proxy variables of hidden confounders to provide bounds on average effects, again requiring bounded outcomes (Lim-1). \citet{lee2009training} and \citet{semenova2025generalized} avoid bounded outcomes for sharp bounds on the average effect, but rely on structural assumptions about the selection mechanism. Recent work on weak-confounding-aware partial identification also relies on auxiliary side information in the form of bounded latent-confounder entropy. Jiang et al.~\citep{pmlr-v202-jiang23h} study entropy/mutual-information--constrained bounds for non-identifiable causal effects in a general latent-confounding setting, and Jiang and Kocaoglu~\citep{pmlr-v235-jiang24b} extend this entropy-based perspective to IV settings via conditional common entropy, including a necessary falsification condition when an entropy bound is available.
    \item \textit{Sensitivity analysis.} Sensitivity analysis introduces user-specified parameters to quantify confounding strength (e.g., \citet{rosenbaum1987sensitivity}, \citet{tan2006distributional}, \citet{yadlowsky2022bounds}, \citet{jin2022sensitivity}, \citet{dorn2023sharp}, \citet{oprescu2023b}). Unlike IV and proxy methods, modern sensitivity approaches can accommodate unbounded outcomes (addressing Lim-1). Among these, \citet{jin2022sensitivity} are most closely related to our approach, as they use an $f$-divergence-based sensitivity model to constrain divergences between observational and interventional distributions. \citet{oprescu2023b} extend sensitivity analysis to bound conditional effects (addressing Lim-4). However, all sensitivity methods require external sensitivity parameters (Lim-2) that are not identifiable from observational data alone.
\end{itemize}

\begin{figure}[t]
  \centering
  \begin{minipage}[c]{0.25\textwidth}
    \centering
    \subfloat[]{%
      \begin{tikzpicture}[x=1.0cm,y=1.4cm,>={Latex[width=1.4mm,length=1.7mm]},
        font=\sffamily\sansmath\scriptsize,
        RR/.style={draw,circle,inner sep=0mm, minimum size=5.8mm,font=\sffamily\sansmath\footnotesize}]%
        \pgfmathsetmacro{\u}{0.95}%
        \node[RR, betterblue] (a) at (0*\u, 0*\u) {$A$};%
        \node[RR, lightgray] (u) at (1.5*\u, 2*\u) {$U$};%
        \node[RR, betterred] (y) at (3*\u, 0*\u) {$Y$};%
        \node[RR] (x) at (1.5*\u, 1*\u) {$X$};%
        \draw[->] (x) -- (a);%
        \draw[->] (x) -- (y);%
        \draw[->] (a) -- (y);%
        \draw[->, lightgray] (u) -- (x);%
        \draw[->, lightgray] (u) -- (a);%
        \draw[->, lightgray] (u) -- (y);%
      \end{tikzpicture}%
      \label{fig:dgp}%
    }%
  \end{minipage}
  \hfill 
  \begin{minipage}[c]{0.70\textwidth}
    \centering
    \subfloat[]{%
      \begin{tikzpicture}[
    node distance=0mm,
    every node/.style={font=\sffamily\scriptsize, align=center},
    header/.style={font=\sffamily\bfseries\scriptsize, text=betterblue},
    cell/.style={inner sep=2pt, minimum height=0.55cm},
    check/.style={text=betterblue!80!black},
    cross/.style={text=betterred!90!black}
]

\def\xC{0}       
\def\xLine{3.2}  
\def\xU{4.2}     
\def\xA{6.4}     
\def\xS{8.6}     
\def\xE{10.8}    
\def\xEnd{11.9}  

\def\yH{0}
\def\yI{-0.75}
\def\yII{-1.5}
\def\yIII{-2.25}
\def\yIV{-3.0}
\def\yV{-3.75}

\node[cell, anchor=west, font=\sffamily\bfseries\scriptsize, text=black] at (\xC, \yH) {Method};
\node[cell, header] at (\xU, \yH) {Unbounded\\Outcome};
\node[cell, header] at (\xA, \yH) {No Aux\\Input};
\node[cell, header] at (\xS, \yH) {No Full\\SCM};
\node[cell, header] at (\xE, \yH) {Cond.\\Effect};

\draw[betterblue, thick, -] (\xC, \yH-0.35) -- (\xEnd, \yH-0.35);

\draw[gray!50, -] (\xLine, 0.2) -- (\xLine, -4.1);

\node[cell, anchor=west, text width=3.0cm, align=left] at (\xC, \yI) {LP / Discrete};
\node[cell, cross] at (\xU, \yI) {\xmark};
\node[cell, check] at (\xA, \yI) {\cmark};
\node[cell, check] at (\xS, \yI) {\cmark};
\node[cell, cross] at (\xE, \yI) {\xmark};
\draw[gray!20, -] (\xC, \yI-0.375) -- (\xEnd, \yI-0.375);

\node[cell, anchor=west, text width=3.0cm, align=left] at (\xC, \yII) {Additional Vars. \\ \tiny (IV, Proxy)};
\node[cell, cross] at (\xU, \yII) {\xmark};
\node[cell, cross] at (\xA, \yII) {\xmark};
\node[cell, check] at (\xS, \yII) {\cmark};
\node[cell, check] at (\xE, \yII) {\cmark};
\draw[gray!20, -] (\xC, \yII-0.375) -- (\xEnd, \yII-0.375);

\node[cell, anchor=west, text width=3.0cm, align=left] at (\xC, \yIII) {Sensitivity};
\node[cell, check] at (\xU, \yIII) {\cmark};
\node[cell, cross] at (\xA, \yIII) {\xmark};
\node[cell, check] at (\xS, \yIII) {\cmark};
\node[cell, check] at (\xE, \yIII) {\cmark};
\draw[gray!20, -] (\xC, \yIII-0.375) -- (\xEnd, \yIII-0.375);

\node[cell, anchor=west, text width=3.0cm, align=left] at (\xC, \yIV) {Full SCM};
\node[cell, check] at (\xU, \yIV) {\cmark};
\node[cell, check] at (\xA, \yIV) {\cmark};
\node[cell, cross] at (\xS, \yIV) {\xmark};
\node[cell, check] at (\xE, \yIV) {\cmark};

\fill[betterblue!8, rounded corners=4pt] (\xC, \yV+0.375) rectangle (\xEnd, \yV-0.375);
\node[cell, anchor=west, text width=3.0cm, align=left, font=\sffamily\bfseries\small, text=black] at (\xC, \yV) {Ours};
\node[cell, check] at (\xU, \yV) {\cmark};
\node[cell, check] at (\xA, \yV) {\cmark};
\node[cell, check] at (\xS, \yV) {\cmark};
\node[cell, check] at (\xE, \yV) {\cmark};

\end{tikzpicture}%
      \label{fig:positioning}%
    }%
  \end{minipage}

  \caption{\textbf{(a)} Causal diagram with unmeasured confounding.
  \textbf{(b)} Systematic comparison of our method against existing literature (detailed in Sec.~\ref{subsec:related}).}
  \label{fig:positioning_and_dgp}
\end{figure}

\paragraph{Full SCM-modeling approaches (Lim-3).} 
Another approach leverages machine-learning methods to learn entire SCMs consistent with observational data (e.g., \citet{hu2021generative}, \citet{balazadeh2022partial}, \citet{padh2023stochastic}, \citet{xia2022neural}, \citet{tan2024consistency}). These approaches find the SCMs that maximize/minimize the target causal effect subject to observations, using flexible neural architectures to model structural functions. In principle, such methods can accommodate unbounded outcomes and target conditional effects (addressing Lim-(1,4)). However, they require estimating the entire SCM (Lim-3), which is computationally intensive and sensitive to misspecification in high-dimensional structural components.

\paragraph{Our novelty.} Existing methods each resolve some limitations; however, no existing approach overcomes all four limitations (Lim-1--4) universally. In contrast, our work simultaneously addresses all four limitations by developing bounds that (Lim-1) accommodate unbounded continuous outcomes without support restrictions; (Lim-2) require no auxiliary variables or sensitivity parameters; (Lim-3) avoid full SCM modeling; and (Lim-4) provide bounds for conditional effects $\mathbb{E}[Y \mid \mathrm{do}(A=a),X=x]$ beyond the population-level average. We compare our work with representative existing methods in Fig.~\ref{fig:positioning}.

\section{Problem Setup \& Preliminaries}
Consider a treatment $A \in \{0, 1\}$, a covariate vector $X \in \mathcal{X} \subseteq \mathbb{R}^{d_x}$, and an outcome $Y \in \mathcal{Y} \subseteq \mathbb{R}^{d_y}$. We consider the structural causal model (SCM) framework \citep{pearl:2k} as the data-generating process (DGP) for $(X, A, Y)$:
\begin{align}\label{def:scm}
    U \leftarrow f_U(\epsilon_U), \;\; X \leftarrow f_X(U, \epsilon_X), \;\;A \leftarrow f_A(X, U, \epsilon_A), \;\;Y \leftarrow f_Y(X, A, U, \epsilon_Y),
\end{align}
where $U$ represents unmeasured confounding, $f_{(\cdot)}$ are unknown structural functions, and $(\epsilon_U, \epsilon_X, \epsilon_A, \epsilon_Y)$ are mutually independent exogenous noise variables. The causal diagram induced by this SCM is depicted in Fig.~\ref{fig:dgp}. 

The operation $\mathrm{do}(A=a)$ denotes an intervention that replaces $f_A$ with a constant $a \in \{0,1\}$, while keeping the other structural equations invariant. For each $(a, x) \in \{0, 1\} \times \mathcal{X}$, we define the following conditional probability laws on $\mathcal{Y}$:
\begin{itemize}[leftmargin=*,labelsep=0.5em]
    \item \textbf{Observational Law}: $P_{a, x} \equiv P(Y \mid A=a, X=x)$, which is identifiable from data.
    \item \textbf{Interventional Law}: $Q_{a, x} \equiv P(Y \mid \mathrm{do}(A=a), X=x)$, our target of interest.
\end{itemize}
Under unmeasured confounding (i.e., when $f_A$ and $f_Y$ share $U$ as a common hidden parent), the interventional law $Q_{a, x}$ is unidentifiable from the observational law $P_{a, x}$. Consequently, any causal functional $\theta = \mathbb{E}_{Q_{a,x}}[\varphi(Y)]$ for some user-specified $\varphi$ (e.g., the identity for ATE/CATE) is also unidentifiable.

\paragraph{f-Divergence.}
To characterize the ``distance'' between the identifiable $P_{a,x}$ and the unidentifiable $Q_{a,x}$, we use $f$-divergences \citep{ali1966general,csiszar1967fdivergenceDPI}.
\begin{definition}[\textbf{f-Divergence}]\label{def:f-div}
    \normalfont
    Let $P$ and $Q$ be probability measures on $(\mathcal{Y}, \mathcal{F})$ such that $P \ll Q$. For a convex function $f:[0, \infty) \rightarrow \mathbb{R}$ with $f(1)=0$, the $f$-divergence of $P$ from $Q$ is 
    \begin{align}
        D_f(P \| Q) \triangleq \int_{\mathcal{Y}} f\left(\frac{dP}{dQ}\right) dQ.
    \end{align}
\end{definition}
Common specializations of $f$-divergence are as follows. Let $p$ and $q$ be the Radon–Nikodym derivatives of $P$ and $Q$ with respect to a common dominating measure $\mu$ (e.g., Lebesgue or counting measure).
\begin{itemize}[leftmargin=*,itemsep=0pt]
    \item \textbf{Kullback-Leibler (KL).} $f(t) \triangleq t \log t$ with $f(0) = 0$. Then, 
    \begin{align}
        D_{\mathrm{KL}}(P \| Q) = \int_{\mathcal{Y}} \log \left(\frac{dP}{dQ}\right) dP = \int_{\mathcal{Y}} p(y) \log \left( \frac{p(y)}{q(y)} \right) d\mu(y).
    \end{align}
    \item \textbf{Hellinger distance.} $f(t) \triangleq \frac{1}{2}(\sqrt{t}-1)^2$ with $f(0) = 1/2$. 
    \begin{align}
        D_{\mathrm{H}}(P \| Q) = \frac{1}{2}\int_{\mathcal{Y}} \left(\sqrt{\frac{dP}{dQ}} - 1 \right)^2 dQ = 1-\int_{\mathcal{Y}}\sqrt{p(y)q(y)} d\mu(y).
    \end{align}
    \item \textbf{$\chi^2$-divergence.} $f(t) \triangleq \frac{1}{2}(t-1)^2$ with $f(0) = 1/2$. 
    \begin{align}
        D_{\chi^2}(P \| Q) = \frac{1}{2}\int_{\mathcal{Y}} \left(\frac{dP}{dQ} - 1 \right)^2 dQ = \frac{1}{2}\int_{\mathcal{Y}} \frac{(p(y) - q(y))^2}{q(y)} d\mu(y).
    \end{align}
    \item \textbf{Total variation (TV).} $f(t) \triangleq \frac{1}{2}|t-1|$ with $f(0) = 1/2$. 
    \begin{align}
        D_{\mathrm{TV}}(P \| Q) = \frac{1}{2}\int_{\mathcal{Y}} |p(y) - q(y)| d\mu(y) = \sup_{B \in \mathcal{F}} |P(B) - Q(B)|.
    \end{align}
    \item \textbf{Jensen-Shannon.} $f(t) \triangleq \frac12 (t\log t - (t+1)\log(\frac{t+1}{2}))$ with $f(0) = \frac{1}{2}\log 2$. Let $M \triangleq \frac{P+Q}{2}$. 
    \begin{align}
        D_{\mathrm{JS}}(P \| Q) = \frac{1}{2}D_{\mathrm{KL}}(P \| M) + \frac{1}{2}D_{\mathrm{KL}}(Q \| M).
    \end{align}
\end{itemize}

\paragraph{Integral Probability Metrics (IPMs) \& Maximum Mean Discrepancy (MMD).}
Beyond the $f$-divergence, the integral probability metric (IPM; \citealt{muller1997integral}) and maximum mean discrepancy (MMD; \citealt{gretton2012kernel}) are commonly used. Let $\Phi \triangleq \{\varphi: \mathcal{Y} \mapsto [0,1]\}$  be a class of measurable functions. Then, each measure is defined as follows. 
\begin{definition}[\textbf{Integral Probability Metric (IPM)} \citep{muller1997integral}]
\label{def:ipm}
\normalfont
Let $P$ and $Q$ be probability measures on a measurable space $(\mathcal{Y}, \mathcal{F})$. Let $\Phi$ be a class of measurable real-valued functions on $\mathcal{Y}$. The integral probability metric (IPM) is 
\begin{align}
    D_{\mathrm{IPM},\Phi}(P \| Q) \triangleq  \sup_{\varphi \in \Phi} \left| \mathbb{E}_{P}[\varphi(Y)] - \mathbb{E}_{Q}[\varphi(Y)] \right|.
\end{align}
\end{definition}
\begin{definition}[\textbf{Maximum Mean Discrepancy (MMD)} \citep{gretton2012kernel}]
\label{def:mmd}
\normalfont
Let $\mathcal{H}_k$ be a reproducing kernel Hilbert space (RKHS) associated with a positive-definite kernel $k:\mathcal{Y}\times\mathcal{Y}\to\mathbb{R}$. The maximum mean discrepancy (MMD) is 
\begin{align}
    D_{\mathrm{MMD},k}(P \| Q) \triangleq  \sup_{\|h\|_{\mathcal{H}_k} \le 1} \left| \mathbb{E}_{P}[h(Y)] - \mathbb{E}_{Q}[h(Y)] \right|.
\end{align}
\end{definition}

When the function class $\Phi$ is sufficiently rich (e.g., all bounded continuous functions), the IPM fully characterizes distributional differences. Similarly, when the kernel $k$ is characteristic, the MMD fully characterizes distributional differences. In both cases, $D_{\mathrm{IPM},\Phi}(P \| Q)=0$ (or $D_{\mathrm{MMD},k}(P \| Q)=0$) if and only if $P=Q$.
These quantities can be viewed as special cases of distributional discrepancies defined through restricted function classes, and serve as limiting or simplified alternatives to the $f$-divergence–based bounds considered in the main text.

\paragraph{Problem Statement.} 
Under the assumed DGP in Fig.~\ref{fig:dgp} and Eq.~\eqref{def:scm}, the interventional law $Q_{a,x}$ and a target causal effect in the form of $\mathbb{E}_{Q_{a,x}}[\varphi(Y)]$ (where $\varphi$ is a user-specified and potentially continuous and unbounded function) are generally not identifiable from the observational law $P_{a,x}$ due to unmeasured confounding. To address this, (1) we derive the upper limit of the $f$-divergence of the observational law $P_{a,x}$ from the interventional law $Q_{a,x}$; i.e., $D_{f}(P_{a,x} \| Q_{a,x})$; (2) we translate the upper limit of the $f$-divergence into the sharp interval for causal effects. Throughout the paper, we assume the following:
\begin{assumption}\label{assumption}
    \normalfont
    For all $a,x \in \mathcal{A} \times \mathcal{X}$, 
    \begin{enumerate}[leftmargin=*, itemsep=0pt]
        \item \textbf{Positivity}: $e_a(x) \triangleq \Pr(A=a \mid X=x) \in [c,1-c]$ for some constant $ 0< c  < 1/2$.
        \item \textbf{Mutual absolute continuity}: $P_{a,x} \ll Q_{a, x}$ and $Q_{a, x} \ll P_{a, x}$.
        \item \textbf{Regularity of f}: For the generator function $f$ in the f-divergence, $f(0) < \infty$.   
    \end{enumerate}
\end{assumption}
\section{Divergence Bounds between Observational and Interventional Distributions}\label{sec:3}

We now derive a data-driven upper bound on the $f$-divergence between observational and interventional distributions. Our main result is the following: 
\begin{theorem}[\textbf{f-Divergence Bound}]\label{thm:f-divergence}
    \normalfont
    For any $a \in \mathcal{A}$ and $x \in \mathcal{X}$ such that $P(a \mid x) > 0$, 
    \begin{align}
        D_{f}( P_{a,x} \| Q_{a,x}) \leq B_f(e_a(x)),
    \end{align}
    where 
    \begin{align}
        B_f(e_a(x)) \triangleq e_a(x)f\left(\frac{1}{e_a(x)} \right) + (1-e_a(x))f(0)
    \end{align}
\end{theorem}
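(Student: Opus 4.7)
The plan is to reduce the bound on $D_f(P_{a,x} \| Q_{a,x})$, a divergence between two laws on $Y$, to a divergence between two laws on the latent confounder $U$, and then exploit the convexity of $f$ together with a mixture representation of the $U$-marginal. The key observation from the SCM in Eq.~\eqref{def:scm} is that $Y \leftarrow f_Y(X,A,U,\epsilon_Y)$ is invariant under $\mathrm{do}(A=a)$, so $P(Y\mid A=a, X=x, U=u) = P(Y\mid \mathrm{do}(a), X=x, U=u)$; only the conditional law of $U$ differs between the observational and interventional regimes, namely $P(U\mid A=a, X=x)$ versus $P(U\mid X=x)$.

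First, I would apply the data-processing inequality for $f$-divergences to the marginalization map $(Y,U)\mapsto Y$, yielding
\begin{align*}
  D_f(P_{a,x} \| Q_{a,x}) \;\leq\; D_f\bigl(P(Y,U\mid A=a, X=x) \,\|\, P(Y,U\mid \mathrm{do}(a), X=x)\bigr).
\end{align*}
Because the $Y\mid a,x,u$ factor is shared, the Radon--Nikodym derivative of the two joint densities equals $p(u\mid a,x)/p(u\mid x)$, and integrating over $Y$ using $\int p(y\mid a,x,u)\,dy = 1$ collapses the joint divergence to
\begin{align*}
  D_f\bigl(P(U\mid A=a,X=x)\,\|\,P(U\mid X=x)\bigr) \;=\; \int f\!\left(\frac{p(u\mid a,x)}{p(u\mid x)}\right) p(u\mid x)\,du.
\end{align*}

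Next, I would use the total-probability decomposition
\begin{align*}
  p(u\mid x) \;=\; e_a(x)\,p(u\mid a,x) + (1-e_a(x))\,p(u\mid \bar a,x),
\end{align*}
which immediately gives the uniform bound $r(u) \triangleq p(u\mid a,x)/p(u\mid x) \in [0,\,1/e_a(x)]$ wherever $p(u\mid x) > 0$ (well-defined by the positivity and absolute-continuity clauses of Assumption~\ref{assumption}). Since $f$ is convex, the chord inequality on the interval $[0,1/e_a(x)]$ yields, pointwise in $u$,
\begin{align*}
  f(r(u)) \;\leq\; e_a(x)\,r(u)\,f\!\bigl(1/e_a(x)\bigr) \;+\; \bigl(1 - e_a(x)\,r(u)\bigr)\,f(0),
\end{align*}
and the assumption $f(0)<\infty$ ensures the right-hand side is finite. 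Integrating against $p(u\mid x)\,du$ and using $\int r(u)\,p(u\mid x)\,du = \int p(u\mid a,x)\,du = 1$ gives exactly $e_a(x) f(1/e_a(x)) + (1-e_a(x)) f(0) = B_f(e_a(x))$.

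The main subtlety is the very first step: justifying the joint-to-marginal reduction cleanly given that $U$ is unobserved. Once one commits to working with joint densities on $(Y,U)$ defined from the SCM and notes that the structural equation for $Y$ is invariant under $\mathrm{do}(A=a)$, the data-processing inequality delivers the reduction and the shared $Y\mid a,x,u$ factor cancels. From there, the propensity-based bound on the likelihood ratio and the convex chord inequality do all the remaining work, and tightness is plausibly achieved in the limit where $P(U\mid a,x)$ and $P(U\mid \bar a,x)$ have disjoint supports (i.e., worst-case unmeasured confounding).
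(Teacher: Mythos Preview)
Your proof is correct. Both your argument and the paper's hinge on the data-processing inequality to pass from the $Y$-level divergence to a divergence on the latent space, but you route through a slightly different intermediate object. The paper works on the joint $(U,A)$ space: it observes that $P_{U,A\mid X=x,A=a}$ is literally $P_{U,A\mid X=x}$ conditioned on the event $\{A=a\}$, invokes a short lemma showing $D_f(P_E\|P)=p\,f(1/p)+(1-p)f(0)$ for any event $E$ with $P(E)=p$ (an \emph{equality}), and then pushes both measures through the Markov kernel $K_{a,x}(\cdot\mid u,a')=P(Y\in\cdot\mid u,a,x)$ to land on $P_{a,x}$ and $Q_{a,x}$. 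You instead marginalize $(Y,U)\to Y$, collapse the shared $Y\mid a,x,u$ factor to obtain $D_f(P(U\mid a,x)\|P(U\mid x))$, and bound that via the mixture decomposition plus the chord inequality for convex $f$. Your intermediate quantity is in fact tighter---$D_f(P(U\mid a,x)\|P(U\mid x))$ can be strictly smaller than $B_f(e_a(x))$ (it vanishes when $U\perp A\mid X$), whereas the paper's $(U,A)$-level divergence always equals $B_f(e_a(x))$ exactly---but both collapse to the same final bound on $D_f(P_{a,x}\|Q_{a,x})$. The paper's route is slightly slicker (one lemma, one DPI application); yours is more elementary and makes the tightness condition (disjoint supports of $P(U\mid a,x)$ and $P(U\mid \bar a,x)$) transparent.
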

Theorem~\ref{thm:f-divergence} establishes that the $f$-divergence $D_f(P_{a,x} \| Q_{a,x})$ is upper bounded by $B_f(e_a(x))$, a function of the propensity score that is directly computable from observational data. Notably, $B_f(e_a(x)) \to 0$ as $e_a(x) \to 1$, since $f$ is continuous (by convexity) and satisfies $f(1) = 0$. Thus, higher propensity scores yield tighter divergence bounds. 

We specialize Thm.~\ref{thm:f-divergence} to standard divergences:
{
\renewcommand{\thecorollary}{T1.1}
\begin{corollary}\label{cor:f-divergence}
    \normalfont
    For any $a \in \mathcal{A}$ and $x \in \mathcal{X}$ such that $P(a \mid x) > 0$, 
    \begin{itemize}[leftmargin=*,itemsep=0pt]
        \item \textbf{KL}: $f(t) \triangleq t \log t$ (with $f(0) = 0$), 
        \begin{align}
            D_{\mathrm{KL}}(P_{a,x} \| Q_{a,x}) \leq -\log e_a(x).   
        \end{align}
        \item \textbf{Hellinger}: $f(t) \triangleq \tfrac{1}{2}(\sqrt{t}-1)^2 $ (with $f(0) = 1/2$), 
        \begin{align}
            D_{\mathrm{H}}(P_{a,x} \| Q_{a,x}) \leq 1- \sqrt{e_a(x)}.
        \end{align}
        \item \textbf{$\chi^2$-divergence}: $f(t) \triangleq \tfrac{1}{2}(t-1)^2 $ (with $f(0) = 1/2$), 
        \begin{align}
            D_{\chi^2}(P_{a,x} \| Q_{a,x}) \leq \frac{1-e_a(x)}{2e_a(x)}.
        \end{align}
        \item \textbf{Total variation}: $f(t) \triangleq \frac{1}{2}|t-1| $ (with $f(0) = \frac{1}{2}$), 
        \begin{align}
            D_{\mathrm{TV}}(P_{a,x} \| Q_{a,x}) \leq 1-e_a(x).
        \end{align}
        \item \textbf{Jensen-Shannon}: $f(t) \triangleq f_{\mathrm{JS}}(t) \triangleq  \frac{1}{2}\left(t\log t - (t+1)\log\!\Big(\tfrac{t+1}{2}\Big)\right)$ (with $f_{\mathrm{JS}}(0)=\frac{1}{2}\log 2$)  
        \begin{align}
            D_{\mathrm{JS}}(P_{a,x} \| Q_{a,x}) \leq B_{f_{\mathrm{JS}}}(e_a(x)) = \frac{1}{2}\log \left( \frac{4 e_a(x)^{e_a(x)}}{(1+e_a(x))^{1+e_a(x)}} \right).
        \end{align}
    \end{itemize}
\end{corollary}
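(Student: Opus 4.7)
The plan is to derive each bound by directly substituting the corresponding generator $f$ into the master formula $B_f(e_a(x)) = e_a(x)\, f\!\left(\tfrac{1}{e_a(x)}\right) + (1-e_a(x))\, f(0)$ supplied by Theorem~\ref{thm:f-divergence}. Writing $e \triangleq e_a(x)$ throughout for brevity, each case is a short algebraic simplification once the stated values of $f(1/e)$ and $f(0)$ are plugged in. Since Theorem~\ref{thm:f-divergence} has already done all of the heavy lifting, no further probabilistic or variational argument is required; the corollary is purely a collection of computational specializations.

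First, for the KL generator $f(t)=t\log t$ with $f(0)=0$, the second term vanishes and the first term reduces to $e\cdot(1/e)\log(1/e)=-\log e$. For the Hellinger generator, I would expand $\tfrac{1}{2}(\sqrt{1/e}-1)^2 = \tfrac{1}{2}(1/e - 2/\sqrt{e} + 1)$, multiply by $e$, and add $(1-e)/2$; the $\tfrac{1}{2}+e/2+(1-e)/2$ terms collapse to $1$ and one is left with $1-\sqrt{e}$. For the $\chi^2$ generator, expanding $\tfrac{e}{2}(1/e-1)^2 = \tfrac{(1-e)^2}{2e}$ and adding $(1-e)/2$ yields $\tfrac{(1-e)^2 + e(1-e)}{2e} = \tfrac{1-e}{2e}$ after factoring out $1-e$ from the numerator. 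For total variation, using $|1/e - 1|=(1-e)/e$ gives $\tfrac{e}{2}\cdot\tfrac{1-e}{e} + \tfrac{1-e}{2} = 1-e$ immediately.

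The main obstacle is the Jensen-Shannon case, which involves nested logarithms of $(1+t)/2$ and therefore requires careful regrouping. My plan is to first evaluate
\[
e\cdot f_{\mathrm{JS}}(1/e) \;=\; \tfrac{1}{2}\Bigl(-\log e \;-\; (1+e)\log\!\tfrac{1+e}{2e}\Bigr),
\]
then add $(1-e)\cdot\tfrac{1}{2}\log 2$. I would then split $\log\tfrac{1+e}{2e} = \log(1+e) - \log 2 - \log e$ so that the combined coefficient of $\log 2$ becomes $(1+e)+(1-e)=2$, and the coefficient of $\log e$ becomes $-1+(1+e) = e$. After collecting terms I obtain $\tfrac{1}{2}\bigl(e\log e - (1+e)\log(1+e) + 2\log 2\bigr)$, which is exactly $\tfrac{1}{2}\log\!\Bigl(\tfrac{4\, e^{e}}{(1+e)^{1+e}}\Bigr)$ as stated. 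Each bound then follows from Theorem~\ref{thm:f-divergence} by specialization, completing the proof.
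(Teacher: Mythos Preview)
Your proposal is correct and follows the same direct-substitution approach as the paper's proof: plug each generator into $B_f(e_a(x))=e_a(x)f(1/e_a(x))+(1-e_a(x))f(0)$ and simplify. The paper's version additionally records, for the Hellinger and TV cases, alternative bounds obtained via the inequality $D_{\mathrm{H}}\le\tfrac12 D_{\mathrm{KL}}$ and via Pinsker/Bretagnolle--Huber, and then observes that the direct $B_f$ bound dominates; but none of that extra material is needed for the corollary as stated, so your streamlined argument fully suffices.
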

}
Bounds extend to stochastic policies as follows:
{
\renewcommand{\thecorollary}{T1.2}
\begin{corollary}\label{coro:soft}
    \normalfont
    For any stochastic policy $\pi(a \mid x)$, 
    \begin{align}
        D_{f}( P_{\pi} \|  Q_{\pi} ) \triangleq \mathbb{E}_{X}\left[\sum_{a \in \mathcal{A} }\pi(a \mid X) D_{f}( P_{a,X} \|  Q_{a,X} )\right]   \leq \mathbb{E}_{X}\left[ \sum_{a \in \mathcal{A}} \pi(a \mid X)B_f(e_a(X)) \right]. \nonumber
    \end{align}
    Choosing $\pi(a \mid x) = e_a(x)$ yields the global divergence bound: 
    \begin{align}
        D_f(P_{A,X} \,\|\, Q_{A,X})= \mathbb{E}_{X}\!\left[\sum_{a\in\mathcal{A}} e_a(X)\, D_f\!\big(P_{a,X}\,\|\,Q_{a,X}\big)\right] \;\le\; \mathbb{E}_{X}\!\left[\sum_{a\in\mathcal{A}} e_a(X)\, B_f\!\big(e_a(X)\big)\right].\nonumber
    \end{align}
\end{corollary}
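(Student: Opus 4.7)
The plan is to obtain both displays in Corollary~\ref{coro:soft} as immediate consequences of Theorem~\ref{thm:f-divergence}, exploiting the fact that the quantities in the statement are \emph{already defined} as policy-weighted averages, over $(a,x)$, of the pointwise conditional divergences $D_f(P_{a,x}\|Q_{a,x})$ to which Theorem~\ref{thm:f-divergence} applies directly.

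First I would fix $x\in\mathcal{X}$ and $a\in\mathcal{A}$ and invoke Theorem~\ref{thm:f-divergence} to write $D_f(P_{a,x}\|Q_{a,x}) \le B_f(e_a(x))$; Assumption~\ref{assumption}(1) guarantees $e_a(x)\in[c,1-c]$ so that $P(a\mid x)>0$ everywhere and the bound is applicable at every $(a,x)$. Since $\pi(a\mid x)\ge 0$, I then multiply both sides by $\pi(a\mid x)$, sum over $a\in\mathcal{A}$, and take the expectation over $X\sim P(X)$; monotonicity of expectation and of finite summation with nonnegative weights preserves the inequality, delivering
\[
\mathbb{E}_{X}\!\left[\sum_{a\in\mathcal{A}}\pi(a\mid X)\,D_f(P_{a,X}\|Q_{a,X})\right]\;\le\;\mathbb{E}_{X}\!\left[\sum_{a\in\mathcal{A}}\pi(a\mid X)\,B_f(e_a(X))\right],
\]
which is exactly the first displayed inequality, given the definition of $D_f(P_\pi\|Q_\pi)$ provided in the statement.

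For the global bound, I specialize $\pi(a\mid x)=e_a(x)$; the pointwise argument above immediately yields the upper bound $\mathbb{E}_X\!\left[\sum_a e_a(X)\,B_f(e_a(X))\right]$. The only remaining task is to identify the weighted conditional average with a genuine joint $f$-divergence, i.e.\ to justify
\[
\mathbb{E}_{X}\!\left[\sum_{a\in\mathcal{A}} e_a(X)\, D_f\!\big(P_{a,X}\,\|\,Q_{a,X}\big)\right]\;=\;D_f\!\big(P_{A,X}\,\|\,Q_{A,X}\big).
\]
This is the standard chain-rule/tower identity for conditional $f$-divergence: when the two joint laws on $(X,A,Y)$ share a common $(X,A)$-marginal — which is precisely the case here, since under the natural policy $\pi=e$ both the observational joint and the interventionally re-weighted joint induce $(X,A)\sim P(X)\,e_a(X)$ — the joint $f$-divergence reduces to the $P(X,A)$-expectation of the conditional $f$-divergence between the two $Y\mid A,X$ laws.

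The main substantive step is therefore the chain-rule identification in the last display; the rest of the argument is bookkeeping via nonnegativity of $\pi$, linearity of expectation, and the pointwise bound of Theorem~\ref{thm:f-divergence}. Since $f$-divergence is defined from a convex generator and the common dominating-measure representation of $P_{a,x}$ and $Q_{a,x}$ exists under Assumption~\ref{assumption}(2), the chain rule holds without further work; no data-processing inequality or joint convexity argument is needed beyond what is already subsumed by the pointwise statement in Theorem~\ref{thm:f-divergence}.
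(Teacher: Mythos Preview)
Your proposal is correct and matches the paper's treatment: the paper does not give a standalone proof of Corollary~\ref{coro:soft}, treating it as an immediate consequence of Theorem~\ref{thm:f-divergence} by pointwise averaging, exactly as you do. Your additional chain-rule justification for the identity $D_f(P_{A,X}\|Q_{A,X})=\mathbb{E}_X\big[\sum_a e_a(X)\,D_f(P_{a,X}\|Q_{a,X})\big]$ is arguably more careful than the paper itself, since the second display uses ``$=$'' rather than ``$\triangleq$'' and the paper never separately justifies it; your observation that both joints share the common $(X,A)$-marginal $P(X)\,e_a(X)$ is the right way to make that identification precise.
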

}
We derive bounds on the maximum mean discrepancy (MMD; \citealt{gretton2012kernel}), and the integral probability metric (IPM; \citealt{muller1997integral}). 
{
\renewcommand{\thecorollary}{T1.3}
\begin{corollary}[\textbf{IPM and MMD Bounds}]\label{cor:mean-mmd-ipm}
    \normalfont
    Let $\Phi \triangleq \{\varphi: \mathcal{Y} \mapsto [0,1]\}$ be a class of measurable functions. Let $D_{\mathrm{IPM},\mathcal{F}}(P\|Q)$ be the IPM over a function class $\mathcal{F} \triangleq \big\{ f:\ \|f\|_{\infty} < C \big\}$. Let $D_{\mathrm{MMD},\mathtt{k}}(P\|Q)$ be the MMD associated with an RKHS with a kernel $\mathtt{k}$ such that $\mathtt{k}(\cdot,\cdot)<K$.  Then, 
    \begin{align*}
        \textbf{(IPM)} \;&\; D_{\mathrm{IPM},\mathcal{F}_C}\big(P_{a,x}\,\|\,Q_{a,x}\big) \leq 2C\,\min\big\{1-e_a(x),\, \sqrt{-\tfrac{1}{2}\log e_a(x)} \big\}, \\ 
        \textbf{(MMD)} \;&\;  D_{\mathrm{MMD},\mathtt{k}}\big(P_{a,x}\,\|\,Q_{a,x}\big) \leq 2\sqrt{K}\,\min\big\{1-e_a(x),\, \sqrt{-\tfrac{1}{2}\log e_a(x)} \big\}.
    \end{align*}
\end{corollary}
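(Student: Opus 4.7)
The plan is to reduce both the IPM and MMD bounds to bounds on simpler divergences already established via Theorem~\ref{thm:f-divergence} and Corollary~\ref{cor:f-divergence}. The key observation is that an IPM over a class of uniformly bounded functions is controlled by the total variation distance, while the MMD is itself an IPM whose test functions are uniformly bounded by $\sqrt{K}$ via the reproducing property.

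First I would handle the IPM bound. For any $\varphi$ with $\|\varphi\|_{\infty} < C$, write $\varphi = C\tilde{\varphi}$ with $\|\tilde{\varphi}\|_{\infty}<1$ and invoke the classical relation $|\mathbb{E}_P[\tilde{\varphi}] - \mathbb{E}_Q[\tilde{\varphi}]| \le 2 D_{\mathrm{TV}}(P\|Q)$, which follows from the variational characterization $D_{\mathrm{TV}}(P\|Q) = \sup_{B\in\mathcal{F}} |P(B)-Q(B)|$ and a layer-cake / Hahn decomposition argument. Taking the supremum over $\varphi \in \mathcal{F}_C$ gives $D_{\mathrm{IPM},\mathcal{F}_C}(P_{a,x}\|Q_{a,x}) \le 2C\, D_{\mathrm{TV}}(P_{a,x}\|Q_{a,x})$. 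I would then combine this with two separate bounds on $D_{\mathrm{TV}}(P_{a,x}\|Q_{a,x})$: (i) the direct TV bound from Corollary~\ref{cor:f-divergence}, namely $D_{\mathrm{TV}}(P_{a,x}\|Q_{a,x}) \le 1-e_a(x)$; and (ii) Pinsker's inequality $D_{\mathrm{TV}}(P\|Q) \le \sqrt{\tfrac{1}{2} D_{\mathrm{KL}}(P\|Q)}$ combined with the KL bound $D_{\mathrm{KL}}(P_{a,x}\|Q_{a,x}) \le -\log e_a(x)$ from Corollary~\ref{cor:f-divergence}, which yields $D_{\mathrm{TV}}(P_{a,x}\|Q_{a,x}) \le \sqrt{-\tfrac{1}{2}\log e_a(x)}$. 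Taking the minimum of (i) and (ii) and multiplying by $2C$ gives the stated IPM bound.

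Next I would derive the MMD bound by embedding the MMD inside an IPM. For any $h \in \mathcal{H}_k$ with $\|h\|_{\mathcal{H}_k}\le 1$, the reproducing property and Cauchy--Schwarz give $|h(y)| = |\langle h, \mathtt{k}(\cdot,y)\rangle_{\mathcal{H}_k}| \le \|h\|_{\mathcal{H}_k}\sqrt{\mathtt{k}(y,y)} \le \sqrt{K}$. Hence the MMD unit ball is contained in $\mathcal{F}_{\sqrt{K}}$, so $D_{\mathrm{MMD},\mathtt{k}}(P_{a,x}\|Q_{a,x}) \le D_{\mathrm{IPM},\mathcal{F}_{\sqrt{K}}}(P_{a,x}\|Q_{a,x})$, and the first part of the proof applies with $C = \sqrt{K}$.

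I do not anticipate any serious obstacle; the argument is essentially a routine assembly of (a) the IPM/MMD–TV reduction via uniform boundedness, (b) Pinsker's inequality, and (c) the already-proved divergence bounds in Corollary~\ref{cor:f-divergence}. The only point that requires a bit of care is making sure the constant in the IPM--TV relation is $2C$ rather than $C$ (the factor of two comes from the fact that $D_{\mathrm{TV}}$ as defined uses the $\tfrac{1}{2}$ normalization, so the oscillation of $\varphi$ on $[-C,C]$ is $2C$).
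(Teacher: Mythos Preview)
Your proposal is correct and follows essentially the same route as the paper's proof: reduce the IPM over $\mathcal{F}_C$ to $2C\cdot D_{\mathrm{TV}}$ via the oscillation/range argument, bound $D_{\mathrm{TV}}$ by $1-e_a(x)$ directly (Corollary~\ref{cor:f-divergence}) and by $\sqrt{-\tfrac{1}{2}\log e_a(x)}$ via Pinsker plus the KL bound, and then embed the MMD unit ball into $\mathcal{F}_{\sqrt{K}}$ using the reproducing property and Cauchy--Schwarz. Your discussion of why the constant is $2C$ rather than $C$ matches the paper's reasoning exactly.
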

}

All results above extend to the marginal case (without covariates) by setting $X=\emptyset$ and replacing $e_a(x)$ with the marginal propensity score $e_a \triangleq \Pr(A=a)$. This yields bounds on the divergence between the marginal interventional law $Q_a \triangleq P(Y \mid \mathrm{do}(A=a))$ and the marginal observational law $P_a \triangleq P(Y \mid A=a)$.

\subsection{Specialization for Exponential Family}\label{subsec:omit3-exponential-family}
Here, we derive a closed-form $f$-divergence when $P_{a,x},\; Q_{a,x}$ are within the exponential family (e.g., Bernoulli, Gaussian, Poisson, exponential, etc.) to exemplify the mechanism of Thm.~\ref{thm:f-divergence}.
\begin{corollary}[\textbf{Exponential Family}]
    \normalfont
    Suppose $P_{a,x}$ and $Q_{a,x}$ are distributions from a common exponential family:
    \begin{align}
        P_{a,x}(y) &\triangleq  \exp\big(\theta_p^\top T(y) - A(\theta_p)\big)h(y), \\ 
        Q_{a,x}(y) &\triangleq  \exp\big(\theta_q^\top T(y) - A(\theta_q)\big)h(y), 
    \end{align}
    where $\theta_p, \theta_q$ are  natural parameters, $T(y)$ is the sufficient statistics, $A(\theta)$ is the log-partition function (log normalizer), and $h(y)$ is the base measure density. Define $\Delta \triangleq \theta_p - \theta_q$ and $\Delta_A \triangleq A(\theta_p) - A(\theta_q)$. 
    \begin{align}
        D_{f}(P_{a,x}\|Q_{a,x})=
        \mathbb{E}_{Q_{a,x}}\left[
        f\left(
        \exp\big(\Delta^{\intercal} T(Y) - \Delta_A\big)
        \right)
        \right].
    \end{align}
\end{corollary}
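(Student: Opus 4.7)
The plan is straightforward: apply the definition of $f$-divergence and compute the Radon--Nikodym derivative $dP_{a,x}/dQ_{a,x}$ in closed form by exploiting the log-linear parameterization of the exponential family.

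First, I would rewrite $D_f(P_{a,x}\|Q_{a,x})$ using Definition~\ref{def:f-div} as $\mathbb{E}_{Q_{a,x}}\bigl[f\bigl(dP_{a,x}/dQ_{a,x}(Y)\bigr)\bigr]$. The mutual absolute continuity in Assumption~\ref{assumption} ensures the Radon--Nikodym derivative is well defined and coincides pointwise with the ratio of densities against any common dominating measure.

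Second, I would evaluate this ratio. Since both laws share the same sufficient statistic $T$, base measure $h$, and functional form, dividing the two densities causes the base measure $h(y)$ to cancel exactly and reduces the exponent to an affine function of $T(y)$:
\begin{align*}
\frac{dP_{a,x}}{dQ_{a,x}}(y) = \frac{\exp(\theta_p^\top T(y) - A(\theta_p))\,h(y)}{\exp(\theta_q^\top T(y) - A(\theta_q))\,h(y)} = \exp\bigl((\theta_p-\theta_q)^\top T(y) - (A(\theta_p)-A(\theta_q))\bigr) = \exp(\Delta^\top T(y) - \Delta_A).
\end{align*}
Substituting this directly into the $f$-divergence expression yields the claimed identity.

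I do not anticipate any substantive obstacle, as this is essentially a definitional computation driven entirely by the exponential family structure. The only technical point is verifying that $h(y)>0$ on the common support so that the pointwise cancellation is legitimate (not merely $Q_{a,x}$-almost surely); this follows from the mutual absolute continuity assumed in Assumption~\ref{assumption}, which forces $P_{a,x}$ and $Q_{a,x}$ to share support. No convexity or regularity properties of $f$ beyond those already built into the $f$-divergence framework are needed.
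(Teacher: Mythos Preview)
Your proposal is correct and matches the paper's treatment: the paper does not give a separate formal proof of this corollary but treats it as an immediate consequence of the definition of $f$-divergence together with the explicit density ratio in the exponential family, which is exactly your computation. The paper then illustrates the identity by writing out the Radon--Nikodym derivative and resulting $f$-divergence for Bernoulli, Gaussian, Poisson, and exponential cases, all following the same two-step recipe you outline.
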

\paragraph{Bernoulli Distribution}
Suppose $Y\in\{0,1\}$ and both $P_{a,x}$ and $Q_{a,x}$ are Bernoulli distributions with success probabilities $p$ and $q$, respectively.
The Bernoulli distribution belongs to the exponential family with sufficient statistic $T(y)=y$ and natural parameter $\theta=\log\frac{p}{1-p}$.

The Radon--Nikodym derivative is given by
\begin{align}
    \frac{dP_{a,x}}{dQ_{a,x}}(y) = \exp\left( y \log \frac{p(1-q)}{q(1-p)} + \log \frac{1-p}{1-q} \right).
\end{align}

Consequently, the $f$-divergence admits the representation
\begin{align}
    D_f(P_{a,x}\|Q_{a,x}) = \mathbb{E}_{Q_{a,x}}\!\left[ f\left( \exp\left( Y \log \frac{p(1-q)}{q(1-p)} + \log \frac{1-p}{1-q} \right) \right) \right].
\end{align}

\paragraph{Gaussian Distribution}
Suppose $Y\in\mathbb{R}^d$ and both $P_{a,x}$ and $Q_{a,x}$ are Gaussian distributions with means $\mu_p$, $\mu_q$ and covariance matrices $\Sigma_p$ and $\Sigma_q$, respectively.
In this case, the Gaussian distribution forms an exponential family with sufficient statistic $T(y)=(y, yy^\top)$.

The Radon--Nikodym derivative is given by
\begin{align} \label{eq:RN-gaussian}
    \frac{dP_{a,x}}{dQ_{a,x}}(y) = \frac{|\Sigma_q|^{1/2}}{|\Sigma_p|^{1/2}} \exp\Bigg( -\frac12 (y-\mu_p)^\top \Sigma_p^{-1}(y-\mu_p) + \frac12 (y-\mu_q)^\top \Sigma_q^{-1}(y-\mu_q) \Bigg).
\end{align}

Accordingly, using \eqref{eq:RN-gaussian}, the $f$-divergence can be written as
\begin{align}
    D_f(P_{a,x}\|Q_{a,x}) = \mathbb{E}_{Q_{a,x}}\!\left[ f\!\left( \frac{dP_{a,x}}{dQ_{a,x}}(Y) \right) \right].
\end{align}

\paragraph{Poisson Distribution}

Suppose $Y\in\{0,1,2,\dots\}$ and both $P_{a,x}$ and $Q_{a,x}$ are Poisson distributions with rate parameters $\lambda_p$ and $\lambda_q$, respectively.
The Poisson distribution belongs to the exponential family with sufficient statistic $T(y)=y$ and natural parameter $\theta=\log\lambda$.

The Radon--Nikodym derivative takes the form
\begin{align}
    \frac{dP_{a,x}}{dQ_{a,x}}(y) = \exp\left( y\log\frac{\lambda_p}{\lambda_q} - (\lambda_p-\lambda_q) \right).
\end{align}

The corresponding $f$-divergence is therefore
\begin{align}
    D_f(P_{a,x}\|Q_{a,x}) = \mathbb{E}_{Q_{a,x}}\left[ f\left( \exp\left( Y\log\frac{\lambda_p}{\lambda_q} - (\lambda_p-\lambda_q) \right) \right) \right].
\end{align}

\paragraph{Exponential Distribution}

Suppose $Y\ge0$ and both $P_{a,x}$ and $Q_{a,x}$ follow exponential distributions with rate parameters $\lambda_p$ and $\lambda_q$, respectively.
The exponential distribution is an exponential family with sufficient statistic $T(y)=y$ and natural parameter $\theta=-\lambda$.

For $y\ge0$, the Radon--Nikodym derivative is given by
\begin{align}
    \frac{dP_{a,x}}{dQ_{a,x}}(y) = \frac{\lambda_p}{\lambda_q} \exp\left( -(\lambda_p-\lambda_q)y \right).
\end{align}

Accordingly, the $f$-divergence can be expressed as
\begin{align}
    D_f(P_{a,x}\|Q_{a,x}) = \mathbb{E}_{Q_{a,x}}\left[ f\left( \frac{\lambda_p}{\lambda_q} \exp\left( -(\lambda_p-\lambda_q)Y \right) \right) \right].
\end{align}


\section{A Distributionally Robust Formulation of Causal Bounds} \label{sec:4}
In this section, we leverage the upper bounds on statistical divergence derived in Section~\ref{sec:3} to construct bounds on the target causal effect $\theta(a,x) \triangleq \mathbb{E}_{Q_{a,x}}[\varphi(Y)]$, where $\varphi(Y)$ is an arbitrary measurable function with finite first and second moments. This framework encompasses diverse causal quantities: setting $\varphi(Y) \triangleq \boldsymbol{1}(Y \leq t)$ yields the cumulative distribution function $Q_{a,x}(Y \leq t)$, while choosing $\varphi(Y) \triangleq \ell(Y;\theta)$ (a loss function for $\theta$) yields the risk function over $Q_{a,x}$. Crucially, we impose no restrictions requiring $\varphi$ to be discrete or bounded.

Using the divergence bound $D_f(P_{a,x} \| Q_{a,x}) \leq B_f(e_a(x))$ from Thm.~\ref{thm:f-divergence}, we define the f-divergence-based \textit{ambiguity set}, which is a collection of distributions over $Y$ within the $B_f(e_a(x))$ radius around the observational law $P_{a,x}$: 
\begin{align}\label{eq:ambiguity-set}
\textbf{(Ambiguity set)}\;\;
\mathcal Q_f(a,x; P_{a,x}) \triangleq 
\left\{
\begin{aligned}
Q_{a,x}\in\mathcal P_{a,x}(\mathcal Y)
\;\; :\;\;
& D_f\!\left(P_{a,x}\,\|\, Q_{a,x}\right) \le B_f(e_a(x)), \\
& P_{a,x} \ll Q_{a,x}
\end{aligned}
\right\},
\end{align}
where $\mathcal{P}_{a,x}(Y)$ is a collection of probability laws given $A=a$ and $X=x$. The target causal effect $\mathbb{E}_{Q_{a,x}}[\varphi(Y)]$ is bounded by expectations over the extremal distributions in this ambiguity set:
\begin{align}\label{eq:initial-optimization}
    \textbf{(Bounds)}\quad \underbrace{ \theta_{\mathrm{lo}}(a,x) }_{\inf_{Q \in \mathcal Q_f(a,x)} \mathbb{E}_{Q}[\varphi(Y)]} \leq \underbrace{\theta(a,x)}_{\mathbb{E}_{Q_{a,x}}[\varphi(Y)]} \leq \underbrace{ \theta_{\mathrm{up}}(a,x) }_{ \sup_{Q \in \mathcal Q_f(a,x)} \mathbb{E}_{Q}[\varphi(Y)] }
\end{align}
The lower and upper bounds are symmetric: by Proposition~\ref{prop:lower-upper} below, the lower bound can be obtained from the upper bound by negating the function $\varphi$. Therefore, we focus on deriving the upper bound $\theta_{\mathrm{up}}(a,x)$ without loss of generality. 
\begin{proposition}[\textbf{Lower bound as a subproblem of upper bound}]\label{prop:lower-upper}
    \normalfont
    Let 
    \begin{align}
        \theta_{\mathrm{lo}}(a,x; \varphi) \triangleq \inf_{Q \in \mathcal Q_f(a,x)} \mathbb{E}_{Q}[\varphi(Y)], \qquad \theta_{\mathrm{up}}(a,x;\varphi) \triangleq \sup_{Q \in \mathcal Q_f(a,x)} \mathbb{E}_{Q}[\varphi(Y)].
    \end{align}
    Then, 
    \begin{align}
        \theta_{\mathrm{lo}}(a,x; \varphi) = -  \theta_{\mathrm{up}}(a,x;-\varphi).
    \end{align}
\end{proposition}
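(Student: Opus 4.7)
The proposition is a standard infimum–supremum duality, combined with linearity of expectation, so the proof will be short. The key structural observation is that the ambiguity set $\mathcal{Q}_f(a,x; P_{a,x})$ defined in \eqref{eq:ambiguity-set} depends only on $P_{a,x}$, $B_f(e_a(x))$, and the absolute-continuity condition; it does \emph{not} depend on the test function $\varphi$. Consequently, the feasible set is identical in the two optimization problems $\theta_{\mathrm{lo}}(a,x; \varphi)$ and $\theta_{\mathrm{up}}(a,x; -\varphi)$, and one can swap the inf and sup by negation without altering the constraints.

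The proof I would give has three steps. First, apply the elementary identity $\inf_{s \in \mathcal{S}} g(s) = -\sup_{s \in \mathcal{S}} (-g(s))$, valid for any set $\mathcal{S}$ and real-valued function $g$, to the objective $g(Q) \triangleq \mathbb{E}_Q[\varphi(Y)]$ over $\mathcal{S} = \mathcal{Q}_f(a,x; P_{a,x})$, giving
\begin{align*}
\theta_{\mathrm{lo}}(a,x;\varphi)
= \inf_{Q \in \mathcal{Q}_f(a,x)} \mathbb{E}_Q[\varphi(Y)]
= -\sup_{Q \in \mathcal{Q}_f(a,x)} \bigl(-\mathbb{E}_Q[\varphi(Y)]\bigr).
\end{align*}
Second, by linearity of expectation (applicable since $\varphi$ has finite first moment under every $Q \in \mathcal{Q}_f(a,x)$, as assumed in Sec.~\ref{sec:4}), $-\mathbb{E}_Q[\varphi(Y)] = \mathbb{E}_Q[-\varphi(Y)]$. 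Third, recognize the resulting expression as $\theta_{\mathrm{up}}(a,x; -\varphi)$, yielding the desired identity
\begin{align*}
\theta_{\mathrm{lo}}(a,x;\varphi) = -\,\theta_{\mathrm{up}}(a,x;-\varphi).
\end{align*}

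\textbf{Main obstacle.} There is essentially no technical obstacle, since all three steps are elementary. The only point that deserves a brief remark is the tacit integrability assumption: one should note that $\varphi$ and $-\varphi$ both have finite first moment under every feasible $Q$, so the expectations in both problems are well defined in $\mathbb{R}$ (with the usual convention that $\pm\infty$ is allowed for the optima if the problems happen to be unbounded). Given the moment condition stated at the start of Sec.~\ref{sec:4}, this is immediate, and the symmetry argument goes through without any delicate measure-theoretic detail.
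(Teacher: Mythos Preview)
Your proposal is correct and follows essentially the same approach as the paper: the paper's proof also reduces the claim to the elementary identity $\inf_{s\in\mathcal S} g(s) = -\sup_{s\in\mathcal S}(-g(s))$ (which it establishes from first principles via an $\varepsilon$-argument), and then implicitly invokes linearity of expectation to identify $-\mathbb E_Q[\varphi(Y)]$ with $\mathbb E_Q[-\varphi(Y)]$. Your added remark on integrability is a welcome clarification that the paper leaves implicit.
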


By Proposition~\ref{prop:lower-upper}, it suffices to compute $\theta_{\mathrm{up}}(a,x)$. However, computing $\theta_{\mathrm{up}}(a,x)$ directly from Eq.~\eqref{eq:initial-optimization} is intractable, as it requires optimizing over the infinite-dimensional space of all probability measures in $\mathcal{Q}_f(a,x)$. To overcome this computational barrier, we reformulate the problem using convex duality:
\begin{theorem}[\textbf{Primal and Dual Formulations}]\label{thm:representation-bound}
    \normalfont
    Let $s(Y) \triangleq \tfrac{dQ_{a,x}}{dP_{a,x}}(Y)$ denote the likelihood ratio, $g_s(Y) \triangleq s(Y)\cdot f(1/s(Y))$, and $\eta_f(a,x) \triangleq B_f(e_a(x))$. The upper bound $\theta_{\mathrm{up}}(a,x)$ admits the following equivalent representations: 
    \begin{align}
        \theta_{\mathrm{up}}(a,x) & = \sup_{s > 0}\Big\{ \mathbb{E}_{P_{a,x}}[s(Y) \varphi(Y)]\; \text{ s.t. } \mathbb{E}_{P_{a,x}}[s(Y)] = 1, \;  \mathbb{E}_{P_{a,x}}\big[g_s(Y)] \leq \eta_f(a,x)   \Big\} \label{eq:primal} \\ 
                &= \inf_{\lambda > 0, u \in \mathbb{R}}\big\{ \lambda \eta_f(a,x)  + u +  \lambda\mathbb{E}_{P_{a,x}}\Big[ g^{\ast}\big(\tfrac{\varphi(Y) - u}{\lambda}\big)\Big] \big\}, \label{eq:dual}
    \end{align}
    where $g^\ast(t) \triangleq \sup_{s > 0}\{s\,t - g(s) \}$ is the convex conjugate (also known as the Legendre–Fenchel conjugate or c-transform) of $g$.
\end{theorem}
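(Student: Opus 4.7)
The plan is to pass from the optimization over distributions $Q$ in the ambiguity set to an optimization over the Radon--Nikodym derivative $s \triangleq dQ/dP_{a,x}$, and then to apply Lagrangian duality to the resulting concave maximization with convex constraints.

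First I would perform the change of variables. By the mutual absolute continuity in Assumption~\ref{assumption}, every $Q \in \mathcal{Q}_f(a,x;P_{a,x})$ admits a strictly positive density $s = dQ/dP_{a,x}$, and conversely every positive measurable $s$ with $\mathbb{E}_{P_{a,x}}[s]=1$ induces a valid $Q$ satisfying mutual absolute continuity with $P_{a,x}$. Under this substitution, $\mathbb{E}_Q[\varphi(Y)] = \mathbb{E}_{P_{a,x}}[s(Y)\varphi(Y)]$, and using $dP_{a,x}/dQ = 1/s$ the divergence becomes
\begin{align*}
D_f(P_{a,x}\|Q) \;=\; \int f\!\big(1/s(y)\big)\,s(y)\,dP_{a,x}(y) \;=\; \mathbb{E}_{P_{a,x}}[g_s(Y)].
\end{align*}
Combining this with the normalization and the divergence bound $\eta_f(a,x) = B_f(e_a(x))$ supplied by Thm.~\ref{thm:f-divergence} immediately produces the primal representation \eqref{eq:primal}.

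Next I would derive the dual. Since $f$ is convex, the perspective transform $g(s) = sf(1/s)$ is convex, so the primal is a concave maximization over a convex set. Form the Lagrangian with $\lambda \geq 0$ attached to the divergence constraint and $u \in \mathbb{R}$ to the equality constraint:
\begin{align*}
L(s;\lambda,u) \;=\; \lambda\,\eta_f(a,x) + u + \mathbb{E}_{P_{a,x}}\!\big[\,s(Y)(\varphi(Y)-u) - \lambda\, g(s(Y))\,\big].
\end{align*}
For $\lambda > 0$, the pointwise maximization gives $\sup_{s>0}\{s(\varphi-u) - \lambda g(s)\} = \lambda\,g^{\ast}\!\big((\varphi-u)/\lambda\big)$ by the definition of the Legendre--Fenchel conjugate, while the interchange of supremum and expectation is justified by a standard measurable-selection argument (e.g., Rockafellar--Wets, Thm.~14.60). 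Minimizing the resulting upper bound over $\lambda > 0$ and $u \in \mathbb{R}$ then yields the dual form \eqref{eq:dual}.

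Finally I would close the duality gap via Slater's condition. The candidate $s \equiv 1$ (i.e., $Q = P_{a,x}$) satisfies $\mathbb{E}_{P_{a,x}}[g(1)] = f(1) = 0 < \eta_f(a,x)$, where strict positivity of $\eta_f(a,x)$ follows from $e_a(x) \leq 1-c < 1$ by Assumption~\ref{assumption} (positivity) together with the nondegeneracy of $f$. Hence Slater's condition holds for the convex primal, and the standard strong-duality theorem for $f$-divergence distributionally robust optimization yields zero duality gap. The main obstacle will be the rigorous treatment of infinite-dimensional duality, in particular justifying the sup--expectation interchange and the restriction of the dual to $\lambda > 0$; both are controlled by measurability of the pointwise maximizer, together with the fact that $g^{\ast}$ is proper, convex, and lower semicontinuous, so that whenever $\varphi$ is not essentially bounded above, the primal can remain finite only for $\lambda > 0$.
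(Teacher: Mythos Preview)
Your proposal is correct and follows essentially the same route as the paper: change of variables via mutual absolute continuity to obtain the primal, Lagrangian duality with multipliers $(\lambda,u)$, a pointwise/interchange argument to evaluate the inner supremum as $\lambda\,g^{\ast}((\varphi-u)/\lambda)$, and Slater's condition at $s\equiv 1$ using $f(1)=0<\eta_f(a,x)$ to close the gap. Your additional remarks on the measurable-selection justification (Rockafellar--Wets) and on the $\lambda>0$ restriction are slightly more explicit than the paper's treatment, but the argument is the same.
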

The following proposition provides a general recipe for computing the convex conjugate $g^{\ast}$:

\begin{proposition}[\textbf{Convex Conjugate} $g^{\ast}$]\label{prop:convex-conjugate-unified}
    \normalfont
    Let $f:(0,\infty)\to (-\infty,\infty]$ be proper, convex, and lower semi-continuous function. Define for $s > 0$, 
    \begin{align}
        g(s) \triangleq s f(1/s), \quad g^{\ast}(t) \triangleq \sup_{s>0}\{st - g(s)\}. 
    \end{align}
    Let $r \triangleq 1/s$. Then, 
    \begin{align}
        g^{\ast}(t) \triangleq \sup_{r > 0} \frac{t-f(r)}{r}.
    \end{align}
    Moreover, if the supremum is attained at some $r^{\ast} > 0$, then there exists a subgradient $a \in \partial f(r^{\ast})$ such that 
    \begin{align}
        t = f(r^{\ast}) - r^{\ast}a, \quad \text{ and } \quad g^{\ast}(t) = -a. 
    \end{align}
    If $f$ is differentiable at $r^{\ast}$, then $a = f'(r^{\ast})$ and hence $g^{\ast}(t) = -f'(r^{\ast})$. 
\end{proposition}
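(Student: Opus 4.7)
\textbf{Proof plan for Proposition \ref{prop:convex-conjugate-unified}.} The strategy has two parts: first a direct change of variables to obtain the alternative representation, and then a subdifferential analysis at an interior maximizer to read off the optimality conditions.

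\textbf{Step 1 (change of variables).} Starting from the definition of $g$, factor out $s$ inside the supremum to write
\begin{align*}
g^{\ast}(t) \;=\; \sup_{s>0}\bigl\{s t - s f(1/s)\bigr\} \;=\; \sup_{s>0} s\bigl(t - f(1/s)\bigr).
\end{align*}
The map $s\mapsto r \triangleq 1/s$ is a bijection of $(0,\infty)$ onto itself, so substituting $s = 1/r$ yields
\begin{align*}
g^{\ast}(t) \;=\; \sup_{r>0} \frac{t - f(r)}{r},
\end{align*}
which is the claimed identity. No convexity is needed for this step; properness of $f$ guarantees the value is well defined in $(-\infty,\infty]$.

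\textbf{Step 2 (subgradient characterization at the maximizer).} Suppose the supremum is attained at some $r^{\ast}>0$, and set $c \triangleq g^{\ast}(t) = (t-f(r^{\ast}))/r^{\ast}$, which rearranges to $t = f(r^{\ast}) + c r^{\ast}$. By definition of the supremum, for every $r>0$,
\begin{align*}
\frac{t-f(r)}{r} \;\le\; c \quad\Longleftrightarrow\quad f(r) \;\ge\; t - c r \;=\; f(r^{\ast}) + c(r^{\ast} - r) \;=\; f(r^{\ast}) + (-c)(r - r^{\ast}).
\end{align*}
This is precisely the subgradient inequality certifying $-c \in \partial f(r^{\ast})$. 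Setting $a \triangleq -c$ we obtain $a \in \partial f(r^{\ast})$, $t = f(r^{\ast}) - r^{\ast} a$, and $g^{\ast}(t) = c = -a$, which is the announced pair of identities. In the differentiable case $\partial f(r^{\ast}) = \{f'(r^{\ast})\}$, forcing $a = f'(r^{\ast})$ and hence $g^{\ast}(t) = -f'(r^{\ast})$.

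\textbf{Main obstacle.} The algebraic change of variables is immediate; the only real content is the equivalence between ``$r^{\ast}$ maximizes $(t-f(r))/r$'' and ``the affine function $r\mapsto t - c r$ is a supporting line of $f$ at $r^{\ast}$'', which is what unlocks the subgradient identity $a \in \partial f(r^{\ast})$. I would present this by deriving the subgradient inequality directly from the supremum condition, as above, rather than invoking a first-order calculus rule; this avoids any assumption of differentiability and keeps the argument entirely within the convex-analytic setting guaranteed by the hypotheses on $f$. A minor side issue is confirming that the supremum is indeed attained when claimed — in practice one verifies this a posteriori for the specific $f$'s of interest (KL, Hellinger, $\chi^2$, etc.), so the proposition is stated conditionally on attainment and no existence argument is needed here.
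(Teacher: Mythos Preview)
Your proposal is correct and follows essentially the same approach as the paper's proof: a direct change of variables $r=1/s$ for the first identity, followed by rewriting the supremum condition $(t-f(r))/r\le c$ as the subgradient inequality $f(r)\ge f(r^\ast)+(-c)(r-r^\ast)$ to extract $a=-c\in\partial f(r^\ast)$. The only difference is notational (you write $c$ where the paper writes $v$), and your added remarks on attainment and the avoidance of differentiability assumptions are apt but not part of the paper's argument.
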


Proposition~\ref{prop:convex-conjugate-unified} provides a constructive method for evaluating the convex conjugate $g^{\ast}$. By introducing the change of variable $r = 1/s$, we transform the optimization problem into a simpler form. The result states that the value of the conjugate function $g^\ast(t)$ is determined by the derivative (or subgradient) of the original divergence function $f$ at the optimal point $r^\ast$.

We apply Prop.~\ref{prop:convex-conjugate-unified} to standard f-divergences:
{
\renewcommand{\thecorollary}{P1}
\begin{corollary}\label{cor:convex-conjugate}
    \normalfont
    Let $g(s) \triangleq sf(1/s)$ for $s > 0$. Then, 
    \begin{itemize}[leftmargin=*]
        \item \textbf{KL}: $g_{\mathrm{KL}}(s) = -\log s$, and 
        \begin{align}
            g^{\ast}_{\mathrm{KL}}(t) = 
            \begin{cases}
                 -1-\log(-t) & \text{ if } t < 0; \\ 
                 +\infty &\text{ if } t \geq 0.
            \end{cases}
        \end{align}
        \item \textbf{Hellinger}: $g_{\mathrm{H}}(s) = \frac{1}{2}(1-2\sqrt{s} + s)$, and 
        \begin{align}
            g^{\ast}_{\mathrm{H}}(t) = 
            \begin{cases}
                 \frac{t}{1-2t} & \text{ if } t < 1/2; \\ 
                 +\infty &\text{ if } t \geq 1/2.
            \end{cases}
        \end{align}
        \item \textbf{Chi-square}: $g_{\chi^2}(s) = \frac{(1-s)^2}{2s}$, and  
        \begin{align}
            g^{\ast}_{\chi^2}(t)  = 
            \begin{cases}
                 1-\sqrt{1-2t} & \text{ if } t \leq 1/2; \\ 
                 +\infty &\text{ if } t > 1/2.
            \end{cases}
        \end{align}
        \item \textbf{TV}: $g_{\mathrm{TV}}(s) = \frac{1}{2}|1-s|$, and 
        \begin{align}
            g^{\ast}_{\mathrm{TV}}(t)  = 
            \begin{cases}
                 -\frac{1}{2}, & \text{ if } t \leq -\frac{1}{2}, \\
                 t, & \text{ if } -\frac{1}{2} < t \leq \frac{1}{2}, \\
                 +\infty, &\text{ if } t > \frac{1}{2}. 
            \end{cases}
        \end{align}
        \item \textbf{Jensen-Shannon}: $g_{\mathrm{JS}}(s) = \frac{1}{2}\left(s\log s - (1+s) \log(1+s) + (1+s)\log 2\right)$, and 
        \begin{align}
            g^{\ast}_{\mathrm{JS}}(t)  = 
            \begin{cases}
                 -\frac{1}{2}\log \left( 2 - \exp(2t) \right), & \text{ if } t < \frac{1}{2}\log 2, \\
                 +\infty, &\text{ if } t \geq \frac{1}{2}\log 2. 
            \end{cases}
        \end{align}
    \end{itemize}
\end{corollary}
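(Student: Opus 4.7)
The plan is to apply Proposition~\ref{prop:convex-conjugate-unified} case by case to the five standard divergences, treating each $f$ via the unified recipe $g^\ast(t)=\sup_{r>0}(t-f(r))/r$ and, for differentiable $f$, the stationarity identity $t=f(r^\ast)-r^\ast f'(r^\ast)$ together with $g^\ast(t)=-f'(r^\ast)$. For each entry I also need to verify the formula $g(s)=sf(1/s)$ by direct substitution, which is routine algebra. The domain information (where $g^\ast$ is finite) is read off from the range of $r\mapsto f(r)-rf'(r)$ as $r$ varies over $(0,\infty)$: the value of $t$ at which the parametric inversion breaks down marks the boundary of $\operatorname{dom}(g^\ast)$, beyond which the linear part $t/r$ dominates and drives the sup to $+\infty$.

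For KL, Hellinger, $\chi^2$, and Jensen--Shannon, the strategy is purely computational. First I would compute $f'(r)$ (respectively $\log r+1$, $(\sqrt{r}-1)/(2\sqrt{r})$, $r-1$, and $\tfrac12\log\tfrac{2r}{r+1}$), then evaluate $f(r)-rf'(r)$ and simplify: this yields $-r$ for KL, $(1-\sqrt{r})/2$ for Hellinger, $(1-r^2)/2$ for $\chi^2$, and $\tfrac12\log\tfrac{2}{r+1}$ for JS. Inverting each expression for $r^\ast$ in terms of $t$ and substituting into $g^\ast(t)=-f'(r^\ast)$ produces the stated closed forms. The domain conditions ($t<0$ for KL, $t<1/2$ for Hellinger, $t\le1/2$ for $\chi^2$, $t<\tfrac12\log 2$ for JS) come directly from the positivity requirement $r^\ast>0$; for $t$ outside these ranges, taking $r\to 0^+$ makes $t/r$ blow up faster than $f(r)/r$, so the sup is $+\infty$.

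The TV case is the only real obstacle, because $f_{\mathrm{TV}}(r)=\tfrac12|r-1|$ is not differentiable at $r=1$, so Proposition~\ref{prop:convex-conjugate-unified}'s smooth form does not apply directly. Here I would fall back on the primitive definition $g^\ast(t)=\sup_{r>0}(t-\tfrac12|r-1|)/r$ and split the analysis into $r\in(0,1]$ and $r\in[1,\infty)$. On each branch the objective is a rational function of $r$ whose sign of derivative is controlled by whether $t$ exceeds $\tfrac12$ or falls below $-\tfrac12$. A short monotonicity argument on each branch shows: (i) for $t>1/2$, the branch $r\to 0^+$ diverges, giving $+\infty$; (ii) for $-1/2<t\le 1/2$, both branches attain their maximum at the common boundary $r=1$ with value $t$; and (iii) for $t\le -1/2$, the branch $r\to\infty$ dominates with limit $-1/2$. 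Assembling these three regimes recovers the piecewise expression for $g^\ast_{\mathrm{TV}}$.

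Finally, for Jensen--Shannon the remaining delicate step is the algebraic simplification of $g^\ast_{\mathrm{JS}}(t)$ into the compact form $-\tfrac12\log(2-e^{2t})$. After solving $t=\tfrac12\log\tfrac{2}{r+1}$ for $r^\ast=2e^{-2t}-1$ and substituting into $-f'(r^\ast)=\tfrac12\log\tfrac{r^\ast+1}{2r^\ast}$, one uses the identity $2e^{-2t}-1=(2-e^{2t})e^{-2t}$ to collapse the expression; the positivity constraint $r^\ast>0$ translates exactly into $t<\tfrac12\log 2$. I expect this algebraic collapse together with the non-smooth TV analysis to be the only non-mechanical portions of the proof.
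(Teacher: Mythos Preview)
Your proposal is correct and complete in substance. It differs from the paper's own proof in a useful way: the paper does \emph{not} invoke Proposition~\ref{prop:convex-conjugate-unified} at all but instead computes each conjugate directly in the $s$-variable, i.e., it evaluates $\sup_{s>0}\{st-g(s)\}$ by elementary calculus (for Hellinger it substitutes $u=\sqrt{s}$ to reduce to a quadratic, for $\chi^2$ it expands $g_{\chi^2}(s)=\tfrac12(1/s-2+s)$ and differentiates, and for TV it splits at $s=1$). Your route via the $r$-parametrization and the stationarity identity $t=f(r^\ast)-r^\ast f'(r^\ast)$, $g^\ast(t)=-f'(r^\ast)$ is more systematic---it is precisely the recipe Proposition~\ref{prop:convex-conjugate-unified} was set up to deliver---and it avoids the ad hoc substitutions the paper uses. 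For TV the two approaches are mirror images under $r=1/s$: the paper splits at $s=1$, you split at $r=1$, and the monotonicity analysis is identical.

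One small boundary caveat: for $\chi^2$ at $t=1/2$, your stationarity argument yields $r^\ast=\sqrt{1-2t}=0\notin(0,\infty)$, so the identity $g^\ast(t)=-f'(r^\ast)$ does not literally apply there. You need a one-line limiting check (e.g., $(t-f(r))/r = 1 - r/2$ at $t=1/2$, whose sup over $r>0$ is $1$, attained as $r\downarrow 0$) to confirm the closed interval $t\le 1/2$ in the statement rather than the open one your inversion produces.
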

}

All results above extend to the marginal case (without covariates) by setting $X=\emptyset$ and replacing $e_a(x)$ with the marginal propensity score $e_a \triangleq \Pr(A=a)$. This yields bounds on the marginal causal effect $\mathbb{E}_{Q_a}[Y] \triangleq \mathbb{E}[Y \mid \mathrm{do}(A=a)]$, where $Q_a \triangleq P(Y \mid \mathrm{do}(A=a))$. 

\section{Debiased Semiparametric Estimation of Causal Bounds}\label{sec:5}
Solving the dual problem in Eq.~\eqref{eq:dual} pointwise for each $(a,x)$ is computationally intractable, as it requires estimating the conditional expectation $\mathbb{E}_{P_{a,x}}[g^{\ast}(\cdot)]$ separately for each pair of covariate $X=x$ and treatment $A=a$ at every optimization iteration. We circumvent this by amortizing the optimization as follows: we view $\lambda(a,x)$ and $u(a,x)$ as functional parameters to be learned globally. Parameterizing $\lambda(a,x) \triangleq \exp(h(a,x))$ to enforce positivity, the dual problem transforms into:
\begin{proposition}
    \normalfont 
    Let $\eta_f(a,x) \triangleq B_f(e_a(x))$. Then, 
    \begin{align}\label{eq:dual2}
        \theta_{\mathrm{up}}(a,x) = \inf_{h(a,x) \in \mathbb{R} \atop u(a,x) \in \mathbb{R}} \mathbb{E}_{P_{a,x}}\Big[ \exp({h(A,X)})\big\{\eta_f(A,X) + g^{\ast}\big( \tfrac{\varphi(Y) - u(A,X)}{\exp({h(A,X)})}\big) \big\}  + u(A,X)  \Big].
    \end{align}
\end{proposition}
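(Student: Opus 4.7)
The plan is to obtain Eq.~\eqref{eq:dual2} from the dual formulation in Eq.~\eqref{eq:dual} of Thm.~\ref{thm:representation-bound} by a positivity-preserving reparameterization combined with the trivial observation that $\mathbb{E}_{P_{a,x}}$ conditions on $A=a, X=x$. Recall from Thm.~\ref{thm:representation-bound} that
\begin{align*}
\theta_{\mathrm{up}}(a,x)
= \inf_{\lambda > 0,\; u \in \mathbb{R}} \Big\{ \lambda\,\eta_f(a,x) + u + \lambda\,\mathbb{E}_{P_{a,x}}\!\big[g^{\ast}\big(\tfrac{\varphi(Y)-u}{\lambda}\big)\big] \Big\}.
\end{align*}
The positivity constraint $\lambda>0$ is equivalent to the unconstrained parameterization $\lambda = \exp(h)$ with $h\in\mathbb{R}$, since $\exp:\mathbb{R}\to(0,\infty)$ is a bijection. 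Making this substitution preserves the value of the infimum and yields
\begin{align*}
\theta_{\mathrm{up}}(a,x)
= \inf_{h,\, u \in \mathbb{R}} \Big\{ \exp(h)\,\eta_f(a,x) + u + \exp(h)\,\mathbb{E}_{P_{a,x}}\!\big[g^{\ast}\big(\tfrac{\varphi(Y)-u}{\exp(h)}\big)\big] \Big\}.
\end{align*}

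Next, since $\exp(h)$, $\eta_f(a,x)$, and $u$ are constants with respect to the integration variable $Y$, they can be absorbed inside $\mathbb{E}_{P_{a,x}}$ without changing the value of the objective; this gives a single expectation of the form
\begin{align*}
\mathbb{E}_{P_{a,x}}\!\Big[\exp(h)\big\{\eta_f(a,x) + g^{\ast}\big(\tfrac{\varphi(Y)-u}{\exp(h)}\big)\big\} + u\Big].
\end{align*}
Finally, I reinterpret the scalars $h$ and $u$ as the values at $(a,x)$ of functions $h(\cdot,\cdot)$ and $u(\cdot,\cdot)$, writing $h=h(a,x)$ and $u=u(a,x)$. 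Because $P_{a,x}$ conditions on $A=a$ and $X=x$, we have $A=a$ and $X=x$ almost surely under $P_{a,x}$, hence $h(A,X)=h(a,x)$, $u(A,X)=u(a,x)$, and $\eta_f(A,X)=\eta_f(a,x)$ almost surely. Substituting these into the expectation yields exactly Eq.~\eqref{eq:dual2}. The outer infimum over $h(a,x)\in\mathbb{R}$ and $u(a,x)\in\mathbb{R}$ coincides with the outer infimum over $\lambda>0$ and $u\in\mathbb{R}$ from Eq.~\eqref{eq:dual} because the objective at the fixed point $(a,x)$ depends on the functions $h$ and $u$ only through their scalar values $h(a,x)$ and $u(a,x)$.

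There is no real obstacle in this argument; the statement is a reparameterization of Thm.~\ref{thm:representation-bound} rather than a new analytic result. The only point worth flagging is that the representation is pointwise in $(a,x)$, so the claim is compatible with later \emph{global} learning of $h(\cdot,\cdot)$ and $u(\cdot,\cdot)$ as functional parameters: averaging Eq.~\eqref{eq:dual2} against the joint law of $(A,X)$ produces a single sample-based objective whose pointwise minimizers recover $\theta_{\mathrm{up}}(A,X)$, which is precisely the amortization motivating the reformulation.
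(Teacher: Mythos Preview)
Your proposal is correct and matches the paper's approach: the proposition is stated immediately after the sentence ``Parameterizing $\lambda(a,x)\triangleq\exp(h(a,x))$ to enforce positivity, the dual problem transforms into,'' and the paper offers no separate proof, treating it as the obvious reparameterization of Eq.~\eqref{eq:dual} that you carry out explicitly. Your additional remark that $A=a$, $X=x$ $P_{a,x}$-a.s.\ (so $h(A,X)=h(a,x)$ etc.) is the right justification for the notational switch inside the expectation.
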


To operationalize this optimization, we define a loss function and corresponding risk function for the functional parameters $h$ and $u$:
\begin{definition}[\textbf{Risk Function for Causal Bound}]\label{def:risk}
    \normalfont
    Let $V = (X,A,Y)$. Let $h_{\beta}, u_{\gamma}: \mathcal{A} \times \mathcal{X} \mapsto \mathbb{R}$ be maps parametrized by $\beta \in \mathbb{R}^{p_1}$ and $\gamma \in \mathbb{R}^{p_2}$. 
    The risk function for causal bounds is 
    \begin{align}\label{eq:risk}
        \mathcal{R}(\beta,\gamma; e) \triangleq \mathbb{E}_{P}[\ell(V; (\beta, \gamma), e )],
    \end{align}
    where $e \triangleq e_A(X)$ and $\eta_f \triangleq \eta_f(A,X) \triangleq B_f(e_A(X))$, and 
    \begin{align}\label{eq:loss}
        \ell(V; (\beta, \gamma), e ) \triangleq \exp(h_{\beta}(A,X))\Big\{\eta_f(A,X) + g^{\ast}\Big( \tfrac{\varphi(Y) - u_{\gamma}(A,X)}{\exp(h_{\beta}(A,X))}\Big) \Big\}  + u_{\gamma}(A,X).
    \end{align}
\end{definition}
The following proposition shows that this risk minimization is equivalent to solving the pointwise dual problem:
\begin{proposition}[\textbf{Justification of Risk Function}]\label{prop:justification}
    \normalfont
    Define, for each $(a,x)$, the following loss
    \begin{align}
        \ell(h, u; y,a,x) \triangleq \exp(h(a,x))\Big\{\eta_f(a,x) + g^{\ast}\big( \tfrac{\varphi(y) - u(a,x)}{\exp(h(a,x))}\big) \Big\}  + u(a,x). 
    \end{align}
    Let $\mathcal{R}(h,u) \triangleq \mathbb{E}_{P}[\ell(h, u; Y,A,X)]$ be a risk function. Assume $\mathcal{R}(h,u) < \infty$ for all $h,u \in \mathcal{F}$, where $\mathcal{F}$ is a function class rich enough that for any $(h_1,u_1), (h_2,u_2) \in \mathcal{F}$ and $\forall B \subset \mathcal{A} \times \mathcal{X}$, $(h',u') \triangleq (h_1,u_1)\boldsymbol{1}_{B} + (h_2,u_2)\boldsymbol{1}_{B^c}$ also lies in $\mathcal{F}$. Then, for any fixed $(h^{\star}, u^{\star}) \in \mathcal{F}$, the followings are equivalent:
    \begin{enumerate}[leftmargin=*]
        \item $(h^{\star}, u^{\star})$ minimizes $\mathcal{R}$ over $\mathcal{F}$.
        \item $(h^{\star}, u^{\star})$ minimizes $\mathbb{E}_{P_{a,x}}[\ell(h,u;Y,a,x)]$ for $P_{A,X}$-almost every $(a,x)$. 
    \end{enumerate}
\end{proposition}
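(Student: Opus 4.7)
The proposition is a decision-theoretic equivalence between global and conditional (pointwise in $(a,x)$) risk minimization, connected by the tower property; the closure of $\mathcal{F}$ under measurable splicing is exactly what is needed to promote a conditional violation into a global one. My plan is to (i) rewrite the global risk via iterated expectation, (ii) get the implication (2) $\Rightarrow$ (1) as an immediate integration argument, and (iii) get (1) $\Rightarrow$ (2) by contraposition, using the splicing closure to convert a positive-measure violation of the conditional optimality into a strict decrease of $\mathcal{R}$.

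\textbf{Step 1: tower decomposition.} Define the conditional risk
\begin{equation*}
r(h,u;a,x)\;\triangleq\;\mathbb{E}_{P_{a,x}}\!\left[\ell(h,u;Y,a,x)\right].
\end{equation*}
Since $P(V)=P(Y\mid A,X)\,P(A,X)$, the law of iterated expectations gives
\begin{equation*}
\mathcal{R}(h,u)\;=\;\mathbb{E}_{P_{A,X}}\!\left[r(h,u;A,X)\right].
\end{equation*}
All integrals are finite by the assumption $\mathcal{R}(h,u)<\infty$ on $\mathcal{F}$, so interchanges below are justified by Fubini/Tonelli.

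\textbf{Step 2: (2) $\Rightarrow$ (1).} Assume $(h^\star,u^\star)$ minimizes $r(\cdot,\cdot;a,x)$ over $\mathcal{F}$ for $P_{A,X}$-a.e.\ $(a,x)$. Fix arbitrary $(h,u)\in\mathcal{F}$. Then $r(h^\star,u^\star;a,x)\le r(h,u;a,x)$ almost surely, and integrating over $P_{A,X}$ yields $\mathcal{R}(h^\star,u^\star)\le \mathcal{R}(h,u)$. Hence $(h^\star,u^\star)$ is a global minimizer.

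\textbf{Step 3: (1) $\Rightarrow$ (2) by contraposition.} Suppose (2) fails, i.e., there is a measurable set $B_0\subset\mathcal{A}\times\mathcal{X}$ with $P_{A,X}(B_0)>0$ on which $(h^\star,u^\star)$ does not minimize $r$. Using a measurable selection argument (e.g.\ exhausting a countable dense subcollection of $\mathcal{F}$ and taking the subset achieving strict improvement), produce $(h_1,u_1)\in\mathcal{F}$ and a measurable $B\subset B_0$ with $P_{A,X}(B)>0$ such that
\begin{equation*}
r(h_1,u_1;a,x)\;<\;r(h^\star,u^\star;a,x)\quad\text{for all }(a,x)\in B.
\end{equation*}
By the splicing closure hypothesis on $\mathcal{F}$, the pair $(h',u')\triangleq (h_1,u_1)\mathbf{1}_{B}+(h^\star,u^\star)\mathbf{1}_{B^c}$ lies in $\mathcal{F}$. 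Plugging into the tower decomposition and splitting the outer integral over $B$ and $B^c$,
\begin{equation*}
\mathcal{R}(h',u')=\int_{B} r(h_1,u_1;a,x)\,dP_{A,X}+\int_{B^c} r(h^\star,u^\star;a,x)\,dP_{A,X}\;<\;\mathcal{R}(h^\star,u^\star),
\end{equation*}
where the strict inequality uses $P_{A,X}(B)>0$ together with the pointwise strict gap on $B$. This contradicts (1), completing the contrapositive.

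\textbf{Main obstacle.} The one genuinely technical step is the measurable selection in Step 3: extracting a single $(h_1,u_1)\in\mathcal{F}$ (rather than a family indexed by $(a,x)$) that strictly improves on a positive-measure set. I would discharge this by invoking a standard argument: choose a countable collection $\{(h_n,u_n)\}\subset\mathcal{F}$ that is dense with respect to the relevant topology on $\mathcal{F}$, let $B_n\triangleq\{(a,x):r(h_n,u_n;a,x)<r(h^\star,u^\star;a,x)\}$, observe $\bigcup_n B_n$ has positive measure whenever (2) fails, and then take $B=B_n$ for some $n$ with $P_{A,X}(B_n)>0$, giving the required $(h_1,u_1)=(h_n,u_n)$. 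The rest of the argument is purely the tower property plus the splicing closure baked into the hypothesis on $\mathcal{F}$.
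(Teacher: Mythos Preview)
Your proposal is correct and follows essentially the same route as the paper: the tower property for $(2)\Rightarrow(1)$, and contraposition plus the splicing closure of $\mathcal{F}$ for $(1)\Rightarrow(2)$. The paper handles the measurable selection step you flag by directly invoking the definition of the essential infimum together with decomposability of $\mathcal{F}$ to assert the existence of a single improving pair $(\tilde h,\tilde u)\in\mathcal{F}$ on a positive-measure subset, rather than via a countable-density argument.
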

Proposition~\ref{prop:justification} establishes that solving the pointwise dual problem in Eq.~\eqref{eq:dual2} is equivalent to finding the global minimizer $(h^{\star}, u^{\star})$ of the risk function in Def.~\ref{def:risk}. This amortization substantially improves tractability: instead of solving a separate optimization for each $(a,x)$, we learn functional parameters that generalize across the covariate space.

Since the risk function in Eq.~\eqref{eq:risk} depends on the unknown propensity score $e$, we must estimate it from data. However, estimating $e$ introduces errors that can propagate into the bound estimates. To mitigate this, we construct a debiased risk function that achieves first-order insensitivity (Neyman-orthogonality) to perturbations in $e$:
\begin{definition}[\textbf{Debiased Risk Function}]\label{def:debiased-risk}
    \normalfont
    Let $\eta'_f(A,X)$ be the first-order derivative of $\eta_f(A,X)$ w.r.t. $e$. The debiased risk function is 
    \begin{align}\label{eq:debiased-risk}
        \mathcal{R}^{\mathrm{db}}(\beta, \gamma; e) \triangleq \mathbb{E}[\ell^{\mathrm{db}}(V; (\beta, \gamma), e)],
    \end{align}
    where 
    \begin{align}
        \ell^{\mathrm{db}}(V; (\beta, \gamma), e) &\triangleq  \underbrace{ \exp(h_{\beta}(A,X))\Big\{\eta_{f}(A,X) + g^{\ast}\big( \tfrac{\varphi(Y) - u_{\gamma}(A,X)}{\exp(h_{\beta}(A,X))}\big) \Big\}  + u_{\gamma}(A,X) }_{\text{Eq.~\eqref{eq:loss}}} \\ 
        &+ \sum_{a\in\mathcal{A}} e_a(X)\exp(h_{\beta}(a,X)) \eta'_{f}(a,X)\,\big(\boldsymbol{1}(A=a) - e_a(X)\big) \label{eq:debiasedness}
    \end{align}
\end{definition}
Here, Eq.~\eqref{eq:debiasedness} is an error correction term, which makes $\ell^{\mathrm{db}}(V; (\beta, \gamma), e)$ invariant to small first-order perturbations in $e$ (i.e., \textit{Neyman-orthogonal} \citep{chernozhukov2018double}):
\begin{lemma}[\textbf{Orthogonality}]\label{lemma:orthogonality}
    \normalfont
    For any direction functions $\{s_a(\cdot)\}_{a \in \mathcal{A}}$ and any perturbation path $e_{t,a} \triangleq e_a + t s_a$ with sufficiently small $|t|$, $\tfrac{\partial }{\partial t}\mathcal{R}^{\mathrm{db}}(\beta,\gamma;e_t)\big\vert_{t=0} = 0$ for all $(\beta,\gamma)$. 
\end{lemma}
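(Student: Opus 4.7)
The plan is to decompose the debiased loss as $\ell^{\mathrm{db}} = L_{\mathrm{plug}} + C$, where $L_{\mathrm{plug}}$ is the plug-in piece in Eq.~\eqref{eq:loss} and $C$ is the correction term in Eq.~\eqref{eq:debiasedness}, and then to differentiate $\mathbb{E}[L_{\mathrm{plug}}]$ and $\mathbb{E}[C]$ separately along the path $e_{t,a} = e_a + t s_a$ and show they cancel at $t=0$. The positivity assumption keeps $e_{t,a}(X)\in[c/2,1-c/2]$ for sufficiently small $|t|$, on which $B_f$ is smooth, so differentiation under the expectation is justified by dominated convergence.

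\textbf{Plug-in contribution.} In $L_{\mathrm{plug}}$ the only dependence on $e$ is through $\eta_f(A,X) = B_f(e_A(X))$; the $g^{\ast}$ term is explicitly $e$-free. The chain rule and iterated expectations give
\begin{align*}
    \frac{\partial}{\partial t}\mathbb{E}[L_{\mathrm{plug}}]\Big|_{t=0}
    = \mathbb{E}\!\left[\exp(h_\beta(A,X))\eta'_f(A,X)s_A(X)\right]
    = \mathbb{E}_X\!\left[\sum_{a\in\mathcal{A}} e_a(X)\exp(h_\beta(a,X))\eta'_f(a,X)s_a(X)\right].
\end{align*}

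\textbf{Correction contribution.} In each summand of $C$ the propensity enters in three places: the outer factor $e_a(X)$, the derivative $\eta'_f(a,X)=B_f'(e_a(X))$, and the residual $R_a(t)\triangleq \boldsymbol{1}(A=a)-e_{t,a}(X)$. Writing each summand as $F_a(t)\cdot R_a(t)$ with $F_a(t)\triangleq e_{t,a}(X)\exp(h_\beta(a,X))B_f'(e_{t,a}(X))$ (a function of $X$ only), the product rule yields
\begin{align*}
    \frac{\partial}{\partial t}\bigl[F_a(t)R_a(t)\bigr]\Big|_{t=0} = F_a'(0)\bigl(\boldsymbol{1}(A=a)-e_a(X)\bigr) + F_a(0)\cdot(-s_a(X)).
\end{align*}
Taking expectations, the first term vanishes by the tower property since $\mathbb{E}[\boldsymbol{1}(A=a)-e_a(X)\mid X]=0$ and $F_a'(0)$ is $X$-measurable; only the second term survives. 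Summing over $a$,
\begin{align*}
    \frac{\partial}{\partial t}\mathbb{E}[C]\Big|_{t=0} = -\mathbb{E}_X\!\left[\sum_{a\in\mathcal{A}} e_a(X)\exp(h_\beta(a,X))\eta'_f(a,X)s_a(X)\right].
\end{align*}

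Adding the two contributions gives $\partial_t \mathcal{R}^{\mathrm{db}}(\beta,\gamma;e_t)|_{t=0}=0$, as required, for every $(\beta,\gamma)$ and every direction $\{s_a\}$. The main obstacle is simply bookkeeping: all three occurrences of $e$ in the correction term must be differentiated, and one must verify that the two product-rule terms carrying the unperturbed residual $\boldsymbol{1}(A=a)-e_a(X)$ drop out under conditioning on $X$, so that the only surviving piece is precisely the one designed to cancel the plug-in sensitivity.
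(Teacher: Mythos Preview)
Your proof is correct and follows essentially the same approach as the paper's own proof: decompose the debiased loss into the plug-in piece and the correction term, differentiate each along the perturbation path, use the product rule on the correction term written as $F_a(t)R_a(t)$ (the paper uses $U_a V_a$), kill the $F_a'(0)$-term via the tower property since $\mathbb{E}[\boldsymbol{1}(A=a)-e_a(X)\mid X]=0$, and observe that the surviving piece exactly cancels the plug-in sensitivity. Your version is slightly more careful in justifying differentiation under the integral via positivity and dominated convergence, which the paper leaves implicit.
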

We now present our estimation procedure based on cross-fitting:
\begin{definition}[\textbf{Debiased Causal Bound Estimators}]\label{def:cross-fit}
    \normalfont
    Fix a functional $\varphi$ and an $f$-divergence. Let $\ell^{\mathrm{db}}$ and $\mathcal{R}^{\mathrm{db}}$ be as in Def.~\ref{def:debiased-risk}. The debiased estimator of the upper causal bound $\theta_{u}(a,x)$ for any $(a,x) \in \mathcal{A} \times \mathcal{X}$ is constructed as follows:
    \begin{enumerate}[leftmargin=*]
        \item Randomly split the dataset $\mathcal{D}$ (with size $n$) into $K$ disjoint folds  $\mathcal{D}_1,\cdots,\mathcal{D}_K$. 
        \item For each $k$ fold, learn $\widehat{e}^{k}_a$ using $\mathcal{D}_{-k} \triangleq \mathcal{D} \setminus \mathcal{D}_{k}$ for all $a \in \mathcal{A}$.
        \item For each fold $k$, solve $\widehat{\vartheta}_k \triangleq (\hat \beta_k,\hat \gamma_k) \in \arg\min_{\beta, \gamma} \sum_{i \mid V_i \in \mathcal{D}_k} \ell^{\mathrm{db}}(V_i; (\beta,\gamma),\widehat{e}^k)$. 
        \item For each fold $k$, evaluate 
        \begin{align}
            & \widehat h_k(a,x) \triangleq h_{\widehat\beta_k}(a,x),\qquad \widehat u_k(a,x) \triangleq u_{\widehat\gamma_k}(a,x),\\ 
            &\widehat\lambda_k(a,x) \triangleq \exp\{\widehat h_k(a,x)\}, \qquad  \widehat\eta^{\,k}_f(a,x) \triangleq B_f\big(\widehat e^{\,k}_a(x)).
        \end{align}
        \item For each fold $k$ and each $i \in \mathcal{D}_k$, evaluate $Z^{k}_i \triangleq g^{\ast}\big(\tfrac{\varphi(Y_i) - \widehat u_k(A_i,X_i)}{\widehat\lambda_k(A_i,X_i)}\big)$, and learn a regressor $\widehat{m}_{k} $ by regressing $Z^{k}_i$ onto $(A,X)$ using $\mathcal{D}_k$. 
        \item Evaluate $\widehat{\theta}^{(k)}_{\mathrm{up}}(a,x) \triangleq \widehat\lambda_k(a,x)\big(
        \widehat\eta^{\,k}_f(a,x) + \widehat m_k(a,x) \big) + \widehat u_k(a,x)$ and return $\widehat{\theta}_{\mathrm{up}}(a,x)  \triangleq (1/K)\sum_{k=1}^{K}\widehat{\theta}^{(k)}_{\mathrm{up}}(a,x) $.
    \end{enumerate}
\end{definition}

We now analyze the error of the proposed debiased estimator under following set of assumptions: 
\begin{assumption}[\textbf{Regularity-1}]\label{assumption:regularity-condition}
    \normalfont
    Let $e \triangleq \{e_a(\cdot): a \in \mathcal{A}\}$ be the true propensity score, $\vartheta \triangleq (\beta,\gamma)$ and $\vartheta_0 \triangleq (\beta_0,\gamma_0) \in \arg\min_{\beta,\gamma} \mathcal{R}^{\mathrm{db}}(\beta,\gamma; e)$. 
    \begin{enumerate}[leftmargin=*]
        \item \textbf{Positivity:} $e_a(x) \in [c,1-c]$ for some constant $ 0< c  < 1/2$ for all $a,x \in \mathcal{A} \times \mathcal{X}$. 
        \item \textbf{f-divergence regularity:} $f$ is convex and twice continuously differentiable; and the induced radius $B_f(e_a(x))$ is twice continuously differentiable on $[c,1-c]$, with bounded first and second derivative; i.e., $\sup_{e \in [c,1-c]} |B_f(e)| + |B'_f(e)| + |B''_{f}(e)| < \infty $. 
        \item \textbf{Loss regularity:} For each fixed $e \in [c,1-c]$, the map $\vartheta \mapsto \ell^{\mathrm{db}}(V;\vartheta,e)$ is twice continuously differentiable, with 
        \begin{align*}
            \sup_{\vartheta,e} \|\ell^{\mathrm{db}}(V;\vartheta,e)\|^2_2 < \infty, \;\;\; \sup_{\vartheta,e} \|\nabla_\theta \ell^{\mathrm{db}}(V;\vartheta,e)\|^2_2 < \infty, \;\;\; \sup_{\theta,e}\|\nabla^2_{\vartheta\vartheta}\ell^{\mathrm{db}}(V;\vartheta,e)\|^2_2 < \infty.
        \end{align*}
        \item \textbf{Higher-order smoothness:} Let $H(\vartheta; e) \triangleq \nabla^{2}_{\vartheta\vartheta}R^{\mathrm{db}}(\vartheta; e)$. There exists a neighborhood $\varTheta_0$ of $\vartheta$ containing $\vartheta_0$ and constants $0 < \kappa \leq \kappa_2 < \infty$ such that 
        \begin{align}
            \kappa_1 \mathbf{I} \;\preceq\; H(\vartheta; e) \;\preceq\; \kappa_2 \mathbf{I} \quad\text{for all }\vartheta\in\varTheta_0.
        \end{align}
        \item \textbf{Uniform LLN:} For each fold $k$, define the empirical risk w.r.t. $\ell^{\mathrm{db}}$ with the training fold is $\widehat{R}^{\mathrm{db}}_k(\vartheta, \widehat{e}^k)$. Then, we have a uniform law-of-large-number:
        \begin{align}
            \sup_{\vartheta} \big| \widehat R_k^{\mathrm{db}}(\vartheta; \widehat e^{\,k}) - R^{\mathrm{db}}(\vartheta; \widehat e^{\,k}) \big| = O_p(n^{-1/2}),
        \end{align}
    \end{enumerate}
\end{assumption}

\begin{assumption}[\textbf{Regularity-2}]\label{assumption:regularity-condition-2}
    \normalfont
    For $\vartheta \triangleq (\beta, \gamma)$, let $Z_{\vartheta} \triangleq g^{\ast}\big( \tfrac{\varphi(Y) - u_{\gamma}(A,X)\}}{ \exp(h_\beta(A,X))}\big)$ and $m_{\vartheta}(a,x) \triangleq \mathbb{E}[Z_{\vartheta} \mid A=a,X=x]$. Let $\widehat{m}_k$ be the estimate for $m_{\vartheta}$ using the $k$-fold data.
    \begin{enumerate}[leftmargin=*]
        \item \textbf{Bounded nuisances:} $h_{\beta_0}, u_{\gamma_0}, h_{\hat{\beta}_k}, u_{\hat{\gamma}_k}$ are bounded by some constant $M$. 
        \item \textbf{Lipschitz parameterization:} The map $(\beta,\gamma) \mapsto (h_{\beta}(a,x), u_{\gamma}(a,x)) $ is Lipschitz in $\vartheta = (\beta,\gamma)$ uniformly over all $(a,x)$ with constant $L_{\vartheta}$; i.e., 
        \begin{align}
            \sup_{a,x}\big(|h_{\beta}(a,x)-h_{\beta'}(a,x)| + |u_{\gamma}(a,x)-u_{\gamma'}(a,x)| \big) \le L_\vartheta\|\vartheta-\vartheta'\|.
        \end{align}
        \item \textbf{Smoothness of $g^{\ast}$:} The convex conjugate $g^{\ast}$ is continuously differentiable with bounded derivative; i.e., $\sup_{t \in \mathcal{T}}|(g^{\ast})'(t)| < \infty$ where $\mathcal{T}$ is a range where $g^{\ast}(t)$ is well-defined. 
        \item \textbf{Assumption on regression:} $\|\widehat m_k - m_{\widehat\vartheta_k}\|_2 = O_p(s_n)$, where  $s_n$ is some sequence $s_n \rightarrow 0$; There exists a constant $L_m$ s.t. $\|m_{\vartheta} - m_{\vartheta'}\|_2 \leq L_m\|\vartheta-\vartheta'\|$. 
        \item \textbf{Correct model choice}: Let $\overline{\theta}^{\star}_{\varphi,0}(a,x) \triangleq \mathbb{E}[\ell(V; (\beta_0,\gamma_0),e) \mid A=a,X=x]$. Then $\overline{\theta}^{\star}_{\varphi,0}(a,x) \triangleq \mathbb{E}[\ell(V; (\beta_0,\gamma_0),e) \mid A=a,X=x] = \overline{\theta}_{\varphi}(a,x)$ for all $(a,x)$. 
    \end{enumerate}
\end{assumption}

We now formalize the convergence rate of the proposed debiased estimator:
\begin{theorem}[\textbf{Error Analysis}]\label{thm:error-analysis}
    \normalfont
    Under Assumption~\ref{assumption:regularity-condition}, fix a fold $k$. Let $e$ be the true propensity score, and $\vartheta_0 \triangleq (\beta_0,\gamma_0) \in \arg\min_{\vartheta} \mathcal{R}^{\mathrm{db}}(\vartheta; e)$ for $\vartheta \triangleq (\beta,\gamma)$. Let $\widehat{\vartheta}_k \triangleq (\widehat{\beta}_k, \widehat{\gamma}_k)$ be the minimizer from Step 3 in Def.~\ref{def:cross-fit} with $\widehat{e}^k$. Define $r_n \triangleq O_p(\| \widehat{e}^k - e \|_2) $. Then, 
    \begin{align}\label{eq:thm:error-analysis-1}
        \|\widehat\vartheta_k - \vartheta_0\|^2_{2} = O_p\big(n^{-1/2} + r_n^2\big).
    \end{align}
    Furthermore, let $Z_{\vartheta} \triangleq g^{\ast}\big( \tfrac{\varphi(Y) - u_{\gamma}(A,X)}{ \exp(h_\beta(A,X))}\big)$ and $m_{\vartheta}(a,x) \triangleq \mathbb{E}[Z_{\vartheta} \mid A=a,X=x]$. Define $s_n \triangleq O_p(\|\widehat{m}_k - m_{\widehat{\vartheta}_k} \|_2)$ where $\widehat{m}_k$ is from Step 5 in Def.~\ref{def:cross-fit}. Let $\widehat{\theta}^{(k)}_{\mathrm{up}}$ be the estimated upper causal bound for the fold $k$. Under additional Assumption~\ref{assumption:regularity-condition-2},
    \begin{align}\label{eq:thm:error-analysis-2}
        \big\| \widehat{\theta}^{(k)}_{\mathrm{up}} - \theta_{\mathrm{up}} \big\|_2^2 = O_p\big( n^{-1/2} + r_n^2 + s_n^2 \big).
    \end{align}
\end{theorem}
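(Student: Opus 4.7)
The plan is to prove the two displays separately, with the parameter-rate bound \eqref{eq:thm:error-analysis-1} feeding into the bound-estimator rate \eqref{eq:thm:error-analysis-2}. Both arguments follow the now-standard Neyman-orthogonal M-estimation recipe: use the uniform LLN (Assumption~\ref{assumption:regularity-condition}.5) to control the empirical process, invoke Lemma~\ref{lemma:orthogonality} to reduce nuisance error from first order to second order, and then exploit Hessian strong convexity (Assumption~\ref{assumption:regularity-condition}.4) to convert excess risk into a squared-parameter rate.

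For \eqref{eq:thm:error-analysis-1}, I would run an excess-risk comparison at the true nuisance $e$. Because $\widehat{\vartheta}_k$ minimizes $\widehat{R}^{\mathrm{db}}_k(\cdot;\widehat{e}^{\,k})$ and $\vartheta_0$ minimizes $R^{\mathrm{db}}(\cdot;e)$, I decompose
\begin{align*}
R^{\mathrm{db}}(\widehat{\vartheta}_k;e) - R^{\mathrm{db}}(\vartheta_0;e)
&= \bigl[R^{\mathrm{db}}(\widehat{\vartheta}_k;e) - R^{\mathrm{db}}(\widehat{\vartheta}_k;\widehat{e}^{\,k})\bigr]
 + \bigl[R^{\mathrm{db}}(\widehat{\vartheta}_k;\widehat{e}^{\,k}) - \widehat{R}^{\mathrm{db}}_k(\widehat{\vartheta}_k;\widehat{e}^{\,k})\bigr] \\
&\quad + \bigl[\widehat{R}^{\mathrm{db}}_k(\widehat{\vartheta}_k;\widehat{e}^{\,k}) - \widehat{R}^{\mathrm{db}}_k(\vartheta_0;\widehat{e}^{\,k})\bigr]
 + \bigl[\widehat{R}^{\mathrm{db}}_k(\vartheta_0;\widehat{e}^{\,k}) - R^{\mathrm{db}}(\vartheta_0;\widehat{e}^{\,k})\bigr] \\
&\quad + \bigl[R^{\mathrm{db}}(\vartheta_0;\widehat{e}^{\,k}) - R^{\mathrm{db}}(\vartheta_0;e)\bigr].
\end{align*}
The third bracket is $\leq 0$ by optimality of $\widehat{\vartheta}_k$ on the empirical objective. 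The second and fourth brackets are $O_p(n^{-1/2})$ by the uniform LLN. The first and fifth are nuisance-misspecification terms: by Lemma~\ref{lemma:orthogonality}, the first-order directional derivative of $R^{\mathrm{db}}(\vartheta;\cdot)$ along $\widehat{e}^{\,k}-e$ vanishes at $e$, so a second-order Taylor expansion along the path $e_t = e + t(\widehat{e}^{\,k}-e)$ yields $O_p(\|\widehat{e}^{\,k}-e\|^2_2)=O_p(r_n^2)$, with the remainder integrability guaranteed by the uniform bounds in Assumption~\ref{assumption:regularity-condition}.2--3. Strong convexity (Assumption~\ref{assumption:regularity-condition}.4) then gives $\tfrac{\kappa_1}{2}\|\widehat{\vartheta}_k-\vartheta_0\|^2_2 \leq R^{\mathrm{db}}(\widehat{\vartheta}_k;e) - R^{\mathrm{db}}(\vartheta_0;e) = O_p(n^{-1/2}+r_n^2)$.

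For \eqref{eq:thm:error-analysis-2}, I would use the dual-plus-correct-model identity $\theta_{\mathrm{up}}(a,x) = \lambda_0(a,x)\bigl(\eta_f(a,x)+m_{\vartheta_0}(a,x)\bigr) + u_0(a,x)$ (Assumption~\ref{assumption:regularity-condition-2}.5 together with the dual form \eqref{eq:dual2}) and expand
\begin{align*}
\widehat{\theta}^{(k)}_{\mathrm{up}} - \theta_{\mathrm{up}}
&= \bigl(\widehat{\lambda}_k-\lambda_0\bigr)\bigl(\eta_f + m_{\vartheta_0}\bigr)
+ \lambda_0\bigl(\widehat{\eta}^{\,k}_f - \eta_f\bigr)
+ \lambda_0\bigl(\widehat{m}_k - m_{\widehat{\vartheta}_k}\bigr) \\
&\quad + \lambda_0\bigl(m_{\widehat{\vartheta}_k} - m_{\vartheta_0}\bigr)
+ \bigl(\widehat{u}_k - u_0\bigr) + \text{(h.o.t.)}.
\end{align*}
The Lipschitz parameterization (Assumption~\ref{assumption:regularity-condition-2}.2) with boundedness (Assumption~\ref{assumption:regularity-condition-2}.1) controls the $\lambda$- and $u$-differences by $O_p(\|\widehat{\vartheta}_k-\vartheta_0\|)$; the $\eta_f$-difference is $O_p(r_n)$ via the bounded derivative from Assumption~\ref{assumption:regularity-condition}.2; the $m_\vartheta$-Lipschitz property and the regression assumption in~\ref{assumption:regularity-condition-2}.4 handle the $m$-differences by $O_p(\|\widehat{\vartheta}_k-\vartheta_0\|+s_n)$. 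Taking squared $L_2$-norms and applying \eqref{eq:thm:error-analysis-1} yields $O_p(n^{-1/2}+r_n^2+s_n^2)$.

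The main obstacle is the second-order Taylor step inside the nuisance-direction decomposition: one must show, uniformly in a shrinking neighborhood of $\vartheta_0$, that the remainder $R^{\mathrm{db}}(\vartheta;\widehat{e}^{\,k})-R^{\mathrm{db}}(\vartheta;e)$ is bounded by a constant times $\|\widehat{e}^{\,k}-e\|^2_2$, and that the perturbed path $e + t(\widehat{e}^{\,k}-e)$ remains in the positivity region $[c,1-c]$ so that the derivatives of $B_f$ stay bounded. This requires combining Assumption~\ref{assumption:regularity-condition}.2 (bounded $B_f$, $B_f'$, $B_f''$) with the pointwise-in-$\vartheta$ Hessian bounds of $\ell^{\mathrm{db}}$ from Assumption~\ref{assumption:regularity-condition}.3, and then an envelope-type dominated convergence argument. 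Cross-fitting is what makes this step clean: the independence between $\widehat{e}^{\,k}$ and $\mathcal{D}_k$ lets one condition on $\widehat{e}^{\,k}$ and treat it as a deterministic perturbation when computing the expectation underlying $R^{\mathrm{db}}$.
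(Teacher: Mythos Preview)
Your proposal is correct and follows essentially the same route as the paper: the same five-term excess-risk telescoping for \eqref{eq:thm:error-analysis-1} (with the paper also exploiting that Lemma~\ref{lemma:orthogonality} holds for \emph{all} $\vartheta$, so both nuisance brackets are $O_p(r_n^2)$), and the same component-wise $L_2$ bounding of $\widehat{\theta}^{(k)}_{\mathrm{up}}-\theta_{\mathrm{up}}$ for \eqref{eq:thm:error-analysis-2}. The only cosmetic difference is that the paper writes the $m$-term as $\widehat{\lambda}_k(\widehat{m}_k-m_{\vartheta_0})$ before splitting, whereas you use $\lambda_0$ and absorb the cross terms into your ``h.o.t.''; both are bounded, so the conclusion is identical.
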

Thm.~\ref{thm:error-analysis} demonstrates the sample efficiency of our debiased estimator. Even when the nuisance components (the propensity score and the pseudo-outcome regression) converge slowly (e.g., at rate $n^{-1/4}$), both the dual parameters $\widehat{\vartheta}_k$ and the upper-bound estimator $\widehat{\theta}_{\mathrm{up}}^{(k)}$ achieve the faster rate (e.g., at rate $n^{-1/2}$). Specifically, the sample-efficiency gain is of order $O_p(r^2_n)$ rather than $O_p(r_n)$ that would result from using the non-debiased risk function (Eq.~\eqref{eq:risk}). This improvement stems from the orthogonal construction of the debiased risk, which eliminates first-order sensitivity to propensity score errors (Lemma~\ref{lemma:orthogonality}). Consequently, nuisance components can be estimated using flexible machine learning methods while the estimator retains faster convergence rates.

\subsection{Ensemble Bound Aggregation}\label{sec:5.1}
Different $f$-divergences encode distinct notions of distributional discrepancy, and no single divergence uniformly dominates others in tightness across all data distributions. Consequently, we estimate bounds using a collection $\mathcal{F}$ of $f$-generators (e.g., $\mathcal{F} = \{f_{\mathrm{KL}}, f_{\mathrm{TV}}, f_{\chi^2}, \cdots \}$), yielding a family of upper bound estimates $\widehat{\boldsymbol{\theta}}_{\mathrm{up}} \triangleq \{\widehat{\theta}_{\mathrm{up},f}: f \in \mathcal{F}\}$ where $\widehat{\theta}_{\mathrm{up},f}$ is an upper-bound estimate for a fixed $f \in \mathcal{F}$. Let $\widehat{\boldsymbol{\theta}}_{\mathrm{lo}}$ be defined similarly. To construct the tightest valid interval, we aggregate these bounds while accounting for potential finite-sample violations due to estimation error and numerical instability. 

Our aggregation strategy addresses this challenge via order statistics:
\begin{definition}[\textbf{$k$-th order statistics aggregator}]\label{def:kth}
\normalfont
    Let $\widehat{\boldsymbol{\theta}}_{\mathrm{lo}}, \widehat{\boldsymbol{\theta}}_{\mathrm{up}}$ 
    denote candidate lower and upper bounds, respectively, with  $n_f \triangleq \lvert \widehat{\boldsymbol{\theta}}_{\mathrm{lo}} \rvert  = \lvert \widehat{\boldsymbol{\theta}}_{\mathrm{up}} \rvert$. 
    For $k \in \{1,\dots,n_f\}$, the \emph{$k$-th order-statistics aggregator} ($k$-agg) is defined as the pair $(\widehat{\theta}_{\mathrm{lo}}^{k}, \widehat{\theta}_{\mathrm{up}}^{k})$, where $\widehat{\theta}_{\mathrm{lo}}^{k}$ is the $k$-th largest element of  $\widehat{\boldsymbol{\theta}}_{\mathrm{lo}}$ and $\widehat{\theta}_{\mathrm{up}}^{k}$ is the $k$-th smallest element of $\widehat{\boldsymbol{\theta}}_{\mathrm{up}}$.
\end{definition}

The following lemma formalizes the validity condition for the $k$-th order aggregator:

\begin{lemma}[\textbf{Valid Coverage under Partial Correctness}]\label{lemma:valid-coverage}
    \normalfont
    For a fixed $(a,x)$, 
    \begin{itemize}[leftmargin=*]
        \item $\widehat{\theta}^{k}_{\mathrm{lo}}(a,x) \le \theta(a,x)$ iff at least $(n_f - k + 1)$ elements of $\widehat{\boldsymbol{\theta}}_{\mathrm{lo}}$ are smaller or equal to $\theta(a,x)$. 
        \item $\widehat{\theta}^{k}_{\mathrm{up}}(a,x) \ge \theta(a,x)$ iff at least $(n_f - k + 1)$ elements of $\widehat{\boldsymbol{\theta}}_{\mathrm{up}}$ are greater or equal to $\theta(a,x)$. 
    \end{itemize}
\end{lemma}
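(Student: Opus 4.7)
The plan is to prove both bullet points by unpacking the definition of order statistics and reducing the iff to an elementary counting argument; by symmetry, it suffices to treat the lower-bound statement carefully and then transcribe the argument for the upper-bound case.

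Fix $(a,x)$ and write $\theta \triangleq \theta(a,x)$. Sort the elements of $\widehat{\boldsymbol{\theta}}_{\mathrm{lo}}$ in weakly decreasing order as $\widehat{\theta}^{(1)}_{\mathrm{lo}} \ge \widehat{\theta}^{(2)}_{\mathrm{lo}} \ge \cdots \ge \widehat{\theta}^{(n_f)}_{\mathrm{lo}}$, so that by Def.~\ref{def:kth} the $k$-th order-statistics aggregator satisfies $\widehat{\theta}^{k}_{\mathrm{lo}} = \widehat{\theta}^{(k)}_{\mathrm{lo}}$. Let $N_{\mathrm{lo}} \triangleq \#\{ j : \widehat{\theta}^{(j)}_{\mathrm{lo}} \le \theta\}$ denote the number of elements of $\widehat{\boldsymbol{\theta}}_{\mathrm{lo}}$ that do not exceed $\theta$; because the list is sorted in decreasing order, these $N_{\mathrm{lo}}$ elements are exactly those at the tail positions $n_f - N_{\mathrm{lo}} + 1, \ldots, n_f$.

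For the forward direction, I would observe that if $\widehat{\theta}^{k}_{\mathrm{lo}} = \widehat{\theta}^{(k)}_{\mathrm{lo}} \le \theta$, then by monotonicity every position $k, k+1, \ldots, n_f$ also satisfies $\widehat{\theta}^{(j)}_{\mathrm{lo}} \le \widehat{\theta}^{(k)}_{\mathrm{lo}} \le \theta$, yielding $N_{\mathrm{lo}} \ge n_f - k + 1$. For the converse, if $N_{\mathrm{lo}} \ge n_f - k + 1$, the corresponding tail of length $N_{\mathrm{lo}}$ starts at index $n_f - N_{\mathrm{lo}} + 1 \le k$, so position $k$ is inside the tail, giving $\widehat{\theta}^{(k)}_{\mathrm{lo}} \le \theta$. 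This establishes the iff for the lower bound.

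The upper-bound statement follows by the symmetric argument after sorting $\widehat{\boldsymbol{\theta}}_{\mathrm{up}}$ in weakly increasing order, so that $\widehat{\theta}^{k}_{\mathrm{up}}$ is the $k$-th entry and the claim becomes: $\widehat{\theta}^{k}_{\mathrm{up}} \ge \theta$ iff the last $n_f - k + 1$ entries of the increasing sort are $\ge \theta$. Alternatively, I could deduce it from the first bullet by applying the lower-bound claim to the negated collection $-\widehat{\boldsymbol{\theta}}_{\mathrm{up}}$ against the threshold $-\theta$, since negation exchanges increasing and decreasing orders and reverses inequalities.

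There is essentially no hard step: the content is purely combinatorial order-statistic bookkeeping, and the only thing to get right is the index accounting $n_f - k + 1$ arising from the fact that the $k$-th element from one end partitions the list into blocks of sizes $k-1$ and $n_f - k + 1$. I would therefore keep the proof short and verify the edge cases $k=1$ (reduces to "some element is below/above $\theta$") and $k=n_f$ (reduces to "all elements are below/above $\theta$") as a sanity check.
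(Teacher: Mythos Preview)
Your proof is correct and follows essentially the same order-statistic counting argument as the paper, just sorting the lower bounds in decreasing order rather than increasing order with a re-indexing. One minor slip in your closing sanity check: the edge cases are swapped, since $k=1$ gives the maximum of $\widehat{\boldsymbol{\theta}}_{\mathrm{lo}}$ and so reduces to ``all elements $\le \theta$'' (needing $n_f - 1 + 1 = n_f$), while $k=n_f$ gives the minimum and reduces to ``some element $\le \theta$''.
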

Lemma~\ref{lemma:valid-coverage} guarantees that the $k$-agg produces valid bounds as long as at least $(n_f - k + 1)$ divergences yield correct estimates. This robustness property is critical: even if a minority of divergences fail (due to finite-sample violations or numerical issues), the aggregator automatically discards outliers by selecting the $k$-th order statistic. In practice, the $k$-agg is implemented by initializing $k = 1$ (selecting the tightest bounds) and iteratively incrementing $k \leftarrow k+1 $ until $\widehat{\theta}^{k}_\mathrm{lo} \leq \widehat{\theta}^{k}_{\mathrm{up}}$ is satisfied.

\subsection{Debiased Estimation for Average Causal Effects}
When covariates are absent ($X = \emptyset$), the estimation procedure simplifies substantially. The marginal propensity score $e_a \triangleq \Pr(A=a)$ can be estimated at rate $o_P(n^{-1/2})$ via sample proportions, eliminating the need for the debiasing correction in Eq.~\eqref{eq:debiasedness}. We now specialize our framework to this covariate-free setting.
\begin{definition}[\textbf{Risk Function (Marginal Case)}]\label{def:risk-emptyX}
    \normalfont
    Let $h \triangleq \{h_a \in \mathbb{R}^{+}: a \in \mathcal{A}\}$ and $u \triangleq \{u_a \in \mathbb{R}^{+}: a \in \mathcal{A}\}$. Let $V \triangleq (A,Y)$ and $\eta^a_f \triangleq B_f(e_a)$. A risk function for causal bound when $X=\emptyset$ is 
    \begin{align}\label{eq:risk-ate}
        \mathcal{R}(h,u;e) \triangleq \mathbb{E}_{P}[\ell(V;(h,u),\eta_f)],
    \end{align}
    where 
    \begin{align}\label{eq:loss-ate}
        \ell(V; (h,u), e ) \triangleq \exp({h_A})\big\{\eta^a_f + g^{\star}\big( \tfrac{\varphi(Y) - u_{A}}{\exp({h_A})} \big) \big\} + u_{A}. 
    \end{align}
\end{definition}
The estimator for the marginal case directly minimizes the risk in Def.~\ref{def:risk-emptyX} without debiasing: 
\begin{definition}[\textbf{Bound Estimator (Marginal Case)}]\label{def:bound-average}
    \normalfont
    Fix a functional $\varphi$ and an $f$-divergence. Let $\ell$ and $\mathcal{R}$ be as in Def.~\ref{def:risk-emptyX}. Let the observed sample be i.i.d. $\{V_i \triangleq (A_i, Y_i)\}_{i=1}^{n}$. Define $n_a \triangleq \sum_{i=1}^{n}\boldsymbol{1}(A_i = a)$. The estimator of the upper causal bound $\overline{\theta}_{\varphi} (a)$ for any $a \in \mathcal{A}$ is constructed as follows:
    \begin{enumerate}[leftmargin=*]
        \item Estimate the marginal propensity $\widehat{e}_a \triangleq n_a /n $.
        \item Solve $\widehat{\vartheta} \triangleq (\widehat{h},\widehat{u}) \in \arg\min_{h,u}\sum_{i=1}^{n}\ell(V_i; (h,u), \widehat{e})$. 
        \item Evaluate $\widehat{\lambda}_a \triangleq \exp(\widehat{h}_a)$. 
        \item Define the pseudo-outcome $\widehat Z_i \equiv g^{\ast}\!\Big(\tfrac{\varphi(Y_i)-\widehat {u}_{A_i}}{\widehat\lambda_{A_i}}\Big)$ and evaluate $\widehat{m}_a \triangleq (1/n_a)\sum_{i:A_i = a}\widehat{Z}_i$. 
        \item Return $\widehat\theta_{\varphi,f}(a)
\equiv \widehat\lambda_a\Big(\widehat\eta_{f,a}+\widehat m_a\Big)+\widehat u_a$, for $a \in \mathcal{A}$. 
    \end{enumerate}
\end{definition}

We now analyze the error of the proposed debiased estimator under following set of assumptions: 
\begin{assumption}[\textbf{Regularity (Marginal Case)}]\label{assumption:regularity-condition-3}
    \normalfont
    Let $e \triangleq \{e_a: a \in \mathcal{A}\}$ where $e_a \triangleq \Pr(A=a)$, $\vartheta \triangleq (\beta,\gamma)$ and $\vartheta_0 \in \arg\min_{\vartheta} \mathcal{R}(\vartheta; e)$ where $\vartheta \triangleq (h,u) \triangleq  \{(h_a,u_a): a \in \mathcal{A}\}$. Let $Z_{\vartheta} \triangleq g^{\ast}\big( \tfrac{\varphi(Y) - u_A}{\exp(h_A)}\big)$. Let $m_{\vartheta,a} \triangleq \mathbb{E}_{P_a}[Z_{\vartheta}]$. 
    \begin{enumerate}[leftmargin=*]
        \item \textbf{Positivity:} $e_a \in [c,1-c]$ for some constant $ 0< c  < 1/2$ for all $a \in \mathcal{A} $. 
        \item \textbf{f-divergence regularity:} $f$ is convex and twice continuously differentiable; and the induced radius $B_f$ is twice continuously differentiable on $[c,1-c]$ with bounded derivatives; i.e., $\sup_{e \in [c,1-c]} |B_f(e)| + |B'_f(e)| + |B''_{f}(e)| < \infty $. 
        \item \textbf{Loss regularity:} For each fixed $e \in [c,1-c]$, the map $\vartheta \mapsto \ell(V;\vartheta,e)$ is twice continuously differentiable, with 
        \begin{align*}
            \sup_{\vartheta,e} \|\ell(V;\vartheta,e)\|^2_2 < \infty, \;\;\; \sup_{\vartheta,e} \|\nabla_\theta \ell(V;\vartheta,e)\|^2_2 < \infty, \;\;\; \sup_{\theta,e}\|\nabla^2_{\vartheta\vartheta}\ell(V;\vartheta,e)\|^2_2 < \infty.
        \end{align*}
        \item \textbf{Higher-order smoothness:} Let $H(\vartheta; e) \triangleq \nabla^{2}_{\vartheta\vartheta}R(\vartheta; e)$. There exists a neighborhood $\varTheta_0$ of $\vartheta$ containing $\vartheta_0$ and constants $0 < \kappa \leq \kappa_2 < \infty$ such that 
        \begin{align}
            \kappa_1 \mathbf{I} \;\preceq\; H(\vartheta; e) \;\preceq\; \kappa_2 \mathbf{I} \quad\text{for all }\vartheta\in\varTheta_0.
        \end{align}
        \item \textbf{Uniform LLN:} Define the empirical risk w.r.t. $\ell$ with the training fold is $\widehat{R}(\vartheta, \widehat{e})$. Then, we have a uniform law-of-large-number:
        \begin{align}
            \sup_{\vartheta} \big| \widehat R(\vartheta; \widehat e) - R(\vartheta; \widehat e) \big| = O_p(n^{-1/2}).
        \end{align}
        \item \textbf{Bounded parameters:} $h_a, u_a$ are bounded by some constant $M$. 
        \item \textbf{Smoothness of $g^{\ast}$:} The convex conjugate $g^{\ast}$ is continuously differentiable with bounded derivative; i.e., $\sup_{t \in \mathcal{T}}| (g^{\ast})'(t)| < \infty$ where $\mathcal{T}$ is a range where $g^{\ast}(t)$ is well-defined. 
    \end{enumerate}
\end{assumption}

The following theorem establishes the convergence rate for the marginal case:
\begin{theorem}[\textbf{Error Analysis (Marginal Case)}]\label{thm:error-ate}
\normalfont
Assume Assumption~\ref{assumption:regularity-condition-3}. Let $e_0\triangleq\{e_{0,a}:a\in\mathcal A\}$
with $e_{0,a}\equiv\Pr(A=a)$ and let $\widehat e_a\equiv n_a/n$. Let
$\vartheta_0\in\arg\min_{\vartheta\in\Theta} R(\vartheta;e_0)$ and
$\widehat\vartheta\in\arg\min_{\vartheta\in\Theta}\widehat R_n(\vartheta;\widehat e)$.
Let $\overline\theta_\varphi$ and $\widehat\theta_\varphi$ be the population target and the estimator
defined in Def.~\ref{def:bound-average} (marginal case).
Then
\begin{align}
\|\widehat\vartheta-\vartheta_0\|_2^2 = O_p(n^{-1/2}),
\qquad
\|\widehat\theta_\varphi-\overline\theta_\varphi\|_2^2 = O_p(n^{-1/2}).
\end{align}
\end{theorem}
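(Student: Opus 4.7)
The plan is to treat $\widehat\vartheta$ as a standard M-estimator with a plug-in nuisance $\widehat e$ and exploit the fact that, in the marginal case, $\widehat e_a = n_a/n$ is a $\sqrt{n}$-consistent parametric estimator. Concretely, a binomial CLT gives $\|\widehat e - e_0\|_\infty = O_p(n^{-1/2})$, so no Neyman-orthogonal correction is needed: the nuisance error is already negligible relative to the target $n^{-1/2}$ rate on the squared norm, i.e.\ $\|\widehat e - e_0\|_\infty^2 = O_p(n^{-1}) = o(n^{-1/2})$. This is the structural simplification over Thm.~\ref{thm:error-analysis}.

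First I would bound $\|\widehat\vartheta - \vartheta_0\|_2^2$. Starting from the first-order optimality condition $\nabla_\vartheta \widehat R_n(\widehat\vartheta;\widehat e)=0$, I expand around $(\vartheta_0,e_0)$ and decompose
\begin{align*}
\nabla_\vartheta \widehat R_n(\widehat\vartheta;\widehat e)
&= \underbrace{\nabla_\vartheta R(\vartheta_0;e_0)}_{=0}
+ \underbrace{\bigl[\nabla_\vartheta \widehat R_n(\vartheta_0;\widehat e) - \nabla_\vartheta R(\vartheta_0;\widehat e)\bigr]}_{(\mathrm{I})}\\
&\quad + \underbrace{\bigl[\nabla_\vartheta R(\vartheta_0;\widehat e) - \nabla_\vartheta R(\vartheta_0;e_0)\bigr]}_{(\mathrm{II})}
+ \underbrace{\bigl[\nabla_\vartheta \widehat R_n(\widehat\vartheta;\widehat e) - \nabla_\vartheta \widehat R_n(\vartheta_0;\widehat e)\bigr]}_{(\mathrm{III})}.
\end{align*}
Term (I) is $O_p(n^{-1/2})$ by the uniform LLN assumption together with the smoothness of $\nabla_\vartheta \ell$ (Assumption~\ref{assumption:regularity-condition-3}). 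Term (II) is $O_p(\|\widehat e - e_0\|) = O_p(n^{-1/2})$ by the boundedness of the mixed derivative $\nabla_\vartheta \nabla_e \ell$, which follows from the joint smoothness of $\ell$ in $(\vartheta,e)$ and boundedness of $B_f$, $B_f'$. Term (III) equals $H(\widetilde\vartheta;\widehat e)(\widehat\vartheta-\vartheta_0)$ for some $\widetilde\vartheta$ between $\widehat\vartheta$ and $\vartheta_0$. Invoking the Hessian lower bound $H \succeq \kappa_1 \mathbf I$ on $\varTheta_0$ (after verifying $\widehat\vartheta\in\varTheta_0$ with probability tending to one via consistency obtained from the uniform LLN), I rearrange to obtain $\|\widehat\vartheta - \vartheta_0\|_2 \lesssim \kappa_1^{-1}\cdot O_p(n^{-1/2})$, whence $\|\widehat\vartheta-\vartheta_0\|_2^2 = O_p(n^{-1/2})$ (indeed $O_p(n^{-1})$, which is stronger than stated).

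Next, for the bound on $\|\widehat\theta_\varphi - \overline\theta_\varphi\|_2^2$, I use the definition $\widehat\theta_{\varphi,f}(a) = \widehat\lambda_a(\widehat\eta_{f,a}+\widehat m_a)+\widehat u_a$ and the analogous population representation $\overline\theta_\varphi(a) = \lambda_{0,a}(\eta_{0,f,a}+m_{0,a})+u_{0,a}$, where $\lambda_{0,a}=\exp(h_{0,a})$ and $m_{0,a}=\mathbb E_{P_a}[g^\ast((\varphi(Y)-u_{0,a})/\lambda_{0,a})]$. I add and subtract to split the difference into (a) a term controlled by $\|\widehat\vartheta-\vartheta_0\|$ via Lipschitz continuity of the map $\vartheta\mapsto(\lambda_a,u_a,m_{\vartheta,a})$ under bounded parameters, smoothness of $g^\ast$, and boundedness of $B_f'$; and (b) a stochastic term $\widehat m_a - m_{\widehat\vartheta,a}$, which is a conditional sample mean over $n_a$ i.i.d.\ draws of a bounded random variable (boundedness of $g^\ast$ follows from the bounded argument on the compact parameter domain). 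A Hoeffding/CLT argument combined with $n_a/n \to e_{0,a}>0$ gives $|\widehat m_a - m_{\widehat\vartheta,a}|=O_p(n^{-1/2})$. Combining (a) and (b) with the first claim yields $\|\widehat\theta_\varphi - \overline\theta_\varphi\|_2^2 = O_p(n^{-1/2})$.

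The main obstacle I anticipate is the careful handling of Term (II) and the Lipschitz analysis in step (a): one must verify that differentiating $\ell$ in $e$ through $\eta_f(a,X)=B_f(e_a(X))$ yields a bounded, integrable derivative, and that $m_{\vartheta,a}$ inherits Lipschitz dependence on $\vartheta$ from the smoothness of $g^\ast$ and the Lipschitz parameterization in $(h_a,u_a)$. These reductions are routine under Assumption~\ref{assumption:regularity-condition-3} but carry the bulk of the bookkeeping.
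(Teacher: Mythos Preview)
Your proposal is correct but takes a different route from the paper for the first claim. The paper uses an \emph{excess-risk} (basic-inequality) argument: it decomposes $R(\widehat\vartheta;e_0)-R(\vartheta_0;e_0)$ into empirical-vs-population and $e$-perturbation pieces, bounds each by $O_p(n^{-1/2})$ via the uniform LLN and Lipschitz continuity of $R$ in $e$, and then invokes quadratic growth (Lemma~\ref{lemma:equivalent}) to convert excess risk into $\|\widehat\vartheta-\vartheta_0\|_2^2=O_p(n^{-1/2})$. You instead work at the level of the \emph{score equation}, linearizing $\nabla_\vartheta\widehat R_n$ around $(\vartheta_0,e_0)$ and inverting the Hessian. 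Your route yields the sharper $\|\widehat\vartheta-\vartheta_0\|_2^2=O_p(n^{-1})$, as you note; the paper's approach trades a square root of rate for only needing the uniform LLN on $R$ itself rather than on its gradient.

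One point to tighten: your Term~(III) is a mean-value expansion of the \emph{empirical} gradient, so the intermediate Hessian is $\nabla^2_{\vartheta\vartheta}\widehat R_n(\tilde\vartheta;\widehat e)$, not the population $H$. The assumed lower bound $\kappa_1\mathbf I\preceq H$ is on the population Hessian, so you need one extra step---consistency of the empirical Hessian uniformly over $\varTheta_0$---to transfer the eigenvalue bound. This follows from $\sup_{\vartheta,e}\|\nabla^2_{\vartheta\vartheta}\ell\|_2^2<\infty$ in Assumption~\ref{assumption:regularity-condition-3} and a standard LLN, but is not literally the uniform LLN stated there (which concerns $R$, not $\nabla^2 R$). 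Similarly, your justification of Term~(I) via ``the uniform LLN assumption'' is slightly off: that assumption is on $\widehat R-R$, not its gradient; what you actually need is a pointwise CLT for $\nabla_\vartheta\ell(V;\vartheta_0,e_0)$, which is immediate from the bounded second moment of $\nabla_\vartheta\ell$. For the second claim, your decomposition into a Lipschitz part and a stochastic part $\widehat m_a-m_{\widehat\vartheta,a}$ matches the paper's Terms (I)--(V) and its further split of~(IV); just be explicit that $\widehat\vartheta$ is data-dependent, so the ``Hoeffding/CLT'' step should route through $\vartheta_0$ first (as the paper does) rather than apply a CLT directly at the random $\widehat\vartheta$.
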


Thm.~\ref{thm:error-ate} shows that in the marginal case, both the dual parameters and the bound estimator achieve a \emph{squared} error rate of $O_p(n^{-1/2})$ (implying a parameter convergence rate of $O_p(n^{-1/4})$) without requiring debiasing. This is because the marginal propensity score $\widehat{e}_a = n_a/n$ converges at rate $O_p(n^{-1/2})$, which is fast enough that first-order bias terms vanish asymptotically. This contrasts with the conditional case (Thm.~\ref{thm:error-analysis}), where debiasing is essential to handle slower convergence rates of nonparametric nuisance estimators.
\section{Experiments}
\label{sec:experiments}

This section empirically validates our framework across both synthetic and real-world data. Our goal is to bound the conditional causal mean $\theta(1,x) \triangleq \mathbb{E}[Y \mid \mathrm{do}(A=1), X=x]$ using our proposed bounds in Def.~\ref{def:cross-fit}. All implementation can be found in the \href{https://github.com/yonghanjung/Information-Theretic-Bounds}{GitHub repository}. 

Across all experiments, we estimate the propensity score via XGBoost \citep{chen2016xgboost} and fit the dual functions $\lambda(a,x)=\exp(h(a,x))$ and $u(\cdot)$ using a neural network trained with two-fold cross-fitting. We consider the $f$-divergences in Cor.~\ref{cor:f-divergence} (KL, Jensen--Shannon, Hellinger, TV, and $\chi^2$), and the order-statistics aggregator (Def.~\ref{def:kth}). In the figures below, the label \texttt{tight\_kth} denotes the aggregated interval constructed by trimming the most extreme upper and lower bounds (i.e., $k_{up}=1, k_{lo}=1$) from the computed $f$-divergences.

\begin{figure}[t]
    \centering
    \begin{minipage}[b]{0.32\textwidth}
        \centering
        \subfloat[]{
            \adjincludegraphics[scale=0.18,trim={{.0\width} {.0\width} {.0\width} {.0\width}},clip]{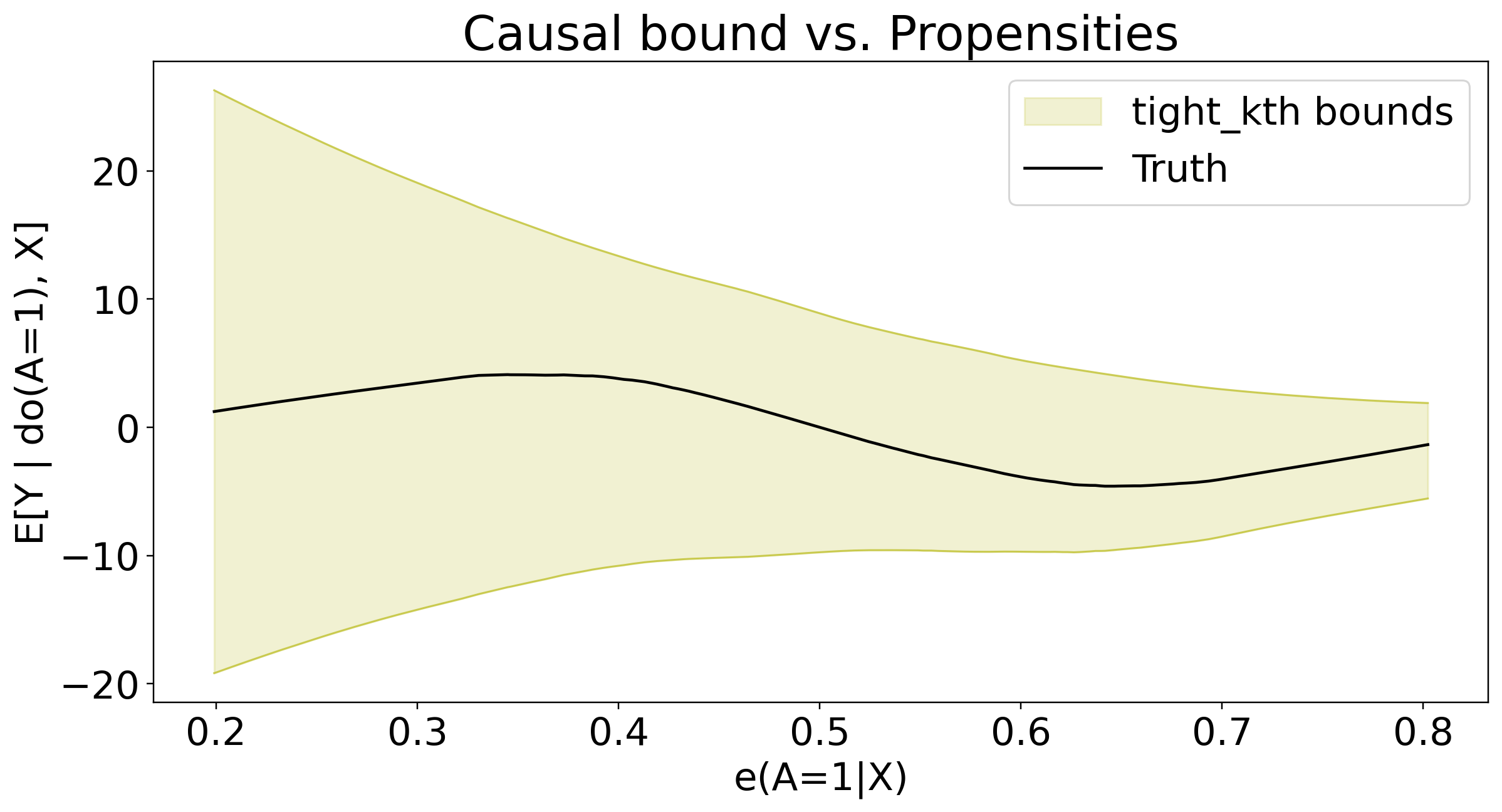}
            \label{fig:synthetic-a}
        }
    \end{minipage}\hfill
    \begin{minipage}[b]{0.32\textwidth}
        \centering
        \subfloat[]{
            \adjincludegraphics[scale=0.27,trim={{.0\width} {.0\width} {.0\width} {.0\width}},clip]{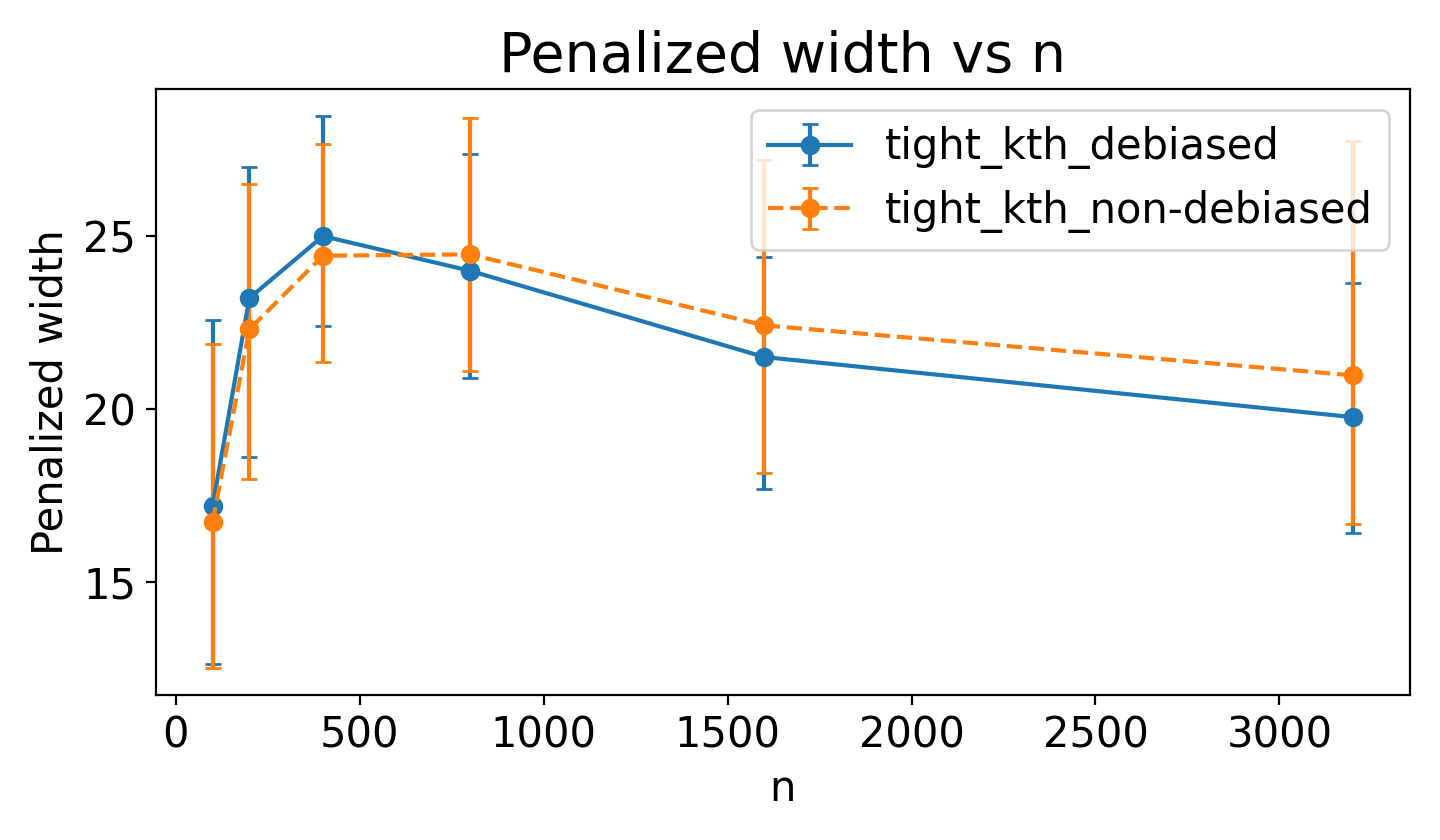}
            \label{fig:penalized-width}
        }
    \end{minipage}\hfill
    \begin{minipage}[b]{0.32\textwidth}
        \centering
        \subfloat[]{
            \adjincludegraphics[scale=0.27,trim={{.0\width} {.0\width} {.0\width} {.0\width}},clip]{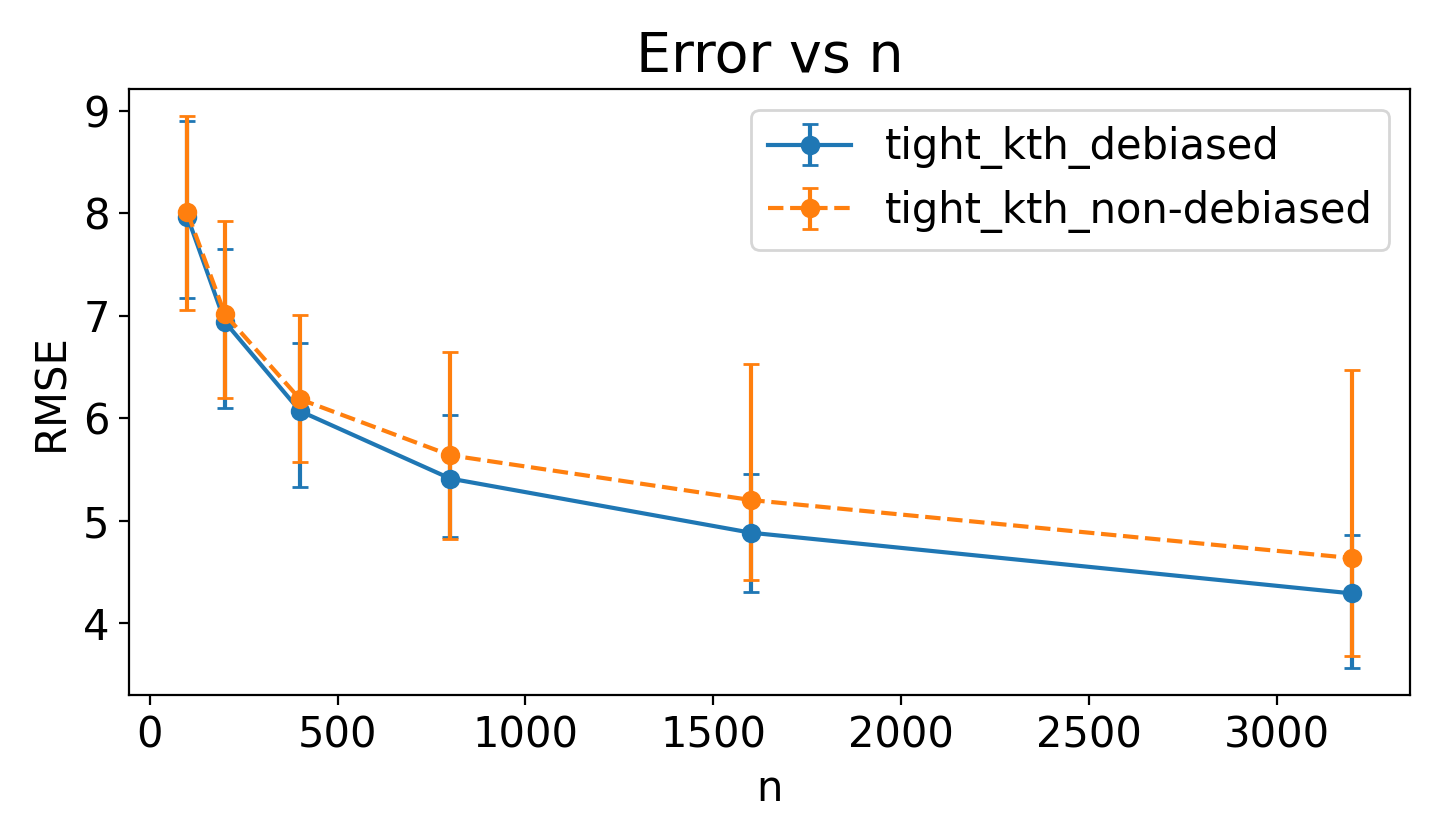}
            \label{fig:rmse}
        }
    \end{minipage}
    \caption{\textbf{(a)} Bounds vs. propensity scores; \textbf{(b)} Penalized width vs. sample size, where the penalized width is $\text{p-width}\triangleq \mathrm{width} \times (1+ a \times \max (0, (1-\alpha)-\mathrm{coverage}))$ with $a=10$ and $\alpha=0.95$; \textbf{(c)} Convergence rate comparison}
    \label{fig:synthetic}
\end{figure}

\paragraph{Synthetic data experiments.} We generate synthetic data from the SCM in Fig.~\ref{fig:dgp} with $X\in\mathbb{R}^5$, binary treatment $A\in\{0,1\}$, and a continuous outcome $Y$ with heavy-tailed noise following a Student's t-distribution with 3 degrees of freedom, which has substantially thicker tails than the standard normal distribution. Fig.~\ref{fig:synthetic-a} demonstrates the validity of our method: the true effect curve for $\theta(1,x)$ lies within the estimated \texttt{tight\_kth} bounds across all propensity score regimes, even under heavy-tailed noise. This plot also shows how interval width shrinks as $e_a(x) \rightarrow 1$, as expected from our theory.

We next examine the debiasing benefit formalized in Thm.~\ref{thm:error-analysis}. Fig.~\ref{fig:penalized-width} compares penalized width, defined as $\text{p-width}\triangleq \mathrm{width} \times (1+ a \times \max (0, (1-\alpha)-\mathrm{coverage}))$ with $a=10$ and $\alpha=0.95$, where $\mathrm{coverage}$ denotes the fraction of evaluation points $\{x_1,\cdots,x_n\}$ satisfying $\theta(1,x_i) \in [\widehat{\theta}_{\mathrm{lo}}(1,x_i),\widehat{\theta}_{\mathrm{up}}(1,x_i)]$, between debiased and non-debiased estimators. As expected, the debiased estimator achieves tighter penalized width as $n$ increases, reflecting improved finite-sample efficiency. Fig.~\ref{fig:rmse} further illustrates robustness to nuisance estimation error: we add convergence noise $\epsilon \sim \mathcal{N}(n^{-1/4}, n^{-1/4})$ to the estimated propensity score and compare convergence rates using an oracle estimator equipped with the true propensity score. The debiased estimator maintains its convergence rate despite slower propensity score estimation, confirming the theoretical guarantee of first-order insensitivity.

\begin{wrapfigure}{r}{.4\textwidth}
    \vspace{-0.3in}
    \centering
    \adjincludegraphics[scale=0.35,trim={{.0\width} {.0\width} {.0\width} {.0\width}},clip]{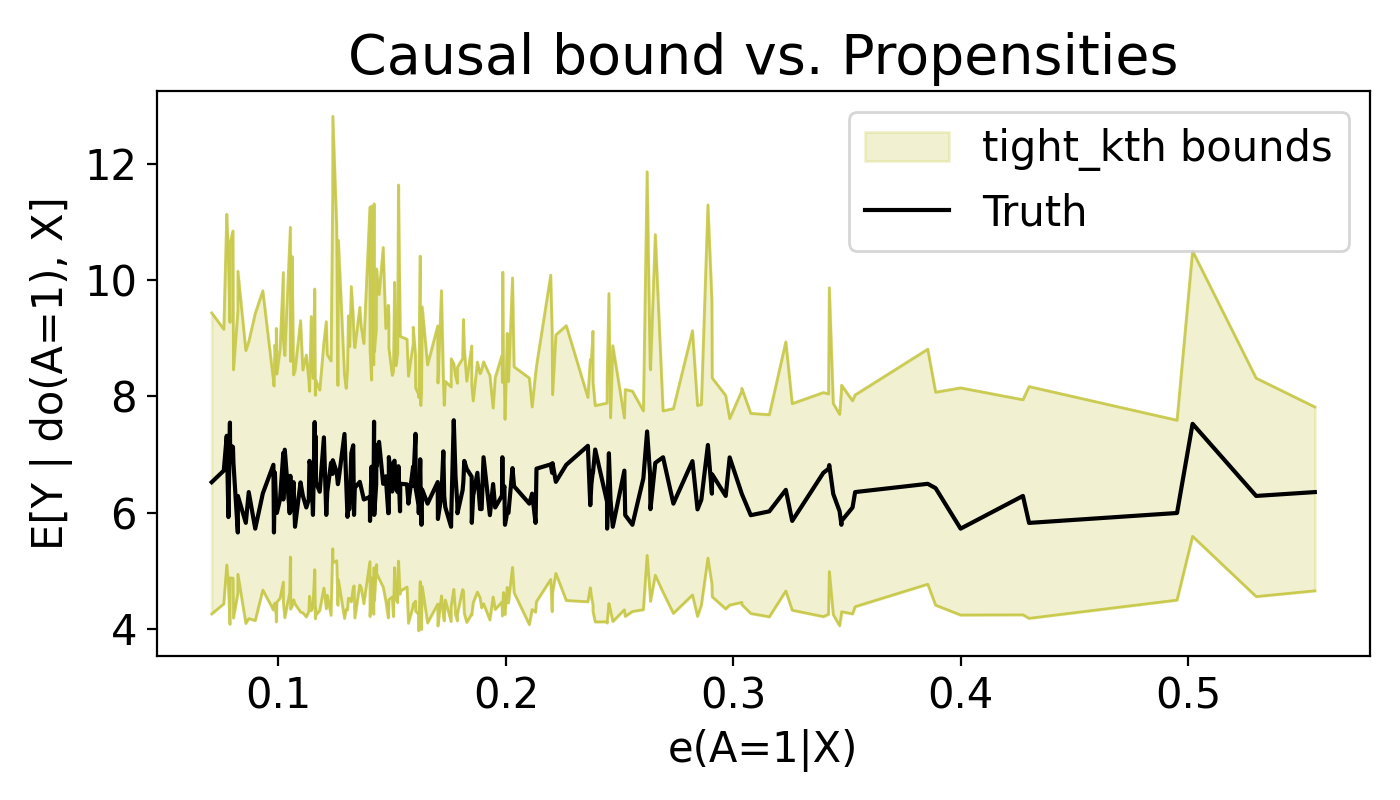}
    \caption{IHDP data analysis}
    \label{fig:ribbon-ihdp}
\end{wrapfigure}

\paragraph{Semi-synthetic IHDP benchmark.} We also validate our method on the well-known IHDP (Infant Health and Development Program) benchmark \citep{hill2011bayesian,louizos2017causal,amlab_cevae_github2020}. This dataset originates from a randomized trial studying the effect of home visits by specialists on future cognitive test scores, with confounders $X \in \mathbb{R}^{25}$ capturing characteristics of the children and their mothers. Following \citet{louizos2017causal}, we de-randomize the treatment assignment to introduce confounding. In our experiment, we observe only five covariates and treat the remaining 20 as hidden confounders. Fig.~\ref{fig:ribbon-ihdp} confirms that our bounds tightly contain the true causal effect $\mathbb{E}[Y \mid \mathrm{do}(A=1),X=x]$ across the full range of estimated propensity scores $\widehat{e}_1(x)$ for $x \in \mathcal{X}$.

\section{Conclusion}
This paper develops an information-theoretic framework for partial identification of causal effects under unmeasured confounding. The key contribution is deriving data-driven bounds on $f$-divergences between observational and interventional distributions using only the propensity score, without requiring auxiliary variables or user-specified sensitivity parameters. These divergence bounds translate into causal effect bounds that simultaneously address four key limitations of existing methods: (1) accommodating unbounded continuous outcomes, (2) avoiding full structural causal model specification, (3) providing heterogeneous effect bounds conditional on covariates, and (4) achieving computational tractability through debiased semiparametric estimation. Our debiased semiparametric estimators achieve $\sqrt{n}$-consistency even when nuisance components converge at slower nonparametric rates, leveraging Neyman-orthogonality to eliminate first-order bias. Experiments on synthetic and semi-synthetic benchmarks confirm valid coverage across propensity score regimes and demonstrate robustness to heavy-tailed outcome distributions. Future work includes extending the framework to continuous treatments and deriving sharper bounds by incorporating additional structural information or auxiliary data.

\endgroup

\newpage
\bibliographystyle{apalike}
\bibliography{reference}


\clearpage
\appendix 




\onecolumn
\vtop{
    \centering
    \vspace{0cm}
    \hspace{0cm}
    \begin{tikzpicture}
        \centering 
        \node (box){
          \begin{minipage}[t!]{0.9\textwidth}
            \centering
            {\Large Supplement of \Paste{title} }
          \end{minipage}
        };
    \end{tikzpicture}
    \vspace{0.5cm}
}
\begingroup
\allowdisplaybreaks
\sloppy
\section{Simulation Details}
\label{app:simulations}

This section provides the technical specifications for the synthetic and semi-synthetic experiments presented in Section~\ref{sec:experiments}.

\subsection{Synthetic Data Generating Process}
\label{app:sim:synthetic}

To evaluate the performance of our proposed information-theoretic bounds in a controlled yet challenging environment, we design a synthetic data generating process (DGP) based on a probit-style structural causal model (SCM). This setup allows us to precisely manipulate the degree of unmeasured confounding and the complexity of the treatment effect, providing a rigorous testbed for our debiased estimation framework.

\paragraph{Feature and Confounder Generation}
We consider a feature space $X \in \mathbb{R}^d$, where the first $d-1$ dimensions represent standard Gaussian noise, $X_j \sim \mathcal{N}(0,1)$ for $j=1,\dots,d-1$. The coordinate $X_0$ is designated as the primary observed covariate influencing both treatment and outcome, with its variance scaled as $X_0 \sim \mathcal{N}\!\left(0,\left(\sqrt{1+\beta^2}/\alpha\right)^2\right)$ to maintain numerical stability across different confounding regimes. A latent confounder $U \sim \mathcal{N}(0,1)$ is introduced to represent unmeasured factors that simultaneously affect the treatment assignment and the outcome, thereby creating the hidden confounding scenario our method aims to address.

\paragraph{Treatment Assignment and Propensity Score}
The binary treatment assignment $A \in \{0, 1\}$ is generated via a probit mechanism. We first define a latent score $s(X_0, U)$ that linearly combines the observed feature and the hidden confounder:
\begin{equation}
    s(X_0, U) = \alpha X_0 + \beta U.
\end{equation}
The parameters $\alpha$ and $\beta$ play crucial roles in our simulation: $\alpha$ controls the strength of the observed signal in the selection process, while $\beta$ determines the magnitude of unmeasured confounding. In our default setting, we fix $(\alpha, \beta) = (2, 1)$. The treatment is then sampled as $A \sim \text{Bernoulli}(e(X_0, U))$, where the propensity score $e(X_0, U)$ is defined as:
\begin{equation}
    e(X_0, U) = c + (1-2c)\,\Phi\!\left(s(X_0,U)\right).
\end{equation}
Here, $\Phi(\cdot)$ denotes the standard normal cumulative distribution function (CDF). We set $c=0.05$ to enforce the overlap condition, ensuring that the propensity values are bounded within $[0.05, 0.95]$ and preventing the total absence of either treatment or control units in localized regions of the feature space.

\paragraph{Outcome Generation and Treatment Effect}
The outcome $Y$ is modeled as a linear combination of the treatment effect, the hidden confounder, and additive noise:
\begin{equation}
    Y = \tau(X_0)A + \gamma U + \epsilon,
\end{equation}
where $\gamma = 1$ scales the influence of $U$ on the outcome. To test the robustness of our bounds against non-linear signals, we define the true conditional average treatment effect (CATE) $\tau(X_0)$ as a sinusoidal function of the marginal propensity score $\bar e(X_0)$:
\begin{align}
    \bar e(X_0) &= c + (1-2c)\,\Phi\!\left(\frac{\alpha X_0}{\sqrt{1+\beta^2}}\right), \\
    \tau(X_0) &= 5\sin\!\left(2\pi\frac{\bar e(X_0)-c}{1-2c}\right).
\end{align}
This formulation ensures that the treatment effect varies nonlinearly across the population. Finally, we vary the noise distribution $\epsilon$ to simulate different data conditions: we use heavy-tailed noise $\epsilon \sim t_3(0,1)$ for the visualization in Fig.~\ref{fig:synthetic-a}, and standard Gaussian noise $\epsilon \sim \mathcal{N}(0,1)$ for the sample-size and error analysis experiments in Fig.~\ref{fig:penalized-width} and Fig.~\ref{fig:rmse}.

\subsection{Neural Network Architecture and Training}
\label{app:sim:nn}

For estimating the dual functions and the nuisance components (outcome models and propensity scores), we use Multi-Layer Perceptrons (MLPs) and XGBoost.

\paragraph{Architecture for Dual Functions} 
The dual functions $h$ are parameterized by an MLP with two hidden layers of 64 units each, using ReLU activations. We apply a clipping operation to the output of the dual network such that $h(X) \in [-20, 20]$ to ensure numerical stability during optimization. For the synthetic ribbon experiment (Fig.~\ref{fig:synthetic-a}), we use a dropout rate of $0.1$; for the synthetic sample-size/error experiments (Fig.~\ref{fig:penalized-width} and Fig.~\ref{fig:rmse}) and the IHDP experiments, we use no dropout.

\paragraph{Optimization and Hyperparameters}
The dual networks are trained using the Adam optimizer with a learning rate of $5 \times 10^{-4}$ and weight decay of $1 \times 10^{-4}$. We employ 2-fold cross-fitting to avoid overfitting and ensure the validity of the debiased estimator. For each fold, we train the dual network for up to 1000 epochs in the synthetic ribbon and IHDP experiments, and 2000 epochs in the synthetic varying-sample-size experiments. Early stopping with a patience of 10 epochs (monitored on a $20\%$ validation split of the training fold) is used to prevent overfitting.

\paragraph{Nuisance Models}
Propensity scores and outcome means are estimated using XGBoost with the following hyperparameters:
\begin{itemize}
    \item \textbf{Number of estimators}: 300 for propensity, 400 for outcome.
    \item \textbf{Maximum depth}: 10.
    \item \textbf{Learning rate}: 0.005.
    \item \textbf{Subsample / Colsample}: 0.8.
\end{itemize}

\subsection{IHDP Benchmark Details}
\label{app:sim:ihdp}

The IHDP benchmark is a semi-synthetic dataset based on a real-world randomized trial from the Infant Health and Development Program. We use the version where selection bias is introduced by removing a non-random subset of the treated group. 

In our experiments, we treat 5 of the 25 covariates as observed and the remaining 20 as hidden confounders to simulate a scenario with unmeasured confounding. The evaluation is performed on a fixed set of units to compare the estimated bounds against the ground truth interventional effects provided by the benchmark. Training is conducted for 1000 epochs for the IHDP-specific experiments.

\section{Proofs}\label{app:proof}
\subsection*{Proof of Thm.~\ref{thm:f-divergence}}

We first declare some useful results: 
\begin{lemma}[\textbf{f-divergence with Conditional Measure}]\label{lemma:f-divergence-conditional}
    \normalfont
    Let $P$ on $(\Omega, \mathcal{F})$ be an arbitrary probability measure. Let $E \in \mathcal{F}$ be a fixed event such that $P(E) = p \in (0,1)$. Let $P_E(\cdot) \triangleq P(\cdot \mid E)$. Then, 
    \begin{align}
        D_{f}(P_E \| P) = pf(\tfrac{1}{p}) + (1-p)f(0)
    \end{align}
\end{lemma}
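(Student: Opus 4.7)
The plan is a direct calculation from the definition. First I would verify that $P_E \ll P$: for any $B \in \mathcal{F}$, $P_E(B) = P(B \cap E)/p$, so $P(B)=0$ forces $P(B \cap E)=0$ and hence $P_E(B)=0$. This legitimizes the Radon--Nikodym derivative of $P_E$ with respect to $P$.

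Next I would identify this derivative explicitly as $\tfrac{dP_E}{dP}(\omega) = \tfrac{1}{p}\,\mathbf{1}_E(\omega)$. The verification is a one-line check: for every $B \in \mathcal{F}$,
\begin{align}
\int_B \frac{\mathbf{1}_E(\omega)}{p}\, dP(\omega) = \frac{P(B \cap E)}{p} = P(B \mid E) = P_E(B),
\end{align}
so by uniqueness of the Radon--Nikodym derivative (up to $P$-null sets), this is the density.

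The final step is to plug this density into Definition~\ref{def:f-div} and split the integral along the partition $\{E, E^c\}$:
\begin{align}
D_f(P_E \| P) = \int_{\Omega} f\!\left(\tfrac{\mathbf{1}_E(\omega)}{p}\right) dP(\omega) = \int_{E} f(1/p)\, dP + \int_{E^c} f(0)\, dP = p\, f(1/p) + (1-p)\, f(0),
\end{align}
using $P(E)=p$ and $P(E^c)=1-p$.

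There is essentially no obstacle here; the only subtlety to flag is that $f(0)$ must be finite for the right-hand side to be well-defined, which is exactly Assumption~\ref{assumption}(3). No additional measure-theoretic care is needed since $E$ is a fixed measurable event and the indicator $\mathbf{1}_E$ is a bona fide Borel-measurable density.
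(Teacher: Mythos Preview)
Your proof is correct and matches the paper's own argument essentially line for line: both verify $P_E \ll P$, exhibit $\tfrac{dP_E}{dP}=\tfrac{1}{p}\mathbf{1}_E$ by the same one-line integral check, and then split the $f$-divergence integral over $E$ and $E^c$. Your explicit remark that finiteness of $f(0)$ is what makes the formula well-defined is a helpful addition not spelled out in the paper's version.
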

\begin{proof}
Define the conditional-on-an-event measure $P_E$ by
\begin{align}
P_E(B) \coloneqq P(B\mid E)=\frac{P(B\cap E)}{P(E)},\qquad \forall B\in\mathcal{F},
\end{align}
where $P(E)=p\in(0,1)$. Then $P_E\ll P$ since $P(B)=0\Rightarrow P(B\cap E)=0\Rightarrow P_E(B)=0$.
Hence, by the Radon--Nikodým theorem, there exists a measurable function
$g=\frac{dP_E}{dP}$ (unique $P$-a.e.) such that
\begin{align}
P_E(B)=\int_B g(\omega)\,P(d\omega),\qquad \forall B\in\mathcal{F}.
\end{align}
A valid version is $g(\omega)=\frac{1}{p}\,\Ind_E(\omega)$ since, for any $B\in\mathcal{F}$,
\begin{align}
\int_B \frac{1}{p}\Ind_E(\omega)\,P(d\omega)
=\frac{1}{p}P(B\cap E)=\frac{P(B\cap E)}{P(E)}=P_E(B).
\end{align}
Therefore,
\begin{align}
D_f(P_E\|P)
&=\int_\Omega f\!\left(\frac{dP_E}{dP}(\omega)\right)P(d\omega) \\ 
&=\int_E f\!\left(\frac{1}{p}\right)P(d\omega)+\int_{E^c} f(0)\,P(d\omega) \\
&=pf\!\left(\frac{1}{p}\right)+(1-p)f(0).
\end{align}
\end{proof}

\begin{lemma}[\textbf{Data Processing Inequality} \citep{csiszar1967fdivergenceDPI}]\label{lemma:DPI}
    \normalfont
    Let $P_X$ and $Q_X$ denote probability measures on $(\mathcal{X}, \mathcal{F}_X)$. Let $P_{Y \mid X}$ be a Markov kernel from $(\mathcal{X}, \mathcal{F}_X)$ to $(\mathcal{Y}, \mathcal{F}_Y)$. Let $P_Y, Q_Y$ be the transformation of $P_X,Q_X$, respectively, when pushed through $P_{Y\mid X}$; i.e., $P_Y(B) = \int_{\mathcal{X}}P_{Y \mid X}(B \mid x)dP_X(x)$, and $Q_Y$ is defined similarly. Then, for any $f$-divergence, we have  
    \begin{align}
        D_f(P_Y \| Q_Y) \leq D_f(P_X \| Q_X).
    \end{align}
\end{lemma}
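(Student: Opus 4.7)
The plan is to lift the problem to the joint space $(\mathcal{X}\times\mathcal{Y}, \mathcal{F}_X\otimes\mathcal{F}_Y)$, observe that the joint $f$-divergence equals the $X$-marginal $f$-divergence because the shared Markov kernel cancels in the Radon--Nikod\'ym derivative, and then apply Jensen's inequality (with respect to a $Q$-conditional expectation given $Y$) to collapse the joint divergence down to the $Y$-marginal divergence. This is the classical three-step argument for the DPI.

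First, without loss of generality I would assume $P_X \ll Q_X$, since otherwise the right-hand side is $+\infty$ (using the convention $f(0)\cdot\infty=+\infty$ when applicable) and the inequality is trivial. I would define joint measures $P_{XY}(A\times B) \triangleq \int_A P_{Y\mid X}(B\mid x)\, P_X(dx)$ and $Q_{XY}(A\times B) \triangleq \int_A P_{Y\mid X}(B\mid x)\, Q_X(dx)$. Because the \emph{same} kernel $P_{Y\mid X}$ is used for both, a direct verification yields $P_{XY}\ll Q_{XY}$ with
\begin{align*}
\frac{dP_{XY}}{dQ_{XY}}(x,y) \;=\; \frac{dP_X}{dQ_X}(x).
\end{align*}
Plugging this into Def.~\ref{def:f-div} and using Fubini's theorem gives $D_f(P_{XY}\|Q_{XY}) = D_f(P_X\|Q_X)$.

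Next, I would establish the disintegration identity
\begin{align*}
\frac{dP_Y}{dQ_Y}(y) \;=\; \mathbb{E}_{Q_{XY}}\!\left[\frac{dP_X}{dQ_X}(X)\,\Big|\,Y = y\right],
\end{align*}
which follows from the defining property of conditional expectation: for any $B\in\mathcal{F}_Y$, $P_Y(B) = \int \mathbf{1}_B(y)\,\frac{dP_X}{dQ_X}(x)\, Q_{XY}(dx,dy) = \int_B \mathbb{E}_{Q}\!\left[\frac{dP_X}{dQ_X}(X)\,\big|\,Y\right](y)\, Q_Y(dy)$. With this identity in hand, I would apply Jensen's inequality pointwise in $y$ to the convex function $f$:
\begin{align*}
f\!\left(\tfrac{dP_Y}{dQ_Y}(y)\right) \;\le\; \mathbb{E}_{Q}\!\left[f\!\left(\tfrac{dP_X}{dQ_X}(X)\right)\,\Big|\,Y=y\right].
\end{align*}
Integrating with respect to $Q_Y$ and invoking the tower property yields $D_f(P_Y\|Q_Y) \le D_f(P_{XY}\|Q_{XY}) = D_f(P_X\|Q_X)$, as desired.

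The main obstacle is the rigorous handling of the disintegration and Radon--Nikod\'ym calculus: one needs the existence of a regular conditional probability $Q_{X\mid Y}$, which is automatic in the standard Borel setting tacitly assumed throughout the paper. A secondary subtlety is treating values of the derivatives in the extended reals $[0,\infty]$ when absolute continuity fails; this is resolved by the Lebesgue decomposition $P_X = P_X^{\mathrm{ac}} + P_X^{\mathrm{s}}$ against $Q_X$ together with the convention $0\cdot f(\infty/0) = \lim_{t\to\infty} f(t)/t \cdot P_X^{\mathrm{s}}$, yielding the DPI without the absolute continuity assumption at the cost of more bookkeeping.
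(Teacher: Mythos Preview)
Your proposal is the standard, correct proof of the data processing inequality for $f$-divergences. Note, however, that the paper does not supply its own proof of this lemma: it is stated as a cited classical result from \citet{csiszar1967fdivergenceDPI} and invoked as a black box in the proof of Theorem~\ref{thm:f-divergence}. There is therefore nothing to compare against; your three-step argument (lift to the joint, equate the joint $f$-divergence with the $X$-marginal one via the shared kernel, then collapse to the $Y$-marginal via Jensen applied to the $Q$-conditional expectation) is exactly the classical derivation and would serve perfectly well as a self-contained appendix proof.
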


For any fixed $X=x$, define the event $E\coloneqq\{A=a\}$ under the measure $P_{U,A\mid X=x}$, so that
$P(E\mid x)=P(A=a\mid X=x)=e_a(x)$.
Let
\begin{align}
P_E(\cdot\mid x)\coloneqq P_{U,A\mid X=x}(\,\cdot\,\mid E)
= P_{U,A\mid X=x,A=a}.
\end{align}
By Lemma~\ref{lemma:f-divergence-conditional},
\begin{align}
D_f\!\left(P_{U,A\mid X=x,A=a}\,\big\|\,P_{U,A\mid X=x}\right)
= e_a(x)f\!\left(\frac{1}{e_a(x)}\right)+(1-e_a(x))f(0)
\equiv B_f(e_a(x)).
\end{align}

Define the (Markov) transition kernel $K_{a,x}$ from $(\mathcal{U}\times\mathcal{A},\mathcal{F}_{U,A})$
to $(\mathcal{Y},\mathcal{F}_Y)$ by, for any $B\in\mathcal{F}_Y$,
\begin{align}
K_{a,x}(B\mid u,a') \coloneqq P(Y\in B\mid U=u, A=a, X=x),
\end{align}
(note $K_{a,x}$ is constant in $a'$).

Pushing $P_{U,A\mid X=x}$ through $K_{a,x}$ yields
\begin{align}
\int K_{a,x}(B\mid u,a')\,P_{U,A\mid X=x}(du\,da')
= \int P(Y\in B\mid u,a,x)\,P_{U,A\mid X=x}(du\,da')
= P(Y\in B\mid \mathrm{do}(A=a),X=x).
\end{align}
Similarly, pushing $P_{U,A\mid X=x,A=a}$ through $K_{a,x}$ yields
\begin{align}
\int K_{a,x}(B\mid u,a')\,P_{U,A\mid X=x,A=a}(du\,da')
= \int P(Y\in B\mid u,a,x)\,P_{U\mid X=x,A=a}(du)
= P(Y\in B\mid A=a,X=x).
\end{align}
By the data processing inequality (Lemma~\ref{lemma:DPI}),
\begin{align}
D_f\!\left(P_{Y\mid A=a,X=x}\,\big\|\,P_{Y\mid \mathrm{do}(A=a),X=x}\right)
\le
D_f\!\left(P_{U,A\mid X=x,A=a}\,\big\|\,P_{U,A\mid X=x}\right)
= B_f(e_a(x)).
\end{align}
\hfill $\blacksquare$.

\subsection*{Proof of Cor.~\ref{cor:f-divergence}}

\paragraph{KL.} With $f(t) = t\log t$ with $f(0) = 0$, we have 
\begin{align}
    B(e_a(x), f) = -e_a(x)\frac{1}{e_a(x)}\log e_a(x) = -\log e_a(x). 
\end{align}
Therefore, 
\begin{align}
    D_{\mathrm{KL}}(P(Y \mid a,x) \| Q(Y \mid a,x)) \leq -\log e_a(x). 
\end{align}

\paragraph{Hellinger.} With $f(t) = \tfrac{1}{2}(\sqrt{t}-1)^2 $ with $f(0) = 1/2$, we have 
\begin{align}
    B(e_a(x), f) &= e_a(x)f\big(\tfrac{1}{e_a(x)}\big) + (1-e_a(x))f(0)  \\ 
                &= \frac{1}{2}e_a(x) \Big(\sqrt{\tfrac{1}{e_a(x)}} - 1\Big)^2 + \frac{1}{2}(1-e_a(x)) \\ 
                &= 1 - \sqrt{e_a(x)}.
\end{align}

To tighten, we use the following lemma: 
\begin{lemma}[\textbf{Hellinger divergence vs. KL divergence}]\label{lemma:hellinger-KL}
    \normalfont
    For any $P,Q$ such that $P \ll Q$, 
    \begin{align}
        D_{\mathrm{H}}(P \| Q) \leq \frac{1}{2}D_{\mathrm{KL}}(P \| Q). 
    \end{align}
\end{lemma}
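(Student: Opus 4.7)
The plan is to reduce the lemma to a single scalar inequality
\[
    -\log t \;\ge\; 2\bigl(1-\sqrt{t}\bigr), \qquad t>0,
\]
and then integrate it after substituting $t = dQ/dP$ against the measure $P$. The attractive feature of this reduction is that once the scalar inequality is in hand, the rest is just manipulation of integrals using the standard identities $D_{\mathrm{KL}}(P\|Q) = \int \log(dP/dQ)\, dP$ and $D_{\mathrm{H}}(P\|Q) = 1 - \int \sqrt{pq}\, d\mu$ for a common dominating measure $\mu$ (with $p = dP/d\mu$, $q = dQ/d\mu$). Since the lemma assumes $P \ll Q$, the Radon--Nikod\'ym derivative $dQ/dP$ is well-defined $P$-almost everywhere, so no measure-theoretic subtleties arise beyond handling $\{p>0\}$.

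First I would verify the scalar inequality by setting $h(t) \triangleq -\log t - 2(1-\sqrt{t})$ and showing it attains its global minimum $h(1)=0$. A direct differentiation gives $h'(t) = -1/t + 1/\sqrt{t} = (1-\sqrt{t})/t$, which is negative on $(0,1)$, zero at $t=1$, and positive on $(1,\infty)$; hence $h\ge 0$ everywhere on $(0,\infty)$, yielding the desired inequality. (Equivalently, one may note that $h$ is convex with unique critical point at $t=1$, since $h''(t) = 1/t^2 - \tfrac{1}{2}t^{-3/2} \ge 0$ follows by rearrangement.)

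Next I would plug in $t = q/p$ wherever $p > 0$ and integrate against $P$:
\begin{align*}
    D_{\mathrm{KL}}(P\|Q)
    = \int_{\{p>0\}} p\,(-\log(q/p))\, d\mu
    \;\ge\; \int_{\{p>0\}} p\cdot 2\bigl(1-\sqrt{q/p}\bigr)\, d\mu
    = 2 - 2\int \sqrt{pq}\, d\mu
    = 2\, D_{\mathrm{H}}(P\|Q),
\end{align*}
where in the final equality I use the identity $D_{\mathrm{H}}(P\|Q) = 1 - \int \sqrt{pq}\, d\mu$ from the preliminaries. Dividing by 2 gives the claim. Edge cases are benign: if $Q$ assigns zero mass to a $P$-positive region then $\log(q/p) = -\infty$ there, making $D_{\mathrm{KL}} = \infty$ and the inequality trivial; and the $\{p=0\}$ set contributes nothing to either side.

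I do not expect any step to be a genuine obstacle here, since the whole argument hinges on a one-variable convexity check. The only mild care point is ensuring that the substitution $t = q/p$ and the integration-against-$P$ manoeuvre are justified on the set $\{p>0\}$ (everywhere else $P$ has no mass), which is handled by the absolute continuity hypothesis $P \ll Q$ together with the convention that $D_{\mathrm{KL}} = +\infty$ whenever the inequality would otherwise be vacuous.
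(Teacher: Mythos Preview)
Your proof is correct and takes a more direct route than the paper's. The paper argues via Jensen's inequality applied to $\log \mathbb{E}_P[\sqrt{q/p}]$ to first obtain the intermediate (and sharper) bound $D_{\mathrm{H}}(P\|Q) \leq 1 - \exp\bigl(-\tfrac{1}{2}D_{\mathrm{KL}}(P\|Q)\bigr)$, and then weakens this using $1-e^{-u}\leq u$. Your approach instead integrates the single pointwise inequality $-\log t \geq 2(1-\sqrt{t})$ against $P$, avoiding the two-step detour. What the paper's route buys is the intermediate exponential bound as a byproduct; what your route buys is simplicity and self-containment.

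Two cosmetic slips that do not affect the argument: your algebraic simplification of $h'(t)$ has a sign flipped---it should read $(\sqrt{t}-1)/t$, not $(1-\sqrt{t})/t$---though your subsequent sign analysis is consistent with the correct expression; and the parenthetical remark that $h$ is convex everywhere is not right, since $h''(t)=t^{-2}-\tfrac{1}{2}t^{-3/2}$ changes sign at $t=4$. Since your main argument relies only on the first-derivative test, this is harmless.
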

\begin{proof}
    We start with 
    \begin{align}
        D_{\mathrm{H}}(P \| Q)  \triangleq \frac{1}{2}\int (\sqrt{p}(x) - \sqrt{q}(x))^2dx = 1-\int \sqrt{p(x)q(x)}dx.
    \end{align}
    Define $\mathtt{BC}(P,Q) \triangleq \int \sqrt{p(x)q(x)}dx$. Then, $D_{\mathrm{H}}(P \| Q)  = 1-\mathtt{BC}(P,Q)$. Define $D_{\mathrm{B}}(P \| Q) \triangleq -\log \mathtt{BC}(P,Q)$, which is known as Bhattacharyya distance. 

    Define $r(X) \triangleq \frac{q(x)}{p(x)}$. Then,
    \begin{align}
        \mathtt{BC}(P,Q) &= \int \sqrt{p(x) q(x)} dx =  \int \sqrt{\tfrac{q(x)}{p(x)}} p(x)dx = \mathbb{E}_{P}\Big[\sqrt{r(X)}\Big]. 
    \end{align}
    By Jensen's inequality, we have 
    \begin{align}
        \log \mathtt{BC}(P,Q)  = \log \mathbb{E}_{P}\Big[\sqrt{r(X)}\Big]  \geq  \mathbb{E}_{P}\Big[\log \sqrt{r(X)}\Big] = \frac{1}{2}\mathbb{E}_{P}[\log r(X)].
    \end{align}
    Also, 
    \begin{align}
        \mathbb{E}_{P}[\log r(X)] = \int p(x) \log \frac{q(x)}{p(x)}dx = -D_{\mathrm{KL}}(P,Q).
    \end{align}
    Combining, 
    \begin{align}
        -\frac{1}{2}D_{\mathrm{KL}}(P \| Q) \leq \log \mathtt{BC}(P,Q) \;\;\Leftrightarrow\;\; 1-\exp\Big(-\frac{1}{2}D_{\mathrm{KL}}(P \| Q)\Big) \geq 1-\mathtt{BC}(P,Q).
    \end{align}
    Finally, 
    \begin{align}
        D_{\mathrm{H}}(P \| Q)  = 1-\mathtt{BC}(P,Q) \leq 1-\exp\Big(-\frac{1}{2}D_{\mathrm{KL}}(P \| Q)\Big) \leq \frac{1}{2}D_{\mathrm{KL}}(P \| Q),
    \end{align}
    where the last inequality holds since $1-e^{-u} \leq u$ for any $u \geq 0$.
\end{proof}
As a result, we can derive 
\begin{align}
    D_{\mathrm{H}}(P(Y \mid a,x) \| Q(Y \mid a,x)) \leq -\frac{1}{2}\log e_a(x). 
\end{align}
Finally, for $e_a(x) \in (0,1)$, the following holds: 
\begin{align}
    1-\sqrt{e_a(x)} \leq -\frac{1}{2}\log e_a(x). 
\end{align}

\paragraph{$\chi^2$-divergence.}
Set $f(t) \triangleq \tfrac{1}{2}(t-1)^2$. Then, $B_f(e_a)  = \tfrac{1-e_a(x)}{2e_a(x)}$.

\paragraph{Total variation.}
First, $B_{f_\mathrm{TV}}(e) = 1-e$.

Second, by Pinsker's inequality and the above inequality, 
\begin{align}
    D_{\mathrm{TV}}(P \| Q) \leq \sqrt{\frac{1}{2}D_{\mathrm{KL}}(P \| Q)} \leq \sqrt{-\frac{1}{2}\log e_a(x)}. 
\end{align}
By Bretagnolle–Huber bound \citep{bretagnolle1979estimation} and the above inequality, 
\begin{align}
    D_{\mathrm{TV}}(P \| Q) \leq \sqrt{ 1-\exp(-D_{\mathrm{KL}}(P \| Q)) } \leq \sqrt{1-e_a(x)}.
\end{align}

Finally, $\min\left( 1-e_a(x), \sqrt{1-e_a(x)}, \sqrt{-\frac{1}{2}\log e_a(x)}\right) = 1-e_a(x)$ for all $e_a(x) \in (0,1)$.  

\paragraph{Jensen-Shannon.}
With $f_{\mathrm{JS}}(t) \triangleq \frac{1}{2}\big(t\log t - (t+1)\log(\frac{t+1}{2})\big)$ and $f_{\mathrm{JS}}(0) = \frac{1}{2}\log 2$, we have:
\begin{align}
    B_{f_{\mathrm{JS}}}(e_a(x)) &= e_a(x)f_{\mathrm{JS}}\Big(\frac{1}{e_a(x)}\Big) + (1-e_a(x))f_{\mathrm{JS}}(0) \\
    &= \frac{e_a(x)}{2} \left[ \frac{1}{e_a(x)}\log\Big(\frac{1}{e_a(x)}\Big) - \Big(\frac{1}{e_a(x)}+1\Big)\log\left(\frac{1+e_a(x)}{2e_a(x)}\right) \right] \nonumber \\ 
    &\quad + \frac{1-e_a(x)}{2}\log 2 \\
    &= \frac{1}{2} \left[ -\log e_a(x) - (1+e_a(x))\log\left(\frac{1+e_a(x)}{2e_a(x)}\right) + (1-e_a(x))\log 2 \right] \\
    &= \frac{1}{2} \left[ -\log e_a(x) - (1+e_a(x))\big[\log(1+e_a(x)) - \log e_a(x) - \log 2\big] + \log 2 - e_a(x)\log 2 \right] \\
    &= \frac{1}{2} \Big[ -\log e_a(x) - (1+e_a(x))\log(1+e_a(x)) + \log e_a(x) \nonumber \\ 
    &\quad \qquad + e_a(x)\log e_a(x) + 2\log 2 + e_a(x)\log 2 - e_a(x)\log 2 \Big] \\
    &= \frac{1}{2} \left[ e_a(x)\log e_a(x) - (1+e_a(x))\log(1+e_a(x)) + 2\log 2 \right] \\
    &= \frac{1}{2} \log \left( \frac{4 e_a(x)^{e_a(x)}}{(1+e_a(x))^{1+e_a(x)}} \right).
\end{align}

\hfill $\blacksquare$

\subsection*{Proof of Cor.~\ref{cor:mean-mmd-ipm}}

By definition, for any class of functions $\mathcal{F}$, the Integral Probability Metric (IPM) satisfies:
\begin{align}
    D_{\mathrm{IPM},\mathcal{F}}(P \| Q) = \sup_{f \in \mathcal{F}} \left| \mathbb{E}_{P}[f(Y)] - \mathbb{E}_{Q}[f(Y)] \right|.
\end{align}
If $f(Y) \in [a, b]$ for all $y \in \mathcal{Y}$, then for any probability measures $P, Q$:
\begin{align}
    \left| \mathbb{E}_{P}[f(Y)] - \mathbb{E}_{Q}[f(Y)] \right| \leq (b-a) D_{\mathrm{TV}}(P, Q).
\end{align}
For $\mathcal{F}_C \triangleq \{ f: \|f\|_{\infty} < C \}$, we have $f(y) \in (-C, C)$, so the range is $2C$. Consequently, 
\begin{align}
    D_{\mathrm{IPM},\mathcal{F}_C}(P_{a,x} \| Q_{a,x}) \leq 2C \cdot D_{\mathrm{TV}}(P_{a,x} \| Q_{a,x}).
\end{align}
From Corollary~\ref{cor:f-divergence}, we have $D_{\mathrm{TV}}(P_{a,x} \| Q_{a,x}) \leq 1 - e_a(x)$. Furthermore, by Pinsker's inequality and the KL bound from Corollary~\ref{cor:f-divergence}:
\begin{align}
    D_{\mathrm{TV}}(P_{a,x} \| Q_{a,x}) \leq \sqrt{\frac{1}{2} D_{\mathrm{KL}}(P_{a,x} \| Q_{a,x})} \leq \sqrt{-\frac{1}{2} \log e_a(x)}.
\end{align}
Combining these yields the result for IPM.

For MMD, let $\mathcal{H}_k$ be an RKHS with kernel $\mathtt{k}$ such that $\mathtt{k}(y, y) \le K$ for all $y$. For any $h \in \mathcal{H}_k$ with $\|h\|_{\mathcal{H}_k} \le 1$, we have $|h(y)| = |\langle h, \mathtt{k}_y \rangle| \le \|h\|_{\mathcal{H}_k} \sqrt{\mathtt{k}(y, y)} \le \sqrt{K}$. Thus, $h(y) \in [-\sqrt{K}, \sqrt{K}]$, and the range is $2\sqrt{K}$. Following similar logic:
\begin{align}
    D_{\mathrm{MMD},\mathtt{k}}(P_{a,x} \| Q_{a,x}) \leq 2\sqrt{K} \cdot D_{\mathrm{TV}}(P_{a,x} \| Q_{a,x}).
\end{align}
Using the TV bounds derived above, we obtain the MMD bound. \hfill $\blacksquare$

\subsection*{Proof of Prop.~\ref{prop:lower-upper}}
We will prove the following statement: For any arbitrary function $f$ over some space $\mathcal{X}$, the following holds: $\inf_{x \in \mathcal{X}} f(x) \;=\; - \sup_{x \in \mathcal{X}} \big(-f(x)\big)$. 
\begin{align}
    \inf_{x \in \mathcal{X}} f(x) \;=\; - \sup_{x \in \mathcal{X}} \big(-f(x)\big).
\end{align}
For any $x \in \mathcal{X}$, 
    \begin{align}
        -f(x) \leq -\inf_{x'} f(x'), \;\; \forall x \in \mathcal{X} \implies \inf_{x \in \mathcal{X}} f(x)  \leq -\sup_{x \in \mathcal{X}}(-f(x)). 
    \end{align}
    Also, by the definition of infimum, for any $\varepsilon > 0$, there exists $x_{\epsilon}$ such that 
    \begin{align}
        f(x_\varepsilon) \le \inf_{x \in \mathcal{X}} f(x) + \varepsilon.
    \end{align}
    Then, 
    \begin{align}
        -f(x_\varepsilon) \ge -\inf_x f(x) - \varepsilon \implies \sup_{x \in \mathcal{X}} (-f(x)) \ge -\inf_{x \in \mathcal{X}} f(x) - \varepsilon.
    \end{align}
    By taking $\epsilon \downarrow 0$, we have $\sup_{x \in \mathcal{X}} (-f(x)) \ge -\inf_{x\in\mathcal{X}} f(x)$. The proof is done by combining these two inequalities.  \hfill $\blacksquare$

\subsection*{Proof of Thm.~\ref{thm:representation-bound}}

Fix $(a,x)$ and write $P_{a,x}$ and $Q_{a,x}$ for the observational and interventional laws
on $(\mathcal{Y},\mathcal{F})$. By Assumption~\ref{assumption} (mutual absolute continuity),
the Radon--Nikodym derivative
\begin{align}
s(y) \;\coloneqq\; \frac{dQ_{a,x}}{dP_{a,x}}(y)
\end{align}
exists and satisfies $s(Y)>0$ $P_{a,x}$-a.s. For any measurable $\phi$ with
$\mathbb{E}_{Q_{a,x}}[|\phi(Y)|]<\infty$,
\begin{align}
\mathbb{E}_{Q_{a,x}}[\phi(Y)]
= \int \phi(y)\,Q_{a,x}(dy)
= \int \phi(y)\,s(y)\,P_{a,x}(dy)
= \mathbb{E}_{P_{a,x}}[s(Y)\phi(Y)].
\end{align}
Moreover, $\mathbb{E}_{P_{a,x}}[s(Y)]=\int dQ_{a,x}=1$. Define $g(s)\coloneqq s f(1/s)$
for $s>0$. Then
\begin{align}
\mathbb{E}_{P_{a,x}}[g(s(Y))]
= \int s(y) f(1/s(y))\,P_{a,x}(dy)
= \int f\!\left(\frac{dP_{a,x}}{dQ_{a,x}}(y)\right)\,Q_{a,x}(dy)
= D_f(P_{a,x}\|Q_{a,x}).
\end{align}
Hence the constraint $D_f(P_{a,x}\|Q_{a,x})\le \eta_f(a,x)$ is equivalent to
$\mathbb{E}_{P_{a,x}}[g(s(Y))]\le \eta_f(a,x)$, and the upper bound admits the primal form
\begin{align}
\theta_{\mathrm{up}}(a,x)
= \sup_{s>0}\Big\{ \mathbb{E}_{P_{a,x}}[s(Y)\phi(Y)] :
\mathbb{E}_{P_{a,x}}[s(Y)]=1,\;
\mathbb{E}_{P_{a,x}}[g(s(Y))]\le \eta_f(a,x)\Big\}.
\end{align}

This is a convex optimization problem (equivalently, minimize $-\mathbb{E}_{P_{a,x}}[s\phi]$)
with an affine equality and a convex inequality constraint. Slater's condition holds because
$s(\cdot)\equiv 1$ is feasible and satisfies
$\mathbb{E}_{P_{a,x}}[g(1)]=f(1)=0<\eta_f(a,x)$ (for $\eta_f(a,x)>0$). Therefore, strong
duality applies and the optimal value equals the dual optimal value.

Introduce Lagrange multipliers $u\in\mathbb{R}$ for $\mathbb{E}_{P_{a,x}}[s]=1$ and
$\lambda\ge 0$ for $\mathbb{E}_{P_{a,x}}[g(s)]\le \eta_f(a,x)$. The Lagrangian is
\begin{align}
\mathcal{L}(s,\lambda,u)
= \mathbb{E}_{P_{a,x}}[s(Y)\phi(Y)]
+u\bigl(1-\mathbb{E}_{P_{a,x}}[s(Y)]\bigr)
+\lambda\bigl(\eta_f(a,x)-\mathbb{E}_{P_{a,x}}[g(s(Y))]\bigr),
\end{align}
i.e.
\begin{align}
\mathcal{L}(s,\lambda,u)
= u + \lambda \eta_f(a,x)
+ \mathbb{E}_{P_{a,x}}\!\bigl[s(Y)(\phi(Y)-u) - \lambda g(s(Y))\bigr].
\end{align}
Thus
\begin{align}
\theta_{\mathrm{up}}(a,x)
= \inf_{\lambda\ge 0,\;u\in\mathbb{R}} \sup_{s>0} \mathcal{L}(s,\lambda,u).
\end{align}

For $\lambda>0$, define $t(Y)\coloneqq (\phi(Y)-u)/\lambda$. Using separability of the
integrand in $s(\cdot)$ and the standard interchange theorem for integral functionals
(equivalently, the conjugate-of-integral identity), we have
\begin{align}
\sup_{s>0}\mathbb{E}_{P_{a,x}}\!\bigl[s(Y)t(Y)-g(s(Y))\bigr]
= \mathbb{E}_{P_{a,x}}\!\Big[\sup_{s>0}\{s\,t(Y)-g(s)\}\Big]
= \mathbb{E}_{P_{a,x}}\!\bigl[g^*(t(Y))\bigr],
\end{align}
where $g^*(t)\coloneqq \sup_{s>0}\{st-g(s)\}$ is the convex conjugate of $g$.
Consequently,
\begin{align}
\sup_{s>0}\mathcal{L}(s,\lambda,u)
= u + \lambda \eta_f(a,x) + \lambda\,\mathbb{E}_{P_{a,x}}
\!\left[g^*\!\left(\frac{\phi(Y)-u}{\lambda}\right)\right].
\end{align}
Minimizing over $(\lambda,u)$ yields the stated dual representation:
\begin{align}
\theta_{\mathrm{up}}(a,x)
= \inf_{\lambda>0,\;u\in\mathbb{R}}
\left\{\lambda \eta_f(a,x) + u + \lambda\,\mathbb{E}_{P_{a,x}}
\!\left[g^*\!\left(\frac{\phi(Y)-u}{\lambda}\right)\right]\right\}.
\end{align}
\hfill $\blacksquare$

\subsection*{Proof of Prop.~\ref{prop:convex-conjugate-unified}}
    Substitute $r = 1/s$. Then, $st-g(s)=st-s f(1/s)=\frac{t-f(r)}{r}$. Taking $\sup_{s > 0}$ is the same as taking $\sup_{r > 0}$. Therefore, $g^{\ast}(t) = \sup_{r > 0} \frac{t-f(r)}{r}$.

    For the optimality condition, define
    \begin{align}
        H_t(r) \coloneqq \frac{t - f(r)}{r}, \qquad r>0.
    \end{align}
    Assume the supremum is attained at some $r^\ast>0$, and set
    \begin{align}
        v \coloneqq g^\ast(t) = H_t(r^\ast) = \frac{t - f(r^\ast)}{r^\ast}.
    \end{align}
    Then for every $r>0$,
    \begin{align}
        \frac{t - f(r)}{r} \le v
        \quad\Longleftrightarrow\quad
        f(r) \ge t - v r.
    \end{align}
    At $r=r^\ast$ we have equality: $f(r^\ast)=t-vr^\ast$. Hence for all $r>0$,
    \begin{align}
        f(r) \ge f(r^\ast) - v(r-r^\ast) = f(r^\ast) + a(r-r^\ast),
    \end{align}
    where $a\coloneqq -v$. By the supporting-hyperplane characterization of the convex subdifferential, this implies $a\in\partial f(r^\ast)$. Finally, $f(r^\ast)=t-vr^\ast$ gives
    \begin{align}
        t = f(r^\ast) - r^\ast a,
        \qquad
        g^\ast(t)=v=-a.
    \end{align}
    If $f$ is differentiable at $r^{\ast}$, then $\partial f(r^{\ast}) = \{f'(r^{\ast})\}$ and the conclusion follows. 

    \hfill $\blacksquare$

\subsection*{Proof of Coro.~\ref{cor:convex-conjugate}}
\paragraph{KL.} $f_{\mathrm{KL}}(r)= r\log r$. Then, 
\begin{align}
    g_{\mathrm{KL}}(s)=s f_{\mathrm{KL}}(1/s)=s\cdot \frac1s\log(1/s)= -\log s.
\end{align}
Now, compute $g^{\ast}_{\mathrm{KL}}(t)=\sup_{s >0}\{st+\log s\}$. Let $\psi(s)=st+\log s$. Then, $\psi'(s)=t+1/s$. If $t < 0$, the stationary point is $s^{\ast} = -1/t > 0$, giving $g^*_{\mathrm{KL}}(t)=\psi(s^*)=(-1/t)t+\log(-1/t)=-1-\log(-t)$. If $t \geq 0$, then $st+\log s\to\infty$ as $s \rightarrow \infty$, so $g^*_{\mathrm{KL}}(t)=+\infty$. 

\paragraph{Hellinger.} $f_H(r)=\tfrac12(\sqrt r-1)^2=\tfrac12(r-2\sqrt r+1)$. Then, 
\begin{align}
    g_H(s)=s f_H(1/s) =\frac{1}{2}\,s\Big(\frac{1}{s}-\frac{2}{\sqrt s}+1\Big)
=\frac{1}{2}(1-2\sqrt{s}+s).
\end{align}
Note $g_H^*(t)=\sup_{s >0}\Big\{st-\tfrac12(1-2\sqrt s+s)\Big\}$. Let $u \triangleq \sqrt{s} > 0$ so $s = u^2$. The objective becomes $F(u)=t u^2-\tfrac12(1-2u+u^2)
=\Big(t-\tfrac12\Big)u^2+u-\tfrac12$. If $t < 1/2$, $F$ is concave quadratic in $u$. Since $F'(u) = 2(t-1/2)u + 1$, $u^{\ast} = \frac{1}{1-2t}$. Plugging in, 
\begin{align}
    g_H^*(t)=F(u^*)
=\Big(t-\tfrac12\Big)\frac{1}{(1-2t)^2}+\frac{1}{1-2t}-\frac12
=\frac{t}{1-2t}.
\end{align}
If $t \geq 1/2$, then $F(u) \rightarrow \infty$ as $u \rightarrow \infty$, so $g^{\ast}_{\mathrm{H}}(t) = +\infty$.

\paragraph{$\chi^2$.} $g_{\chi^2}(s)=s f_{\chi^2}(1/s) =\frac12\,s\Big(\frac1s-1\Big)^2 =\frac{(1-s)^2}{2s}$. Also, $g^*_{\chi^2}(t)=\sup_{s > 0}\Big\{st-\frac{(1-s)^2}{2s}\Big\}$, where $\frac{(1-s)^2}{2s}=\frac12\Big(\frac1s-2+s\Big)$. Then, the objective is 
\begin{align}
    st-\frac12\Big(\frac1s-2+s\Big) =1+s\Big(t-\tfrac12\Big)-\frac{1}{2s}.
\end{align}
Differentiate w.r.t. $s$: 
\begin{align}
    \frac{d}{ds}\Big(1+s(t-\tfrac12)-\frac{1}{2s}\Big) =(t-\tfrac12)+\frac{1}{2s^2}.
\end{align}
Plugging in (using $1/s^*=\sqrt{1-2t}$): 
\begin{align}
    g^*_{\chi^2}(t) =1+s^*(t-\tfrac12)-\frac{1}{2s^*} =1-\frac{\sqrt{1-2t}}{2}-\frac{\sqrt{1-2t}}{2} =1-\sqrt{1-2t}.
\end{align}
At $t = 1/2$, this becomes $1$. If $t > 1/2$, the term $s(t-\tfrac12)$ drives the supremum to $+\infty$ as $s \rightarrow \infty$. 

\paragraph{TV.} $g_{\mathrm{TV}}(s)=s f_{\mathrm{TV}}(1/s) =\frac12\,s\Big|\frac1s-1\Big| =\frac12|1-s|$. Also, $g_{\mathrm{TV}}^*(t)=\sup_{s > 0}\Big\{st-\tfrac12|1-s|\Big\}$. Split this into two regions, where $s \geq 1$ and $0 < s \leq 1$. 

When $s \geq 1$, $|1-s| = s-1$. So, 
\begin{align}
    st-\tfrac12(s-1)=s\Big(t-\tfrac12\Big)+\tfrac12.
\end{align}
If $t > 1/2$: this goes to $+\infty$ as $s \rightarrow \infty$. If $t \leq 1/2$, the maximum over $s \geq 1$ occurs at the smallest $s$; i.e., $s=1$, giving value $t$. 

When $0 < s \leq 1$, $|1-s| = 1-s$, so 
\begin{align}
    st-\tfrac12(1-s)=s\Big(t+\tfrac12\Big)-\tfrac12.
\end{align}
If $t < -1/2$, then its maximum is $-1/2$. If $t \geq -1/2$, then it's maximized at $s=1$, giving value $t$. As a result, 
\begin{align}
    g_{\mathrm{TV}}^*(t)=
\begin{cases}
-\tfrac12, &\text{ if } t\le -\tfrac12,\\
t, &\text{ if } -\tfrac12 < t \leq \tfrac12,\\
+\infty, &\text{ if } t> \tfrac12.
\end{cases}
\end{align}

\hfill $\blacksquare$

\paragraph{Jensen-Shannon.}
$g_{\mathrm{JS}}(s)=\frac12\Big(s\log s-(1+s)\log(1+s)+(1+s)\log 2\Big)$. To compute $g_{\mathrm{JS}}^\ast(t)=\sup_{s > 0}\{st-g_{\mathrm{JS}}(s)\}$, let $F(s) = st - g_{\mathrm{JS}}(s)$. 

We have 
\begin{align}
    g'_{\mathrm{JS}}(s)=\frac12\Big(\log s-\log(1+s)+\log2\Big) =\frac12\log\Big(\frac{2s}{1+s}\Big).
\end{align}
Set $F'(s)=0$, which means $t = g'_{\mathrm{JS}}(s)$; i.e., 
\begin{align}
    2t=\log\Big(\frac{2s}{1+s}\Big)
\quad\Longleftrightarrow\quad
e^{2t}=\frac{2s}{1+s}.
\end{align}
Solving this for $s$ gives
\begin{align}
    e^{2t}(1+s)=2s
\;\Rightarrow\;
s^*=\frac{e^{2t}}{2-e^{2t}}.
\end{align}
This requires $2-e^{2t} > 0$, i.e., $t < \frac{1}{2}\log 2$. If $t \geq \frac{1}{2}\log 2$, the objective grows like $s(t-\frac{1}{2}\log 2)$ for large $s$, hence the supremum is $+\infty$. 

Now evaluate the objective at $s^{\ast}$. Let $z \triangleq e^{2t}$ so that $s^{\ast} = z/(2-z)$ and $1+s^{\ast} = 2/(2-z)$. Then, 
\begin{align}
    \log s^*=\log z-\log(2-z),\qquad \log(1+s^*)=\log2-\log(2-z).
\end{align}
Plug into $g_{\mathrm{JS}}(s)$: 
\begin{align}
    g_{\mathrm{JS}}(s^*)
=\frac12\Big(s^*\log s^*-(1+s^*)\log(1+s^*)+(1+s^*)\log2\Big)
=\frac12\Big(s^*\log z+\log(2-z)\Big).
\end{align}
Since $\log z = 2t$, this is $g_{\mathrm{JS}}(s^*)=t s^*+\frac12\log(2-e^{2t})$. Therefore, 
\begin{align}
    g_{\mathrm{JS}}^*(t)
= s^* t - g_{\mathrm{JS}}(s^*)
= -\frac12\log(2-e^{2t}),
\qquad \text{for } t<\tfrac12\log2,
\end{align}
and $g_{\mathrm{JS}}^*(t)=+\infty$ otherwise. \hfill $\blacksquare$

\subsection*{Proof of Prop.~\ref{prop:justification}}

($(1) \implies (2)$). For each fixed $(a,x)$, define
\begin{align}
    \Delta(a,x) \triangleq \ell(h^{\star}, u^{\star}; a,x) - \operatorname*{ess\,inf}_{h,u \in \mathcal{F}}\ell(h,u; a,x).
\end{align}
Assume, for contradiction, that $(2)$ fails; i.e., $P_{A,X}(B) > 0$ for $B \triangleq \{(a,x): \Delta(a,x) > 0\}$. By the definition of the essential infimum and the decomposability of $\mathcal{F}$, there exists a measurable pair $(\tilde{h},\tilde{u}) \in \mathcal{F}$ such that $\ell(\tilde{h},\tilde{u}; a,x) < \ell(h^{\star}, u^{\star}; a,x)$ on a set of positive measure $B' \subseteq B$. 

Define $h'(a,x) \triangleq \tilde{h}(a,x)\boldsymbol{1}((a,x) \in B') + h^{\star}(a,x)\boldsymbol{1}((a,x) \not\in B') $ and define $u'(a,x)$ similarly. By the decomposability assumption, $(h',u') \in \mathcal{F}$. Then, 
\begin{align}
    \mathcal{R}(h^{\star},u^{\star}) &= \mathbb{E}[\ell(h^{\star},u^{\star},A,X)\boldsymbol{1}((A,X) \not \in B')] + \mathbb{E}[\ell(h^{\star},u^{\star},A,X)\boldsymbol{1}((A,X) \in B')] \\ 
    &> \mathbb{E}[\ell(h^{\star},u^{\star},A,X)\boldsymbol{1}((A,X) \not \in B')] + \mathbb{E}[\ell(\tilde{h},\tilde{u},A,X)\boldsymbol{1}((A,X) \in B')] \\ 
    &= \mathcal{R}(h', u').
\end{align}
This contradicts the optimality of $(h^{\star}, u^{\star})$ in (1). Therefore, $P_{A,X}(B) = 0$; i.e., $(h^{\star}, u^{\star})$ is a minimizer of $\ell(h, u; a,x)$ for $P_{A,X}$-almost every $(a,x)$.

($(2) \implies (1)$). Since $\ell(h^{\star}, u^{\star}; a,x) \le \ell(h,u;a,x)$ for all $(h,u) \in \mathcal{F}$ and for $P_{A,X}$-almost every $(a,x)$, integrating yields $\mathcal{R}(h^{\star}, u^{\star}) \le \mathcal{R}(h,u)$ for all $(h,u) \in \mathcal{F}$. \hfill $\blacksquare$

\subsection*{Proof of Lemma~\ref{lemma:orthogonality}}
Define 
\begin{align}
    e_a &\triangleq \Pr(A=a \mid X) \\ 
    \lambda_{a} &\triangleq \exp(h_{\beta}(a,X)) \\ 
    B_a(e) &\triangleq B_f(e_a(X)) \\ 
    u_a &\triangleq u(a,X) \\ 
    g^{\ast}_a &\triangleq g^{\ast}\left( \frac{\varphi(Y) - u(A,X)}{\lambda_A(X)} \right).
\end{align}
Then, 
\begin{align}
    \ell(V;(\beta,\gamma),e) \triangleq \lambda_A(B_A + g^{\ast}_A) + u_A(X). 
\end{align}
Define 
\begin{align}
    L_1(e) &\triangleq  \lambda_A(B_A + g^{\ast}_A) + u_A(X). 
\end{align}
The correction term is 
\begin{align}
    L_2(e) &\triangleq \sum_{a} e_a \lambda_a B'_a \{\boldsymbol{1}(A=a) - e_a\}. 
\end{align}
Then, $\mathcal{R}^{\mathrm{db}}(e) \triangleq \mathbb{E}[L_1(e) + L_2(e)]$. Then,
\begin{align}
    \frac{\partial}{\partial t}\mathbb{E}[L_1(e_t)]\bigg\vert_{t=0} &= \frac{\partial}{\partial t}\mathbb{E}[L_1(e_A + ts_A)]\bigg\vert_{t=0} \\ 
    &= \mathbb{E}\left[\lambda_A B'(e_A) s_A \right]. 
\end{align}
Also, 
\begin{align}
    L_2(e) &\triangleq \sum_{a} \underbrace{e_a \lambda_a B'_a}_{U_a(e_a)} \underbrace{\{\boldsymbol{1}(A=a) - e_a\}}_{V_a(e_a)}. 
\end{align}
Then, 
\begin{align}
    \frac{\partial}{\partial t}\mathbb{E}[L_2(e_t)]\bigg\vert_{t=0} &= \mathbb{E}\left[\sum_{a \in \mathcal{A}} \left( \frac{\partial U_a}{\partial t}V_a + \frac{\partial V_a}{\partial t}U_a \right)  \right], 
\end{align}
where 
\begin{align}
    \mathbb{E}\left[ \sum_{a \in \mathcal{A}}U'_a(e)V_a(e_a) \right] &= \mathbb{E}_{X}\left[ \sum_{a \in \mathcal{A}}U'_a(e) \mathbb{E}_{A \mid X}[\boldsymbol{1}(A=a) - e_a] \right]  = 0,
\end{align}
and 
\begin{align}
    \mathbb{E}\left[ \sum_{a \in \mathcal{A}}U_a(e)V'_a(e_a) \right] &= - \mathbb{E}\left[ \sum_{a \in \mathcal{A}}U_a(e)s_a \right] = - \mathbb{E}_{X}\left[ \sum_{a \in \mathcal{A}}e_a\lambda_a B'_a s_a \right]. 
\end{align}
Then, 
\begin{align}
    \frac{\partial R^{\mathrm{db}}}{\partial t} &= \mathbb{E}[\lambda_A B'(e_A)s_A] - \mathbb{E}_{X}\left[ \sum_{a \in \mathcal{A}}e_a\lambda_a B'_a s_a \right] \\ 
    &= \mathbb{E}_X\left[\sum_{a \in \mathcal{A}}e_a\lambda_a B'_a s_a\right] - \mathbb{E}_{X}\left[ \sum_{a \in \mathcal{A}}e_a\lambda_a B'_a s_a \right] \\ 
    &= 0. 
\end{align}
\hfill $\blacksquare$

\subsection*{Proof of Theorem~\ref{thm:error-analysis}}

\begin{lemma}[\textbf{Higher-order smoothness} $\Rightarrow$ \textbf{Local quadratic expansion inequality}]\label{lemma:equivalent}
    \normalfont 
    Higher-order smoothness in Assumption~\ref{assumption:regularity-condition} implies the local quadratic expansion inequality: 
    \begin{align}\label{eq:local-quadratic}
        \frac{\kappa_1}{2}\|\vartheta-\vartheta_0\|^2  \;\le\; \mathcal R^{\mathrm{db}}(\vartheta;e_0)-\mathcal R^{\mathrm{db}}(\vartheta_0;e_0) \;\le\; \frac{\kappa_2}{2}\|\vartheta-\vartheta_0\|^2, \text{ for } \vartheta \in \varTheta_0.
    \end{align}
\end{lemma}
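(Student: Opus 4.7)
The plan is to invoke Taylor's theorem with second-order remainder for the map $\vartheta \mapsto \mathcal R^{\mathrm{db}}(\vartheta;e_0)$ on the segment joining $\vartheta_0$ and $\vartheta$, and then apply the two-sided spectral bound on the Hessian termwise. The twice continuous differentiability needed for Taylor's theorem is inherited from the loss-regularity assumption (sup-bounded $\nabla_\vartheta \ell^{\mathrm{db}}$ and $\nabla^2_{\vartheta\vartheta}\ell^{\mathrm{db}}$ together with dominated convergence yield $\nabla^2_{\vartheta\vartheta}\mathcal R^{\mathrm{db}} = \mathbb E[\nabla^2_{\vartheta\vartheta}\ell^{\mathrm{db}}]$).

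First I would handle the gradient term. Since $\vartheta_0$ is a global minimizer on the open neighborhood $\varTheta_0$, the first-order optimality condition gives $\nabla_\vartheta \mathcal R^{\mathrm{db}}(\vartheta_0;e_0) = 0$. Second, I would take $\varTheta_0$ to be convex without loss of generality: if it is not convex as given, shrink to an open ball $B(\vartheta_0,r)\subset \varTheta_0$ on which the Hessian bounds still hold.

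Next, for any $\vartheta\in\varTheta_0$, the integral form of Taylor's theorem gives
\begin{align*}
\mathcal R^{\mathrm{db}}(\vartheta;e_0) - \mathcal R^{\mathrm{db}}(\vartheta_0;e_0)
= \int_0^1 (1-t)\,(\vartheta-\vartheta_0)^{\!\top} H\bigl(\vartheta_0 + t(\vartheta-\vartheta_0);\, e_0\bigr)(\vartheta-\vartheta_0)\,dt.
\end{align*}
Applying the sandwich $\kappa_1 \mathbf I \preceq H(\cdot;e_0)\preceq \kappa_2\mathbf I$ inside the integrand pointwise in $t$, and using $\int_0^1 (1-t)\,dt = 1/2$, yields the claim:
\begin{align*}
\frac{\kappa_1}{2}\|\vartheta-\vartheta_0\|^2 \;\le\; \mathcal R^{\mathrm{db}}(\vartheta;e_0) - \mathcal R^{\mathrm{db}}(\vartheta_0;e_0) \;\le\; \frac{\kappa_2}{2}\|\vartheta-\vartheta_0\|^2.
\end{align*}

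There is no real obstacle here; the only subtleties are (i) confirming that differentiation under the expectation is justified so that the Hessian of the risk equals the expectation of the Hessian of the loss, and (ii) ensuring the segment between $\vartheta_0$ and $\vartheta$ stays in the region where the spectral bound holds, which is handled by passing to a convex sub-neighborhood. Both are standard and do not require delicate arguments.
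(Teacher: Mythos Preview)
Your proposal is correct and matches the paper's proof essentially line for line: both use the integral-remainder form of Taylor's theorem along the segment $\vartheta_t=\vartheta_0+t(\vartheta-\vartheta_0)$, kill the first-order term via optimality of $\vartheta_0$, and sandwich the quadratic form using $\kappa_1\mathbf I\preceq H\preceq\kappa_2\mathbf I$ together with $\int_0^1(1-t)\,dt=1/2$. The paper likewise flags the need for convexity of $\varTheta_0$ so that the path stays in the region where the Hessian bound applies, exactly as you do.
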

\begin{proof}[\textbf{Proof of Lemma~\ref{lemma:equivalent}}]
    Let $r(t) \triangleq \mathcal{R}(\vartheta_t; e_0)$, where $\vartheta_t \triangleq \vartheta_0 + t(\vartheta - \vartheta_0)$ for $t \in [0,1]$. By Taylor's theorem with integral remainder,
    \begin{align}
        r(1) = r(0) + r'(0) + \int_{0}^{1}(1-t) r''(t) dt.
    \end{align}
    Since $\vartheta_0$ is a local minimizer, $r'(0) = (\vartheta- \vartheta_0)^{\intercal}\nabla_{\vartheta}\mathcal{R}(\vartheta_0; e_0) = 0$. The second derivative is
    \begin{align}
        r''(t) = (\vartheta- \vartheta_0)^{\intercal} H(\vartheta_t; e_0) (\vartheta- \vartheta_0).
    \end{align}
    Under the Higher-order smoothness assumption ($\kappa_1 I \preceq H(\vartheta; e_0) \preceq \kappa_2 I$ for $\vartheta \in \varTheta_0$), and assuming convexity of $\varTheta_0$ so that the path lies in $\varTheta_0$, we have
    \begin{align}
       \frac{\kappa_1}{2} \|\vartheta - \vartheta_0 \|^2_2 \leq \int_{0}^{1}(1-t) (\vartheta - \vartheta_0)^{\intercal} H(\vartheta_t; e_0) (\vartheta- \vartheta_0) dt \leq \frac{\kappa_2}{2} \|\vartheta - \vartheta_0 \|^2_2.
    \end{align}
\end{proof}

\subsubsection*{Proof of Eq.~\eqref{eq:thm:error-analysis-1}}

For brevity, we write $R(\vartheta; e') \triangleq R^{\mathrm{db}}(\vartheta; e')$ for any $\vartheta$ and $e'$. Let $\widehat{R}_{k}$ denote the empirical risk of $R$ using the $k$'th fold dataset. 

We decompose the population excess risk using a telescoping sum:
\begin{align}
    R(\widehat{\vartheta}_k; e_0) - R(\vartheta_0; e_0)  &= \underbrace{ R(\widehat{\vartheta}_k; e_0) - R(\widehat{\vartheta}_k; \widehat{e}^{k}) }_{(A)} 
            \;+\; \underbrace{ R(\widehat{\vartheta}_k; \widehat{e}^{k}) - \widehat{R}_k (\widehat{\vartheta}_k; \widehat{e}^{k}) }_{(B)} \\
            &\quad + \underbrace{ \widehat{R}_k (\widehat{\vartheta}_k; \widehat{e}^{k}) - \widehat{R}_k (\vartheta_0; \widehat{e}^{k}) }_{\leq 0} 
            \;+\; \underbrace{ \widehat{R}_k (\vartheta_0; \widehat{e}^{k}) - R(\vartheta_0; \widehat{e}^k) }_{(C)}\\ 
            &\quad + \underbrace{ R(\vartheta_0; \widehat{e}^k) - R(\vartheta_0; e_0) }_{(D)}.
\end{align}
The term $\leq 0$ is due to the optimality of $\widehat{\vartheta}_k$ for the empirical risk objective. We will show that 
\begin{enumerate}
    \item $(B) + (C) = O_p(n^{-1/2})$ by the uniform LLN in Assumption~\ref{assumption:regularity-condition}. 
    \item $(A) + (D) = O_p(r^2_n)$ by the orthogonality and smoothness in Assumption~\ref{assumption:regularity-condition}. 
\end{enumerate}
As a result, 
\begin{align}
    R(\widehat{\vartheta}_k; e_0) - R(\vartheta_0; e_0)  &= O_p(n^{-1/2}) + O_p(r^2_n).
\end{align}

\paragraph{Bounds for $(B) + (C)$.}
Terms $(B) + (C)$ are bounded by Uniform LLN as follows: 
\begin{align}
    (B) + (C)  \le 2\sup_{\vartheta}\big|R(\vartheta;\widehat e^k) - \widehat R_k(\vartheta;\widehat e^k)\big| = O_p(n^{-1/2}).
\end{align}

\paragraph{Bounds for $(A) + (D)$.}
Assume that the risk functional $e \mapsto R(\vartheta; e)$ is twice Fréchet differentiable with bounded second derivatives on the positivity region. Fix a $\vartheta$. Consider a parametric submodel $t \mapsto e^t \triangleq e_0 + t(\widehat{e}^k - e_0)$. Let $\delta e_0 \triangleq \widehat{e}^k - e_0$.

By Taylor's theorem, there exists $e^{\dagger}$ between $e_0$ and $\hat{e}^{k}$ such that:
\begin{align}
    R(\vartheta; \widehat{e}^k) = R(\vartheta; e_0) + \nabla_{e}R(\vartheta; e_0)[\delta e_0] + \frac{1}{2}\nabla_{ee}R(\vartheta; e^{\dagger})[\delta e_0, \delta e_0].
\end{align}
Rearranging for term (D) where $\vartheta = \vartheta_0$:
\begin{align}
    (D) = R(\vartheta_0; \widehat{e}^k) - R(\vartheta_0; e_0) = \nabla_{e}R(\vartheta_0; e_0)[\delta e_0] + \frac{1}{2}\nabla_{ee}R(\vartheta_0; e^{\dagger})[\delta e_0, \delta e_0].
\end{align}
By Lemma~\ref{lemma:orthogonality} (Orthogonality), $\nabla_{e}R(\vartheta_0; e_0)[\delta e_0] = 0$. Using the boundedness of $\nabla_{ee}R$ (Assumption~\ref{assumption:regularity-condition}), we have $(D) = O_P(\| \widehat{e}^k - e_0 \|^2_2) = O_P(r^2_n)$.

For term (A) where $\vartheta = \widehat{\vartheta}_k$:
\begin{align}
    (A) = R(\widehat{\vartheta}_k; e_0) - R(\widehat{\vartheta}_k; \widehat{e}^{k}) = -\nabla_{e}R(\widehat{\vartheta}_k; e_0)[\delta e_0] + O_P(r_n^2).
\end{align}
Crucially, Lemma~\ref{lemma:orthogonality} states that orthogonality holds for \emph{all} $\vartheta$ (not just $\vartheta_0$). Therefore, $\nabla_{e}R(\widehat{\vartheta}_k; e_0)[\delta e_0] = 0$ directly. This implies that the first-order error term vanishes exactly, and we are left only with the second-order remainder:
\begin{align}
    (A)  = O_P(r^2_n).
\end{align}
Combining yields:
\begin{align}
    (A) + (D) = O_P(r^2_n). 
\end{align}

\paragraph{Bound Derivation.}
Combining all terms:
\begin{align}
    R(\widehat{\vartheta}_k; e_0) - R(\vartheta_0; e_0) = O_p(n^{-1/2}) + O_P(r^2_n). 
\end{align}
Assuming consistency (so $\widehat{\vartheta}_k \in \varTheta_0$ w.h.p), we apply Lemma~\ref{lemma:equivalent}:
\begin{align}
    \frac{\kappa_1}{2}\|\widehat{\vartheta}_k - \vartheta_0 \|^2 \leq  R(\widehat{\vartheta}_k; e_0) - R(\vartheta_0; e_0).
\end{align}
Solving the quadratic inequality for $\|\widehat{\vartheta}_k - \vartheta_0 \|$ establishes:
\begin{align}
    \|\widehat{\vartheta}_k - \vartheta_0 \|^2 = O_p(n^{-1/2} + r^2_n).
\end{align}

\subsubsection*{Proof of Eq.~\eqref{eq:thm:error-analysis-2}}

For brevity, we just write 
\begin{align}
    \theta_k \triangleq \widehat{\theta}^{(k)}_{\varphi}, \quad \lambda_k \triangleq \widehat{\lambda}_k, \quad \eta_k \triangleq \widehat{\eta}^k_f, \quad m_k \triangleq \widehat{m}_k, \quad u_k \triangleq \widehat{u}_k. 
\end{align}
All the true parameters are indexed as $0$. For each $(a,x)$, 
\begin{align}
    \theta_k(a,x) - \overline{\theta}_{\varphi}(a,x) &= \underbrace{ (\lambda_k - \lambda_0) (\eta_0 + m_0) }_{(I)} 
                \;+\; \underbrace{ (\lambda_k - \lambda_0) (\eta_k - \eta_0) }_{(II)} \\ 
                &\quad + \underbrace{ \lambda_0(\eta_k - \eta_0) }_{(III)} 
                \;+\; \underbrace{ \lambda_k(m_k - m_0)  }_{(IV)}
                \;+\; \underbrace{ (u_k - u_0) }_{(V)}.
\end{align}

We bound the squared $L_2$ norm of each term. By Lipschitz parametrization (Assumption~\ref{assumption:regularity-condition-2}):
\begin{align}
    \| (V) \|^2_2 = O_P(\| \widehat{\vartheta}_k - \vartheta_0 \|^2_2).
\end{align}

For $(I)$, using the boundedness of nuiances (Assumption~\ref{assumption:regularity-condition-2}, $|\eta_0+m_0| \le C$):
\begin{align}
    \| (I) \|^2_2 \leq C^2 \| \lambda_k - \lambda_0\|^2_2 \leq C' \| \widehat{\vartheta}_k - \vartheta_0 \|^2_2 = O_P(\| \widehat{\vartheta}_k - \vartheta_0 \|^2_2).
\end{align}

For $(III)$, using $|\lambda_0| \le e^{M}$ and Lipschitz continuity of $\eta$ (via $B_f$) with respect to $e$:
\begin{align}
    \|\text{(III)}\|^2_2 \le e^{2M} \|\eta_k - \eta_0\|_2^2 = O_P(r^2_n).
\end{align}

For $(II)$, we use the supremum bound on $\lambda$: $\|\lambda_k - \lambda_0\|_\infty \le 2e^M$. Then:
\begin{align}
    \|\text{(II)}\|_2^2 \le \|\lambda_k-\lambda_0\|_\infty^2 \|\eta_k-\eta_0\|_2^2 \le 4e^{2M} r_n^2 = O_P(r_n^2). 
\end{align}

Finally, consider $(IV)$. Define $m_{\widehat{\varphi}}(a,x) \triangleq \mathbb{E}[Z^k_i \mid A=a,X=x]$. 
Decompose $m_k - m_0 = (m_k - m_{\widehat{\vartheta}}) + (m_{\widehat{\vartheta}} - m_0)$.
By Assumption~\ref{assumption:regularity-condition-2}, $\|m_k - m_{\widehat{\vartheta}}\|_2  = O_P(s_n)$. By Lipschitz, $\|m_{\widehat{\vartheta}} - m_0\|_2 \leq L_m \|\widehat{\vartheta} - \vartheta_0 \|$.
Therefore, 
\begin{align}
    \|(IV) \|^2_2 \leq e^{2M}(\|m_k - m_{\widehat{\vartheta}}\|_2 + \|m_{\widehat{\vartheta}} - m_0\|_2)^2 = O_P(s^2_n) + O_P(\| \widehat{\vartheta}_k - \vartheta_0 \|^2).
\end{align}

Combining all terms shows that 
\begin{align}
    \| \theta_k - \overline{\theta}_{\varphi} \|^2_2 = O_P(n^{-1/2} + r^2_n + s^2_n).
\end{align}

\subsection*{Proof of Lemma~\ref{lemma:valid-coverage}}

Let the sorted elements of $\widehat{\boldsymbol{\theta}}_{\mathrm{up}}$ be denoted by $u_{(1)} \le u_{(2)} \le \dots \le u_{(n_f)}$. By Definition~\ref{def:kth}, $\widehat{\theta}^{k}_{\mathrm{up}} = u_{(k)}$. The inequality $u_{(k)} \ge \theta$ holds if and only if at least $n_f - k + 1$ elements satisfy $u_i \ge \theta$ (since this is equivalent to having at most $k-1$ elements strictly less than $\theta$).

Similarly, let the sorted elements of $\widehat{\boldsymbol{\theta}}_{\mathrm{lo}}$ be $l_{(1)} \le \dots \le l_{(n_f)}$. By definition, $\widehat{\theta}^{k}_{\mathrm{lo}}$ is the $k$-th largest element, which corresponds to $l_{(n_f - k + 1)}$. The inequality $l_{(n_f - k + 1)} \le \theta$ holds if and only if at least $n_f - k + 1$ elements satisfy $l_i \le \theta$ (since this is equivalent to having at most $k-1$ elements strictly greater than $\theta$). \hfill $\blacksquare$

\subsection*{Proof of Thm.~\ref{thm:error-ate}}
Write $\widehat R\equiv \widehat R_n$. Decompose the population excess risk:
\begin{align*}
0 \le R(\widehat\vartheta; e_0) - R(\vartheta_0; e_0)
&= \underbrace{R(\widehat\vartheta;e_0)-R(\widehat\vartheta;\widehat e)}_{(A)}
 +\underbrace{R(\widehat\vartheta;\widehat e)-\widehat R(\widehat\vartheta;\widehat e)}_{(B)} \\
&\quad+\underbrace{\widehat R(\widehat\vartheta;\widehat e)-\widehat R(\vartheta_0;\widehat e)}_{\le 0}
 +\underbrace{\widehat R(\vartheta_0;\widehat e)-R(\vartheta_0;\widehat e)}_{(C)}
 +\underbrace{R(\vartheta_0;\widehat e)-R(\vartheta_0;e_0)}_{(D)} .
\end{align*}

By the uniform LLN in Assumption~\ref{assumption:regularity-condition-3},
\[
(B)+(C) \le 2\sup_{\vartheta\in\Theta}\big|\widehat R(\vartheta;\widehat e)-R(\vartheta;\widehat e)\big|
= O_p(n^{-1/2}).
\]
By Lipschitz continuity of $R(\vartheta;\cdot)$ in $e$ uniformly over $\vartheta\in\Theta$,
\[
|(A)|+|(D)|
\le 2L_R\|\widehat e-e_0\|_1
= O_p(n^{-1/2}),
\]
since $\widehat e_a=n_a/n$ implies $\|\widehat e-e_0\|_1=O_p(n^{-1/2})$ under positivity.

Hence
\[
0 \le R(\widehat\vartheta;e_0)-R(\vartheta_0;e_0)=O_p(n^{-1/2}).
\]
Let $\Theta_0$ be the neighborhood from the quadratic growth condition (Lemma~\ref{lemma:equivalent}).
Since $R(\vartheta;e_0)-R(\vartheta_0;e_0)$ is bounded away from $0$ on $\Theta\setminus\Theta_0$,
the above display implies $\Pr(\widehat\vartheta\in\Theta_0)\to 1$. Therefore, on this event,
\[
\frac{\kappa_1}{2}\|\widehat\vartheta-\vartheta_0\|_2^2
\le R(\widehat\vartheta;e_0)-R(\vartheta_0;e_0)
= O_p(n^{-1/2}),
\]
so $\|\widehat\vartheta-\vartheta_0\|_2^2=O_p(n^{-1/2})$.

Next, write (as in Def.~\ref{def:bound-average}, marginal case)
\[
\widehat\theta_\varphi(a)=\widehat\lambda_a\big(\widehat\eta_a+\widehat m_a\big)+\widehat u_a,
\qquad
\overline\theta_\varphi(a)=\lambda_{0,a}\big(\eta_{0,a}+m_{0,a}\big)+u_{0,a},
\]
where $\lambda_a=\exp(h_a)$, $\eta_a=B_f(e_a)$,
$Z_\vartheta \equiv g^\ast\!\big((\varphi(Y)-u_A)/\lambda_A\big)$,
$m_{\vartheta,a}=\mathbb E[Z_\vartheta\mid A=a]$, and $\widehat m_a=n_a^{-1}\sum_{i:A_i=a} Z_{\widehat\vartheta,i}$.
Decompose, for each $a$,
\begin{align*}
\widehat\theta_\varphi(a)-\overline\theta_\varphi(a)
&= (\widehat\lambda_a-\lambda_{0,a})(\eta_{0,a}+m_{0,a})
+(\widehat\lambda_a-\lambda_{0,a})(\widehat\eta_a-\eta_{0,a})
+\lambda_{0,a}(\widehat\eta_a-\eta_{0,a}) \\
&\quad+\widehat\lambda_a(\widehat m_a-m_{0,a})
+(\widehat u_a-u_{0,a})
\;\;=:(I)+(II)+(III)+(IV)+(V).
\end{align*}

By boundedness of $h$ and smoothness of $\exp(\cdot)$ on bounded sets,
$\|\widehat\lambda-\lambda_0\|_2 \lesssim \|\widehat h-h_0\|_2\le\|\widehat\vartheta-\vartheta_0\|_2$,
and $\|\widehat u-u_0\|_2\le\|\widehat\vartheta-\vartheta_0\|_2$.
Thus $\|(I)\|_2^2+\|(V)\|_2^2 = O_p(\|\widehat\vartheta-\vartheta_0\|_2^2)=O_p(n^{-1/2})$.

Also, $\|\widehat\eta-\eta_0\|_2 \lesssim \|\widehat e-e_0\|_1 = O_p(n^{-1/2})$ (bounded $B_f'$),
so $\|(III)\|_2^2=O_p(n^{-1})$.
Moreover, $\|(II)\|_2 \le \|\widehat\lambda-\lambda_0\|_2\|\widehat\eta-\eta_0\|_\infty
= O_p(n^{-1/4})\cdot O_p(n^{-1/2})=O_p(n^{-3/4})$, hence $\|(II)\|_2^2=O_p(n^{-3/2})$.

For $(IV)$, decompose
\[
\widehat m_a-m_{0,a}
= \underbrace{\frac{1}{n_a}\sum_{i:A_i=a}\big(Z_{\widehat\vartheta,i}-Z_{\vartheta_0,i}\big)}_{(a)}
+\underbrace{\left\{\frac{1}{n_a}\sum_{i:A_i=a}Z_{\vartheta_0,i}-\mathbb E[Z_{\vartheta_0}\mid A=a]\right\}}_{(b)}
+\underbrace{\big(m_{\vartheta_0,a}-m_{\widehat\vartheta,a}\big)}_{(c)}.
\]
By bounded derivative of $g^\ast$ and bounded parameters, $Z_\vartheta$ is Lipschitz in $\vartheta$,
so $(a)=O_p(\|\widehat\vartheta-\vartheta_0\|_2)=O_p(n^{-1/4})$ and $(c)=O_p(\|\widehat\vartheta-\vartheta_0\|_2)=O_p(n^{-1/4})$.
By positivity $n_a\asymp n$ and CLT, $(b)=O_p(n^{-1/2})$.
Hence $\|\widehat m-m_0\|_2=O_p(n^{-1/4})$. Since $\widehat\lambda$ is bounded,
$\|(IV)\|_2^2=O_p(n^{-1/2})$.

Collecting terms, the dominant squared contributions are $O_p(n^{-1/2})$ from $(I)$, $(IV)$, and $(V)$,
so $\|\widehat\theta_\varphi-\overline\theta_\varphi\|_2^2=O_p(n^{-1/2})$.
\hfill$\blacksquare$

\endgroup

\end{document}